\newcommand{\cvar}{\op{CVaR}}
\newcommand{\pit}{\pi_{\textsf{t}}}
\newcommand{\epsQComp}{\eps_{\normalfont\textsf{QComp}}}
\newcommand{\epsWReal}{\eps_{\normalfont\textsf{WReal}}}
\newcommand{\epsWComp}{\eps_{\normalfont\textsf{WComp}}}
\newcommand{\TV}{\normalfont\textsf{TV}}
\newcommand{\Trob}{\Tcal_{\normalfont\textsf{rob}}}
\newcommand{\QR}{\textsf{QR}}
\newcommand{\QRerr}{{\normalfont\text{err}}_{\normalfont\textsf{QR}}}
\newcommand{\RobustFQE}{{\small\normalfont{\textsf{RobustFQE}}}\xspace}
\newcommand{\RobustMIL}{{\small\normalfont{\textsf{RobustMIL}}}\xspace}
\newcommand*\samethanks[1][\value{footnote}]{\footnotemark[#1]}
\let\citet\cite
\let\citep\cite
\title{Efficient and Sharp Off-Policy Evaluation\\in Robust Markov Decision Processes}
\author{%
  Andrew Bennett\thanks{Alphabetical order.} \\
  Morgan Stanley\\
  \texttt{andrew.bennett@morganstanley.com} \\
   \And
   Nathan Kallus\samethanks\\
  Cornell University \\
  \texttt{kallus@cornell.edu}\\
  \AND
  Miruna Oprescu\samethanks\\
  Cornell University \\
  \texttt{amo78@cornell.edu} \\
  \And
  Wen Sun\samethanks\\
  Cornell University\\
  \texttt{ws455@cornell.edu}\\
  \And
  Kaiwen Wang\samethanks\\
  Cornell University\\
  \texttt{kw437@cornell.edu} \\
}
\begin{document}

\maketitle

\begin{abstract}
  We study the evaluation of a policy under best- and worst-case perturbations to a Markov decision process (MDP), using transition observations from the original MDP, whether they are generated under the same or a different policy. This is an important problem when there is the possibility of a shift between historical and future environments, \emph{e.g.} due to unmeasured confounding, distributional shift, or an adversarial environment. We propose a perturbation model that allows changes in the transition kernel densities up to a given multiplicative factor or its reciprocal, extending the classic marginal sensitivity model (MSM) for single time-step decision-making to infinite-horizon RL. We characterize the sharp bounds on policy value under this model -- \emph{i.e.}, the tightest possible bounds based on transition observations from the original MDP -- and we study the estimation of these bounds from such transition observations. We develop an estimator with several important guarantees: it is semiparametrically efficient, and remains so even when certain necessary nuisance functions, such as worst-case Q-functions, are estimated at slow, nonparametric rates. Our estimator is also asymptotically normal, enabling straightforward statistical inference using Wald confidence intervals. Moreover, when certain nuisances are estimated inconsistently, the estimator still provides valid, albeit possibly not sharp, bounds on the policy value. We validate these properties in numerical simulations. The combination of accounting for environment shifts from train to test (robustness), being insensitive to nuisance-function estimation (orthogonality), and addressing the challenge of learning from finite samples (inference) together leads to credible and reliable policy evaluation.
\end{abstract}

\section{Introduction}\label{sec:intro}

Offline policy evaluation (OPE) from historical data is crucial in domains where active, on-policy experimentation is costly, risky, unethical, or otherwise operationally infeasible. Relevant domains range from medicine to finance and recommendation systems. However, whenever historical data is used to study future behavior, there is a concern of non-stationarity -- shift between the environment generating the data (training environment) and the environment in which a policy will be deployed (test environment).
This may occur, \emph{e.g.}, due to general distributional shifts in the environment over time, unobserved confounding in the observed historical data, or adversarial elements of the environment (such as other agents) that may react when the agent is deployed.
While standard OPE in offline reinforcement learning (ORL) accounts for the change between the logging and evaluation policies, it may overlook the fact that the Markov decision process (MDP) too has changed. While this issue is particularly critical in high-stakes domains, it is broadly appealing to understand how value shifts across different environments in any application domain.

Robust MDPs \citep{iyengar2005robust, nilim2005robust} model unknown environments by allowing an adversary to choose from any one environment in a set. Therefore, they offer a natural model for unknown environment shifts by simply considering all environments to which we could possibly shift. A variety of work addresses questions such as planning in a known robust MDP \citep{wiesemann2013robust, mannor2016robust, goyal2023robust} as well as online learning \cite{badrinath2021robust, wang2021online}.
Here we focus on a purely statistical estimation question: given observations of transitions from some unknown transition kernel, we wish to estimate the worst-case (or best-case) value of a given evaluation policy in a robust MDP, defined by a set of MDPs whose transition functions are centered around the observed transition kernel. 

This setting captures the previously studied unconfounded robust OPE problem \citep{wang2024reliable}, where the observed transition kernel corresponds to an MDP, and the observed transitions are the result of applying some logging policy within this MDP. In such cases, the goal is to estimate policy values that are robust to future changes in the MDP dynamics. However, our setting is more general in that it also captures problems where the observed transitions are confounded by some unobserved variables, in which case they do \emph{not} correspond to observations from the transition kernel of an MDP. In this case, the robust MDP and the robust policy value estimates are designed to account for worst-case (or best-case) impact of this confounding bias. In either case, as in ORL, we emphasize that we do \emph{not} know the observational MDP, and can only access it via a sample of transitions.
Furthermore, even in the simple case with no unmeasured confounding, in a notable departure from standard ORL, the problem can be difficult even if the logging and evaluation policies are the same (the usually easy on-policy setting), since the policy can induce very different visitation distributions in the original and perturbed MDPs. 

Such robust offline evaluation from transition data was considered in recent work \citep{panaganti2022robust,bruns2023robust}.
We build on this recent work by focusing the question of statistically \emph{efficient} and \emph{robust} estimation of the \emph{sharp} bounds (\ie, the tightest possible given the data). Previous work focused on evaluation using only the Q-function under the worst-case environment (in some cases under a relaxation of the adversary, leading to loose bounds). Thus, any error in its estimation translates directly to error in evaluation.
In other words, flexible nonparametric modeling of this function can mean slow rates for estimated bounds and a lack semiparametric efficiency. Moreover, without a clear understanding of the noise in the estimates, we cannot add confidence bands to the bounds, leading to bounds that are too tight.

We address these challenges by developing an orthogonalized estimation method that combines several nuisance functions: the worst-case $Q$-function, the state-visitation frequency in the worst-case environment, and a threshold function characterizing the worst-case transition kernel. Our first key result is that, to first order, our estimator behaves as a sample average using the true values of these functions without having to estimate them at all, provided we just estimate them at certain slow nonparametric rates. This ensures we not only have a $\sqrt n$-rate of estimation even when nuisances are estimated more slowly, but also that our estimator is asymptotically normal. This allows for the construction of confidence bands on the bounds, providing assurance that the true bound is captured. We further show that our asymptotic variance is in fact the minimum variance among all regular and asymptotically linear (RAL) estimators, ensuring semiparametric efficiency. Our second key result is that even if we do not estimate some of the nuisance functions correctly, we are still consistent to sharp or valid bounds. That is, even when we are biased due to misestimation of nuisances, our bias (if any) only enlarges our bounds, so they remain valid. We illustrate these guarantees numerically. Collectively, these guarantees lend substantial credibility to the bounds generated by our method.

Our contributions are summarized as follows:
\begin{enumerate}[itemsep=0.3ex,topsep=0ex,partopsep=0ex,parsep=0ex]
    \item We provide novel algorithms and analysis for learning robust $Q$-functions (\cref{sec:robust-q}) and robust visitation density ratios (\cref{sec:robust-w}) under the function approximation setting.
    \item We derive the sharp and efficient estimator for the robust policy value, which is optimal in the local-minimax sense and is the gold standard in semiparametric estimation (\cref{sec:ortho-est}).
    \item We empirically validate the efficiency and sharpness of our approach (\cref{sec:experiments}).
\end{enumerate}

\subsection{Related Works}

\textbf{Unobserved Confounding in Sequential Decision-Making.}
OPE in robust MDPs is related to OPE bounds in confounded MDPs, where the behavior policy and the transition kernel are influenced by unobserved confounders.  The constraint \cref{eq:uncertainty-set} that defines our target robust MDP aligns with the Marginal Sensitivity Model (MSM) \citep{tan2006distributional} employed in sensitivity analysis for causal inference. Yet, unlike the MSM, which limits the ratio of policy densities, our approach directly constrains the ratio of the transition kernels. Our formulation can be viewed as a generalization of the MSM from traditional two-action no-horizon causal effects (where the constrains coincide) to multi-action infinite-horizon discounted MDPs, where the next state is the ``potential outcome''. In that sense, our model essentially serves as an outcome-based sensitivity model \citep{bonvini2022sensitivity}. This distinction is crucial as it enables our model to subsume the policy-based MSM in cases where the policy is confounded. Nonetheless, the reverse does not hold, and the policy-based MSM does not imply a transition kernel-based MSM for $A>2$. This point is further corroborated by \citet{bruns2023robust}, who explore the policy-based MSM within confounded MDPs and obtain \emph{non-sharp} identification bounds when $A > 2$. In contrast, our approach yields \emph{sharp} identification in general, regardless of the number of actions and without placing assumptions on the behavior policy, which may or may not be confounded.

\citet{bruns2021model} also considered an MSM-like model in the transition kernel but their formulation assumes $A=2$.
\citet{kallus2020confounding} operates under the setting of \citet{bruns2023robust} and required tabular states.
We note that all these works including ours considers \emph{i.i.d.} confounders at each step, which translates to a robust MDP with $(s,a)$-rectangularity and ensures that the worst-case problem is still an MDP rather than a POMDP.
The importance of this assumption was verified by \citet{namkoong2020off}, who showed that without it, the non-memoryless confounder can create exponential-in-horizon changes in value.

\textbf{Neyman Orthogonality and Semiparametric Efficient Estimation.} 
We leverage a body of research focusing on learning with nuisances functions (e.g., Q-functions) that we need to estimate from data but are not the primary target (e.g., policy value).  Much of this research \citep[among others]{chernozhukov2018double, foster2023orthogonal,van2011cross,semenova2021debiased,belloni2017program,chernozhukov2018generic} aims to identify Neyman-orthogonal estimators, which are first order orthogonal (insensitive) to nuisance errors. This literature is tightly linked to the semiparametric efficient estimation literature since Neyman-orthogonal scores can arise naturally from efficient influence functions \citep{ichimura2022influence,schick1986asymptotically}.  Going beyond the no-horizon causal inference setting, some explore such estimators in off-policy sequential-decisions contexts \citep{kallus2020double,lewis2021double,chernozhukov2022automatic,kennedy2019nonparametric,laan2003unified}. Notably, \citet{kallus2022efficiently} derive efficient influence functions and orthogonal estimation in standard, non-robust OPE in infinite-horizon RL, which coincides with our unconfounded no-uncertainty case ($\Lambda=1$).

Moving beyond point-identified settings, some works explore orthogonality and efficiency for partial identification and sensitivity analysis.
In the causal inference literature, efficient/orthogonal estimation in the no-horizon setting has been studied extensively under several sensitivity models \citep{dorn2021doubly, bonvini2022sensitivity, chernozhukov2022long, oprescu2023b}. Closest to our work is \citet{dorn2021doubly} who provide an orthogonal estimator and convergence rates under the MSM \citep{tan2006distributional}, which coincides with our setting under $\gamma=1$. In the sequential setting, \citep{namkoong2020off} considers confounding at a single time step under the MSM, but their estimator is not orthogonal when the quantile function is unknown. \citet{bruns2023robust} provide a fitted-Q-iteration learner with an orthogonalized loss function, but not orthogonal/efficient estimates of worst-case policy value.

\section{Preliminaries}\label{sec:prelim}
We consider an MDP with state space $\Scal$, action space $\Acal$, transition kernel $P(s'\mid s,a)$, reward function $r(s,a)\in[0,1]$ and initial state distribution $d_1\in\Delta(\Scal)$. We do not require $\Scal$ or $\Acal$ to be finite. We assume $r$ and $d_1$ are known for simplicity, and it is standard to extend our analysis to when they are unknown.
We are given a dataset $\Dcal$ of $n$ \emph{i.i.d.} tuples $(s_i,a_i,r_i,s_i')$ such that $(s_i,a_i)\sim\nu$, $s_i'\sim P(\cdot\mid s,a)$ and $r_i=r(s_i,a_i)$, where $\nu$ is an arbitrary data-generating distribution. 
For discount factor $\gamma\in[0,1)$, let the $Q$ function be the discounted cumulative rewards under a policy $\pi:\Scal\to\Acal$,
$Q_{\pi,P}(s,a) = \EE_{\pi,P}\bracks{ \sum_{t=0}^\infty \gamma^t r_t(s_t,a_t) \mid s_1=s, a_1=a}$.
Similarly, define the value function as $V_{\pi,P}(s)=Q_{\pi,P}(s,\pi)$, where we use the notation $f(s,\pi):=\EE_{a\sim\pi(s)}[f(s,a)]$ for any function $f:\Scal\times\Acal\to\RR$.

We are interested in estimating the value of a fixed target policy $\pit$ (a.k.a. evaluation policy) in an unobserved MDP with a feasible perturbed transition kernel $U$. We say $U$ is a feasible perturbation of the observed, nominal kernel $P$ if for all $s,a,s'$: we have
\begin{equation}\label{eq:uncertainty-set}
  \textstyle\Lambda^{-1}(s,a)\leq \frac{\diff U(s'\mid s,a)}{\diff P(s'\mid s,a)}\leq \Lambda(s,a)
\end{equation}
where $\Lambda(s,a)\in[1,\infty)$ is a sensitivity parameter chosen by the practitioner. On the extremes, $\Lambda=1$ corresponds to no-confounding (\ie, classic OPE setting) and $\Lambda=\infty$ corresponds to maximal-confounding (\ie, worst or best outcome).
We denote the set of all feasible perturbations of $P$ by $\Ucal(P)$, which is an $s,a$-rectangular set \citep{mannor2016robust}. 
We define the best- and worst-case $Q$ functions of $\pit$ as
\begin{equation}\textstyle
  Q^+(s,a) := \sup_{U\in\Ucal(P)}Q_{\pit,U}(s,a);
  \quad\quad Q^-(s,a) := \inf_{U\in\Ucal(P)}Q_{\pit,U}(s,a).\label{eq:robust-q-primal-def}
\end{equation}
Thus, the goal of this paper is to estimate the best- and worst-case value of $\pit$ at the initial state,
\begin{equation}
  V_{d_1}^\pm := (1-\gamma)\EE_{s_1\sim d_1}[V^\pm(s_1)]. \label{eq:robust-V-def}
\end{equation}
where $V^\pm(s)=\EE_{a\sim\pit(s)}[Q^\pm(s,a)]$ and the $\pm$ symbol signals that an equation should be read twice, once with $\pm=+$ and once with $\pm=-$.
For clarity, we focus the discussion in the main text on estimating the worst-case policy value, $V_{d_1}^-$. We provide a similar analysis for policy values under best-case perturbations ($V_{d_1}^+$) in \pref{app:best-case-results}.

Compared to standard OPE, robust OPE is more challenging since the best- and worst-case transition kernels $U^\pm$ are unobserved as our dataset $\Dcal$ is generated under $P$. For example, standard OPE is easy in the on-policy case \ie, if $\Dcal$ were generated by $\pit$, but our problem is still ``off-data'' and non-trivial.

\textbf{Discounted Visitation Distributions.}
For any transition kernel $U$, define the discounted visitation distribution of $\pit$ under $U$ as: $d^{\pit,\infty}_{d_1,U}(s):= (1-\gamma)\sum_{h=1}^\infty\gamma^{h-1}d^{\pit,h}_{d_1,U}(s)$, where $d^{\pit,h}_{d_1,U}(s)$ is the probability of reaching state $s$ in the Markov chain induced by $U$ and policy $\pit$ starting from $d_1(\cdot)$.
We use $d^{-,\infty}$ as shorthand for $d^{\pit,\infty}_{d_1,U^-}$, where $U^-$ denotes the worst-case kernel in $\Ucal(P)$.

\textbf{Bellman-type Operators.}
For any function $f:\Scal\times\Acal\to\RR$ and transition kernel $U$, recall the Bellman operator is defined as $\Tcal_U f(s,a):=r(s,a)+\gamma\EE_U[ f(s',\pit)\mid s,a ]$. For robust OPE, we define the following robust analog $\Trob^+f(s,a):=r(s,a)+\gamma\sup_{U\in\Ucal(P)}\EE_U[ f(s',\pit)\mid s,a ]$ and $\Trob^-f(s,a):=r(s,a)+\gamma\inf_{U\in\Ucal(P)}\EE_U[ f(s',\pit)\mid s,a ]$. 
Moreover, we define $\Jcal_U f(s,a) := \gamma\EE_U[ f(s',\pit)\mid s,a ]-f(s,a)$. For any linear operator $\Tcal$, also let $\Tcal'$ denote its adjoint: that is, for all $f,g\in L_2(\nu)$, $\langle f,\Tcal g\rangle = \langle\Tcal' f,g\rangle$, where $\langle\cdot,\cdot\rangle$ is the inner product in $L_2(\nu)$.

\begin{wrapfigure}{R}{0.40\textwidth}
  \vspace{-1em}
  \centering
  \includegraphics[width=0.40\textwidth]{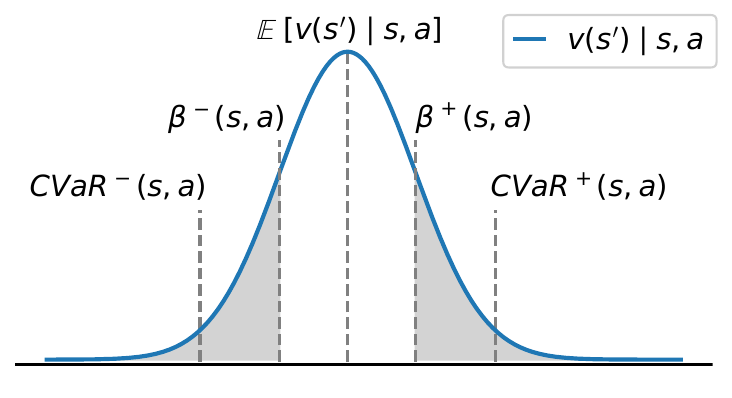}
  \vspace{-1em}
  \caption{Lower and upper CVaRs and quantiles ($\beta$) of $v(s')\mid s,a$ distribution.}
  \label{fig:dist-measures}
  \vspace{-3em}
\end{wrapfigure}

\textbf{Conditional Value-at Risk (CVaR).}
For a random variable $X$, its upper/lower CVaRs at level $\tau\in[0,1]$ is defined as the average outcome of the upper/lower $\tau$-fraction of cases, and are formally defined as follows \citep{rockafellar2002conditional}:
\begin{align*}\textstyle
  \cvar_\tau^+(X) &:= \min_{b\in\RR}\braces{b+\tau^{-1}\EE[(X-b)_+]},\\
  ~\cvar_\tau^-(X) &:= \max_{b\in\RR}\braces{b+\tau^{-1}\EE[(X-b)_-]},
\end{align*}
where $y_+:=\max(0, y)$ and $y_- := \min(0, y)$ for $y\in\RR$.
The optima are attained at the upper/lower $\tau$-th quantile of $X$ which we denote as $\beta_\tau^+(X)/\beta_\tau^-(X)$, \ie, 
\begin{equation*}\textstyle
  \cvar_\tau^+(X) := \beta^+_\tau(X) + \tau^{-1}\EE[ (X-\beta^+_\tau(X))_+],
  ~\cvar_\tau^-(X) := \beta^-_\tau(X) + \tau^{-1}\EE[ (X-\beta^-_\tau(X))_-].
\end{equation*} 
If $X$ has a cumulative distribution function (CDF) which is differentiable at $\beta_\tau^\pm(X)$, its CVaRs simplify to $\cvar_\tau^+(X) = \EE[X\mid X\geq \beta_\tau^+(X)]$ and $\cvar_\tau^-(X) = \EE[X\mid X\leq \beta_\tau^-(X)]$. 
In the paper, $\tau$ will often be set to $(\Lambda+1)^{-1}\in[0,0.5]$.

\textbf{Notations.} 
We use $x\lesssim y$ to mean that $x\leq Cy$ holds for some universal constant $C$. The indicator function $\II[p]$ takes value $1$ if $p$ is true and $0$ otherwise. For a measure $\mu$, we let $\|f\|_\mu:=(\EE_\mu|f(X)|^2)^{1/2}$ denote the $L_2$ norm of $f$, provided it exists. When $\mu$ is clear from context, we also use $\|f\|_p:=(\EE |f(X)|^p)^{1/p}$ to denote the $L_p$ norm of $f$ and $\|f\|_{p,n}:=(\EE_n |f(X)|^p)^{1/p}$ to denote the empirical analog. 
For a data sample of size $n$, we define the empirical mean as $\EE_n[f(X)]=\frac{1}{n}\sum_{i=1}^n f(x_i)$. 
For a nuisance function $f$, we reserve $f^*$ as its true value and $\widehat{f}$ as the learned value from data.
Moreover, we employ $+$ and $-$ to denote functions corresponding to best- and worst-case bounds, respectively. See \pref{app:notation} for a comprehensive notation table.

\subsection{Background: Non-robust OPE}
We provide a quick primer on the double RL (DRL) estimator for classic OPE in non-robust MDPs \citep{kallus2020double}, which combines estimates of the $Q$-function and density ratio $w$ to achieve orthogonality, double robustness and semiparametric efficiency.
This sets the stage for our orthogonal estimator in \pref{sec:ortho-est}, which generalizes DRL to robust MDPs by incorporating the robust $Q$-function and density ratio in the worst-case MDP, as described in \pref{sec:robust-q} and \pref{sec:robust-w} respectively.

The DRL estimator involves two nuisances: (1) $q$, for which the oracle (true value) is the $Q$-function of the target policy $Q^{\pit}$, and (2) $w$, for which the oracle is the density ratio of the target policy's visitation distribution and the data distribution $w^{\pit} = \nicefrac{\diff d^{\pit,\infty}_{d_1,P}}{\diff\nu}$. In this section, let $\eta=(w,q)$ denote the DRL nuisances (outside this section, we use $\eta$ to denote our robust estimator's nuisances) and let $\eta^\star=(w^{\pit},Q^{\pit})$ denote their true values, then the recentered efficient influence function (EIF) of $V^{\pit}_{d_1}$ in non-robust MDPs is given by:
\begin{equation*}
    \textstyle\psi^{\textsf{DRL}}(s,a,s';w,q) = V^{\pit}_{d_1} + w(s,a)\cdot(r(s,a)+\gamma q(s',\pit)-q(s,a)).
\end{equation*} 
The DRL estimator uses cross-fitting to learn nuisances $\wh\eta^{[k]}$ on all data excluding the $k$-th fold $\Dcal^k$, for $k=1,2,\dots,K$ and estimates the OPE value via:
\begin{equation*}
    \textstyle\wh V^{\textsf{DRL}}_{d_1} = \frac1n\sum_{k=1}^K\sum_{(s,a,s')\in\Dcal^k}\psi^{\textsf{DRL}}(s,a,s';\wh\eta^{[k]}).
\end{equation*}
As we will see, this paves the way for the EIF of the robust value (\cref{thm:eif-main}) and our orthogonal estimator (\cref{alg:ortho-ope}).
There are two main guarantees for DRL: double robustness and semiparametric efficiency.
Let $r^w_{n}$ and $r^q_{n}$ be rate functions depending on $n=|\Dcal|$ such that $\|\wh q^{[k]}-Q^{\pit}\|_2\leq r^q_n$ and $\|\wh w^{[k]}-w^{\pit}\|_2\leq r^w_n$. Then, DRL enjoys $|\wh V^{\textsf{DRL}}_{d_1}-V^{\pit}_{d_1}|\leq O_p(n^{-1/2} + r^w_n r^q_n)$, which confers the algorithm double robustness properties. Moreover, if $\Sigma^{\textsf{ope}}$ is the efficiency bound (\ie, minimum achievable asymptotic variance among RAL estimators in nonparametric models for $(s,a,s')$), then $\sqrt{n}(\wh V^{\textsf{DRL}}_{d_1}-V^{\pit}_{d_1})\overset{d}{\to}\Ncal(0,\Sigma^{\textsf{ope}})$. We seek similar  guarantees for our orthogonal robust estimator. 

\section{Robust \texorpdfstring{$Q$}{Q}-Function Estimation with Fitted-\texorpdfstring{$Q$}{Q} Evaluation}\label{sec:robust-q}
In this section, we identify the robust $Q$-function using the robust Bellman equation and then derive convergence rates for iteratively minimizing the robust Bellman error. 

\subsection{Identification of the worst-case \texorpdfstring{$Q$}{Q}-function}
The robust worst-case $Q$-function of $\pit$, denoted as $Q^-$, satisfies the robust Bellman equation 
$Q^-(s,a) = \Trob^- Q^-(s,a), \forall s,a$ since the uncertainty set $\Ucal(P)$ factorizes over $s,a$ \citep{iyengar2005robust}.
While these equations may seem intractable due to the $\inf$ in the definition of $\Trob^-$, \citet{bruns2023robust} showed that $\Trob^-$ has a closed form solution in terms of the CVaR under the \emph{observed} kernel $P$.
\begin{restatable}{lemma}{identificationQ}\label{lem:identification-q}
Set $\tau(s,a)=\prns{\Lambda(s,a)+1}^{-1}$. Then, for any $q:\Scal\times\Acal\to\RR$,
\begin{align*}
    \Trob^- q(s,a) 
    &= r(s,a) + \gamma\Lambda^{-1}(s,a)\EE[ v(s')\mid s,a ]
    +\gamma(1-\Lambda^{-1}(s,a))\cvar^-_{\tau(s,a)}[ v(s')\mid s,a ],
\end{align*}
where $v(s')=\EE_{a'\sim\pit(s')}[q(s',a')]$, and $\EE,\cvar_\tau$ are under the observed kernel $P(\cdot\mid s,a)$.
\end{restatable}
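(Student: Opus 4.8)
The plan is to exploit the $(s,a)$-rectangularity of $\Ucal(P)$, which lets us solve the inner optimization $\inf_{U\in\Ucal(P)}\EE_U[v(s')\mid s,a]$ separately for each fixed pair $(s,a)$. Fixing such a pair and abbreviating $\Lambda=\Lambda(s,a)$ and $\tau=(\Lambda+1)^{-1}$, I would parametrize a feasible perturbation by its likelihood ratio $L(s')=\diff U(s'\mid s,a)/\diff P(s'\mid s,a)$. The constraint \cref{eq:uncertainty-set} becomes the pointwise bound $L\in[\Lambda^{-1},\Lambda]$, while requiring $U(\cdot\mid s,a)$ to be a genuine probability measure adds the normalization $\EE[L\mid s,a]=1$, where the expectation is under the observed $P$. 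Under this reparametrization the objective is simply $\EE_U[v\mid s,a]=\EE[L\,v\mid s,a]$, so the problem reduces to the linear program $\inf_L\EE[Lv]$ subject to $L\in[\Lambda^{-1},\Lambda]$ and $\EE[L]=1$.

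Next I would make the change of variables $L=\Lambda^{-1}+(\Lambda-\Lambda^{-1})w$ with $w\in[0,1]$. A short computation shows the normalization $\EE[L]=1$ is equivalent to $\EE[w]=\tfrac{1-\Lambda^{-1}}{\Lambda-\Lambda^{-1}}=\tau$, and the objective splits linearly as $\EE[Lv]=\Lambda^{-1}\EE[v]+(\Lambda-\Lambda^{-1})\EE[wv]$. Thus the whole task reduces to evaluating the scalar program $\min_{w\in[0,1],\,\EE[w]=\tau}\EE[wv]$.

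The crux is to recognize this minimization as the variational characterization of the \emph{lower} CVaR: since we may place at most unit mass at each point subject to total mass $\tau$, the minimizer concentrates weight on the smallest $\tau$-fraction of the distribution of $v(s')\mid s,a$, yielding $\min_{w\in[0,1],\,\EE[w]=\tau}\EE[wv]=\tau\,\cvar_\tau^-(v(s')\mid s,a)$. This is in essence a Neyman--Pearson argument, and I expect it to be the main technical obstacle, chiefly because the conditional distribution of $v(s')$ may have an atom at the quantile $\beta_\tau^-$; there the optimal $w$ must take an intermediate value so that $\EE[w]=\tau$ holds exactly, which requires slightly careful bookkeeping. An alternative that sidesteps the atom is to verify optimality directly from the $\max_b$ definition of $\cvar_\tau^-$ via a complementary-slackness/weak-duality argument, exhibiting the candidate $w$ together with the matching dual multiplier $b=\beta_\tau^-$.

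Finally, I would substitute back and simplify the constant: using $(\Lambda-\Lambda^{-1})\tau=(\Lambda-\Lambda^{-1})(\Lambda+1)^{-1}=1-\Lambda^{-1}$, the minimum becomes $\Lambda^{-1}\EE[v(s')\mid s,a]+(1-\Lambda^{-1})\cvar_\tau^-(v(s')\mid s,a)$. Multiplying through by $\gamma$ and adding $r(s,a)$ then recovers the claimed closed form for $\Trob^-q(s,a)$, completing the proof.
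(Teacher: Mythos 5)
Your proposal is correct and takes essentially the same route as the paper: your substitution $L=\Lambda^{-1}+(\Lambda-\Lambda^{-1})w$ is exactly the paper's decomposition $U=\Lambda^{-1}P+(1-\Lambda^{-1})G$ with $\diff G/\diff P=\tau^{-1}w$, and both arguments then identify the residual optimization as the primal (variational) form of the lower CVaR over measures with density ratio in $[0,\tau^{-1}]$. The only difference is that the paper invokes this CVaR characterization as a cited result, whereas you rederive it via a Neyman--Pearson/weak-duality argument with multiplier $b=\beta_\tau^-$, which is a valid, self-contained way to supply the same step (including the atom case).
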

\pref{lem:identification-q} implies that $Q^-$ is identified via the following equation of observable distributions: 
\begin{equation*}
    Q^-(s,a) = r(s,a) + \gamma\Lambda^{-1}(s,a)\EE[ Q^-(s',\pit)\mid s,a ]+\gamma(1-\Lambda^{-1}(s,a))\cvar^-_{\tau(s,a)}[ Q^-(s',\pit)\mid s,a ].
\end{equation*}
Under no confounding ($\Lambda(s,a)=1$), this recovers the classic Bellman equation.

\subsection{Estimating the Robust \texorpdfstring{$Q$}{Q}-Function with Robust FQE} \label{sec:q-est}
\begin{algorithm}[!t]
\caption{\RobustFQE: Iterative fitting for estimating $Q^-$ and $\beta^-_\tau$.}\label{alg:robust-fqe}
\begin{algorithmic}[1]
    \STATE \textbf{Input: } Number of iterations $M$, Dataset $\Dcal$ of size $n$, $Q$-function class $\Qcal$.
    \STATE Initialize $\wh v_0^-(s')=0.$
    \FOR{$i=1,2,\dots,M$}
        \STATE Set $\Dcal_i = \Dcal[ni/M:n(i+1)/M]$.
        \STATE On the first half of $\Dcal_i$, estimate the $\tau(s,a)$ lower quantile of $\wh v_{i-1}^-(s'),s'\sim P(\cdot\mid s,a)$. 
        \\ Let $\wh\beta^-_i(s,a)$ denote the learned lower quantiles from the quantile regression oracle $\QR$. \label{line:fqe-quantile-est}
        \STATE Using the second half of $\Dcal_i$, solve the empirical robust Bellman equation by minimizing squared prediction error for the pseudo-outcome: \label{line:fqe-regression-step}
        \begin{align*}\textstyle
            \textstyle\wh q_i^-&\textstyle\gets\argmin_{q\in\Qcal} \frac{1}{|D_i|/2}\sum_{(s,a,s')\in \Dcal_i[|\Dcal_i|/2+1:]}[(y^-(s,a,s')-q(s,a))^2], \,\,\,\text{where }
            \\&\textstyle y^-(s,a,s')= r(s,a)+\gamma\Lambda^{-1}(s,a)\wh v_{i-1}^-(s')
            +\gamma(1-\Lambda^{-1}(s,a))\\&\phantom{===}\times( \wh\beta_i^-(s,a)+\tau^{-1}(s,a)( \EE_{a'\sim\pit(s')}[(\wh q_i^-(s',a')-\wh\beta_i^-(s,a) )_-] ).
        \end{align*}
    \ENDFOR
    \STATE\textbf{Output: } $\wh q_M^-,\wh\beta_{M}^-$.
\end{algorithmic}
\end{algorithm}

In this section, we estimate $Q^-$ via an iterative fitting algorithm based on fitted Q-evaluation (FQE) \citep{munos2008finite}.
Our algorithm \RobustFQE (\pref{alg:robust-fqe}) proceeds for $M$ iterations with two main steps in each iteration $i$. First, in \pref{line:fqe-quantile-est}, we estimate the lower-quantile of $\wh v_{i-1}(s')\mid s,a$. Here, we assume access to an oracle $\QR$ for quantile regression, which is a well-established problem, allowing for the use of various existing algorithms.
Second, in \pref{line:fqe-regression-step}, we solve the tractable robust Bellman equation in \pref{lem:identification-q} with the CVaR term estimated by its orthogonal estimating equation with the learned quantiles \citep{olma2021nonparametric}. By orthogonally estimating $\cvar$, we achieve second-order dependence on the quantile estimation errors from the first step. Next, we minimize the mean squared error using a general function class, $\Qcal \subset \Scal \times \Acal \mapsto [0, (1-\gamma)^{-1}]$. 

To enable convergence guarantees, we make two assumptions. 
First, we assume that the quantile regression oracle has a specific convergence rate, which can be guaranteed under certain smoothness conditions \citep{bhattacharya1990kernel,takeuchi2006nonparametric,meinshausen2006quantile,el2009local,cevid2020distributional,racine2017nonparametric,elie2022random}.
Distributional RL may also be modified to learn quantiles of the next state value and have shown benefits in practice \citep{dabney2018distributional,dabney2018implicit} and in theory \citep{wang2023benefits,wang2024more,wang2024central,ayoub2024switching}.

\begin{assumption}[QR Oracle]\label{asm:qr-oracle}
For any $v:\Scal\mapsto[0,(1-\gamma)^{-1}]$, let the true $\tau(s,a)$-quantile of $v(s'),s'\sim P(s,a)$ be denoted by $\beta^v_\tau(s,a)$. Given a dataset $\Dcal_{\QR}$, we assume $\QR$ outputs estimates $\wh\beta_v$ with bounded $\ell_\infty$ error: for any $\delta$, w.p. $1-\delta$, $\|\wh\beta_q-\beta_\tau^q\|_\infty<\QRerr(|\Dcal_{\QR}|,\delta)$.
\end{assumption}

The second assumption is completeness under the robust Bellman $\Trob^-$. Completeness is a standard assumption in algorithms based on temporal-difference learning and without it, fitted-Q can diverge or converge to suboptimal fixed points \citep{tsitsiklis1996analysis,kolter2011fixed}.
\begin{assumption}[Completeness]\label{asm:robust-bc}
For all $q\in\Qcal$, we have $\Trob^- q\in\Qcal$.
\end{assumption}
We note that the current proofs of \citet{panaganti2022robust,bruns2023robust} require a stronger completeness: $\Tcal_\beta q\in\Qcal$ for all $q\in\Qcal$ and feasible $\beta$. 
We circumvent the need for the stronger ``all-$\beta$'' completeness by bounding model misspecification of least squares regression with second order error in the quantile regression. 

Finally, we express our bounds with the critical radius $\eps_n^\Qcal$, a standard tool for deriving fast rates in statistics; see \pref{app:critical-radius} for a summary.
Also, we denote the standard concentrability coefficient with $C^-_{d_1}:=\nm{\nicefrac{\diff d^{-,\infty}_{\mu}}{\diff d_1}}_\infty$, a standard and necessary quantity for OPE.
\begin{restatable}{theorem}{robustFQE}\label{thm:robust-fqe}
Let $\eps_n^\Qcal$ denote the critical radius of $\Qcal$.
Under \cref{asm:qr-oracle,asm:robust-bc}, \RobustFQE ensures that for any $\delta\in(0,1)$, w.p. $1-\delta$, 
\begin{align*}
    &\textstyle\nm*{ \wh q^-_M - Q^- }_{d_1} \lesssim (1-\gamma)^{-2}\prns*{\sqrt{C^-_{d_1}}\cdot\eps_n^\Qcal+\QRerr^2(n/2M,\delta/2M)},~~\text{and}
    \\&\textstyle\abs{(1-\gamma)\EE_{d_1}[ \wh v_M^-(s_1) ]- V_{d_1}^-} \lesssim \gamma^M + (1-\gamma)^{-1}\prns*{\sqrt{C^-_{d_1}}\cdot\eps_n^\Qcal+\QRerr^2(n/2M,\delta/2M)}.
\end{align*}
\end{restatable}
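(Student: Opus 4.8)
The plan is to establish the two error bounds via a standard iterative-fitting argument for fitted-Q-type algorithms, where each iteration introduces a one-step statistical error that is then propagated through a geometric contraction. The key structural fact is that the robust Bellman operator $\Trob^-$ is a $\gamma$-contraction in the sup-norm (since the CVaR is a monotone, translation-equivariant risk measure with the averaging representation in \cref{lem:identification-q}), so I would first argue that the population version of the fitted-Q recursion converges geometrically to the fixed point $Q^-$ at rate $\gamma^M$. This accounts for the $\gamma^M$ term in the value bound. The remaining work is to control the per-iteration statistical error, i.e., the discrepancy between the empirical regression target $y^-(s,a,s')$ and the true pseudo-outcome $\Trob^- \wh v_{i-1}^-$.

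Next I would decompose the one-step error $\nm{\wh q_i^- - \Trob^- \wh v_{i-1}^-}$ into three pieces: (i) the quantile-regression error from \cref{line:fqe-quantile-est}, (ii) the least-squares/regression error from \cref{line:fqe-regression-step}, and (iii) the completeness/misspecification gap. For (i), the crucial point, emphasized in the text, is that the CVaR is estimated through its orthogonal (Neyman-orthogonal) estimating equation, so that errors in the learned quantile $\wh\beta_i^-$ enter only to \emph{second} order; this is why the final bound features $\QRerr^2$ rather than $\QRerr$. I would make this rigorous by expanding the CVaR estimating functional around the true quantile and invoking the first-order-stationarity of the quantile in the CVaR optimization, leaving a remainder controlled by $\|\wh\beta_i^- - \beta_\tau^-\|_\infty^2 \lesssim \QRerr^2(n/2M,\delta/2M)$ via \cref{asm:qr-oracle}. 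For (ii), I would invoke a localized uniform-convergence (critical-radius) argument: since $\Qcal$ maps into the bounded range $[0,(1-\gamma)^{-1}]$ and the regression is a least-squares fit of a bounded pseudo-outcome, the estimation error of the regression step is controlled by the critical radius $\eps_n^\Qcal$, which is the standard tool cited in the statement.

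For (iii), rather than assuming the stronger ``all-$\beta$'' completeness used by prior work, the paper only assumes $\Trob^- q \in \Qcal$ (\cref{asm:robust-bc}). The subtlety is that the empirical regression target uses the \emph{learned} quantile $\wh\beta_i^-$, not the true one, so the target is not exactly $\Trob^- \wh v_{i-1}^-$ and the true Bellman update need not lie in $\Qcal$ relative to the misspecified target. I would handle this by absorbing the resulting misspecification into the same second-order quantile error bounded in step (i), which is precisely the mechanism the text describes (``bounding model misspecification of least squares regression with second order error in the quantile regression''). This is the step I expect to be the main obstacle: carefully showing that the least-squares fit against a slightly-wrong target, combined with the weaker completeness assumption, still yields a bound whose only quantile dependence is second-order, requires a delicate orthogonality argument rather than a naive triangle inequality (which would give a first-order $\QRerr$ term).

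Finally, I would propagate the accumulated per-iteration error through the contraction. Writing $Q^- = \Trob^- Q^-$ and telescoping, the error $\nm{\wh q_M^- - Q^-}$ satisfies a recursion of the form $\nm{\wh q_i^- - Q^-} \le \gamma \nm{\wh v_{i-1}^- - V^-} + (\text{one-step error})$; summing the geometric series contributes the $(1-\gamma)^{-1}$ (and $(1-\gamma)^{-2}$, after accounting for the reward/range scaling) prefactors. The passage from the sup-norm contraction to the $\nm{\cdot}_{d_1}$-weighted guarantee is where the concentrability coefficient $C^-_{d_1}$ enters: a change-of-measure from the data distribution to the worst-case visitation distribution $d^{-,\infty}$ introduces the $\sqrt{C^-_{d_1}}$ factor on the critical-radius term. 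The value bound then follows from the $Q$-function bound together with the geometric truncation error $\gamma^M$ coming from the finite number of iterations.
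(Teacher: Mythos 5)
Your per-iteration analysis is exactly the paper's proof. The decomposition into (i) a second-order quantile error, (ii) a critical-radius least-squares error, and (iii) a completeness gap measured against the target built from the \emph{learned} quantile is precisely how the paper proceeds: it observes that the Bayes-optimal regression function in iteration $i$ is $\Tcal_{\wh\beta_i}\wh q_{i-1}$ rather than $\Trob^-\wh q_{i-1}$, bounds $\nm{\Tcal_{\wh\beta_i}\wh q_{i-1}-\Tcal_{\beta^\star_i}\wh q_{i-1}}\lesssim\|\wh\beta_i-\beta^\star_i\|_\infty^2$ (formalized via the margin condition, \cref{lem:margin-guarantees} with $\alpha=1$ and $p=\infty$, rather than your explicit stationarity/Taylor expansion, but it is the same mechanism), and then invokes \citet[Theorem 13.13]{wainwright2019high} for misspecified nonparametric least squares, with \cref{asm:robust-bc} supplying the function in $\Qcal$ close to the misspecified target. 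So your identification of the "main obstacle" and its resolution coincide with the paper's.

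The one place your plan would need repair is the error propagation. You propose running the recursion $\nm{\wh q_i^--Q^-}\le\gamma\nm{\wh v_{i-1}^--V^-}+(\text{one-step error})$ as a sup-norm contraction and then changing measure at the end; but the one-step errors you actually obtain from the least-squares step are $L_2(\nu)$ bounds, not $L_\infty$ bounds, so the recursion cannot literally be run in sup-norm. Nor can it be run in an arbitrary weighted $L_2$ norm, since $\Trob^-$ is not a contraction there, and naive per-step changes of measure would compound concentrability factors multiplicatively across the $M$ iterations. The paper's route avoids this: the performance-difference lemma (\cref{lem:performance-difference}) converts the value error into an average Bellman residual under the worst-case visitation distribution $d^{-,\infty}$, and the unrolling lemma (\cref{lem:fqe-unrolling}) bounds that residual by $\frac{4}{1-\gamma}\max_k\nm{\wh q_k-\Tcal^{\pit}_{U^-}\wh q_{k-1}}_{d^{-,\infty}}+\gamma^{M/2}$; the key structural fact is that $d^{-,\infty}$ satisfies the Bellman flow equation under $U^-$, so the discounted transition operator behaves as a $\sqrt{\gamma}$-contraction in $L_2(d^{-,\infty})$ specifically. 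A single change of measure from $\nu$ to $d^{-,\infty}$ then introduces the one factor of $\sqrt{C^-_{d_1}}$ on the critical-radius term. This visitation-weighted machinery is what your telescoping sketch is implicitly relying on, and it is the ingredient you would need to make the propagation step rigorous.
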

For parametric classes (\eg, finite or linear), the critical radius converges at the standard $\wt\Ocal(n^{-1/2})$ rate.
Due to the orthogonal estimation of CVaR, we benefit from a favorable second-order dependence on $\QRerr$ which allows for quantile regression to converge at slower $\wt\Ocal(n^{-1/4})$ rates.
The main disadvantage of this direct approach is that it converges at a slow sub-$\sqrt{n}$ rate if $\eps_n^\Qcal$ converges at a sub-$\sqrt{n}$, \eg, $\eps_n^\Qcal$ converges at a $\wt\Ocal(n^{-1/4})$ rate if $\Qcal$ is nonparametric with metric entropy at most $1/t^2$ \citep{van2000asymptotic}. In \pref{sec:ortho-est}, we present an orthogonal estimator that is both robust to slower rates of $Q$ and achieves semiparametric efficiency. 

\section{Robust \texorpdfstring{$w$}{w}-Function Estimation with Minimax Learning}\label{sec:robust-w}

Before we present our orthogonal estimator, we study another essential nuisance function: the robust visitation density ratio, \ie, the robust $w$-function \citep{kallus2022efficiently,amortila2024harnessing}. In this section, we first identify the worst-case transition kernel $U^-$ in our uncertainty set $\Ucal(P)$. 
Then, we propose a minimax estimator \citep{uehara2021finite} for the robust $w$-function, an important nuisance function for our orthogonal estimator in \pref{sec:ortho-est}.

\paragraph{Identification of $U^-$.}
The robust transition kernel $U^-$ is defined as the feasible perturbed kernel that achieves the $\inf$ in the robust Bellman equation $Q^-(s,a)=\Trob^- Q^-(s,a)$.
Let $F^-(y\mid s,a)=P(V^-(s')\leq y\mid s,a)$ be the next-state pushforward measure of the robust value function $V^-$.
Then, $U^-$ is a convex combination of the nominal kernel $P$ and a reweighting of $P$ by an indicator function. 
\begin{restatable}{lemma}{identifcationU}\label{lem:identification-U-robust}
Suppose $F^-(\beta_{\tau}^-(s,a)\mid s,a)=\tau$, where $\beta^-_\tau(s,a)$ is the lower $\tau$-th quantile of $F^-(\cdot\mid s,a)$. Then,
\begin{equation}
  {U^-(s'\mid s,a)}/{P(s'\mid s,a)} = \Lambda^{-1}(s,a)+(1-\Lambda^{-1})\tau(s,a)^{-1}\II\bracks*{\prns*{V^-(s')-\beta^-_{\tau}(s,a)}\leq 0}. \label{eq:def-U-pm}
\end{equation}
\end{restatable}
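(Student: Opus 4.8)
The plan is to identify $U^-$ by solving, for each fixed state-action pair, the pointwise optimization that defines the worst-case Bellman update, and then to verify that the kernel in \cref{eq:def-U-pm} is exactly its minimizer. Because the uncertainty set $\Ucal(P)$ is $s,a$-rectangular, the infimum defining $\Trob^- Q^-(s,a)$ is attained separately for each $(s,a)$, and since $q=Q^-$ in the fixed-point equation $Q^-=\Trob^-Q^-$ we have $v(s')=V^-(s')$. Writing the Radon--Nikodym derivative $L(s')=\diff U(s'\mid s,a)/\diff P(s'\mid s,a)$, identifying $U^-$ amounts to solving the linear program $\inf_L \EE_P[L(s')V^-(s')\mid s,a]$ subject to the box constraint $\Lambda^{-1}(s,a)\le L\le\Lambda(s,a)$ and the normalization $\EE_P[L\mid s,a]=1$ (the reward and $\gamma$ are constant in the minimization and so can be dropped).

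The key structural observation is that this is a one-dimensional box-constrained program with a single equality constraint, whose solution is \emph{bang-bang}: to minimize the $V^-$-weighted average, one assigns the maximal allowed mass $\Lambda(s,a)$ to the states with the smallest values of $V^-$ and the minimal mass $\Lambda^{-1}(s,a)$ to the rest, the two regions being separated by a threshold on $V^-$. I would make this rigorous by a standard exchange argument: given any feasible $L$ not of this form, there exist a high-$V^-$ state with $L>\Lambda^{-1}$ and a low-$V^-$ state with $L<\Lambda$, and shifting mass from the former to the latter (within the box) strictly decreases the objective while preserving normalization. The separating threshold $\beta$ is then pinned down by normalization: substituting $L=\Lambda\,\II[V^-\le\beta]+\Lambda^{-1}\,\II[V^->\beta]$ into $\EE_P[L\mid s,a]=1$ gives $\Lambda F^-(\beta\mid s,a)+\Lambda^{-1}\prns{1-F^-(\beta\mid s,a)}=1$, which solves to $F^-(\beta\mid s,a)=(\Lambda+1)^{-1}=\tau(s,a)$. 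Hence $\beta=\beta^-_\tau(s,a)$, the lower $\tau$-quantile of $V^-(s')\mid s,a$.

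It then remains to rewrite this optimal likelihood ratio in the stated form. Using $\tau^{-1}(s,a)=\Lambda(s,a)+1$ one checks the algebraic identity $\Lambda^{-1}+(1-\Lambda^{-1})\tau^{-1}=\Lambda$, so $L=\Lambda^{-1}+(1-\Lambda^{-1})\tau^{-1}\II[V^-(s')-\beta^-_\tau(s,a)\le 0]$ equals $\Lambda$ exactly on $\{V^-\le\beta^-_\tau\}$ and $\Lambda^{-1}$ elsewhere, recovering \cref{eq:def-U-pm}. As a consistency check, this same weighting realizes the lower CVaR appearing in \pref{lem:identification-q}: since $\tau^{-1}\EE_P[V^-\,\II[V^-\le\beta^-_\tau]\mid s,a]=\cvar^-_{\tau}[V^-(s')\mid s,a]$ when $F^-(\beta^-_\tau\mid s,a)=\tau$, the proposed $U^-$ attains the closed-form infimum already established there, confirming optimality by an alternative route.

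The main obstacle is the possibility of an atom of $F^-(\cdot\mid s,a)$ at $\beta^-_\tau(s,a)$. In general the minimizing $L$ may need to interpolate fractionally on the level set $\{V^-=\beta^-_\tau\}$ to meet normalization exactly, in which case the clean indicator form of \cref{eq:def-U-pm} would hold only off this set. The hypothesis $F^-(\beta^-_\tau(s,a)\mid s,a)=\tau$ rules this out: it guarantees that normalization is met with pure bang-bang weights and that the minimizer is almost surely unique, so the identification is exact.
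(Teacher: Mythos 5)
Your proof is correct, but it takes a genuinely different route from the paper's. The paper reuses the decomposition established in the proof of \pref{lem:identification-q}: it writes $U(\cdot\mid s,a) = \Lambda^{-1}(s,a)P(\cdot\mid s,a) + (1-\Lambda^{-1}(s,a))G(\cdot\mid s,a)$, where $G$ ranges over distributions with $\diff G/\diff P \leq \tau^{-1}(s,a)$, so that identifying $U^-$ reduces to identifying the minimizer $G^-$ of $\EE_G[V^-(s')]$; the quantile hypothesis then lets it write the lower CVaR as the conditional expectation over the lower $\tau$-fraction of outcomes, which immediately yields $G^-(\cdot\mid s,a) = \tau^{-1}P(\cdot\mid s,a)\,\II[V^-(s')\leq\beta^-_\tau(s,a)]$ and hence the claimed form. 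You instead solve the box-constrained linear program over the likelihood ratio $L=\diff U/\diff P$ from scratch: a bang-bang/exchange argument shows the minimizer is a threshold rule taking only the values $\Lambda$ and $\Lambda^{-1}$, and -- the nicest feature of your route -- the normalization $\EE_P[L\mid s,a]=1$ is what forces the threshold to sit at CDF level $(\Lambda+1)^{-1}=\tau$, so the quantile level is derived from first principles rather than imported from the CVaR machinery. Your handling of the atom issue plays exactly the role the hypothesis $F^-(\beta^-_\tau(s,a)\mid s,a)=\tau$ plays in the paper. What the paper's route buys is brevity and generality (its appendix version is stated for an arbitrary function $v$, which is reused elsewhere, and optimality comes for free from the already-established CVaR identification); what yours buys is self-containedness and an explanation of why $\tau$ takes the value it does. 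The one soft spot is that your exchange argument is sketched rather than executed -- a fully rigorous version needs the mass-shifting step done measure-theoretically, or, more simply, you can certify optimality by comparing the attained value against the CVaR bound of \pref{lem:identification-q}, as your own consistency check already suggests -- but this is a routine completion, not a gap.
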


The proof strategy decomposes $U^-$ into its nominal and perturbed components, leveraging the primal solution of $\cvar_\tau$ \citep{ang2018dual}; we formalize this in \pref{app:proof-identification-kernel}. 

\paragraph{Identification of $w^-$.} 
Using the identification of $U^-$ in \pref{lem:identification-U-robust}, we can now identify the robust $w$-function based on the Bellman flow equations in the worst-case MDP. 

The Bellman flow in the robust MDP is given by 
$
    d^{-,\infty}(s) = (1-\gamma)d_1(s)+ \gamma\EE_{\wt s\sim d^{-,\infty},\wt a\sim\pit(\wt s)}U^-(s\mid \wt s,\wt a). 
$
where $d^{-,\infty}(s)$ was defined in \pref{sec:prelim}. Thus, the robust visitation density, defined as $w^{-}(s):=\nicefrac{\diff d^{-,\infty}(s)}{\diff\nu(s)}$, satisfies the following moment condition for all $f:\Scal\mapsto\RR$:
\begin{equation}
    \EE[w^-(s)f(s)] =(1-\gamma)\EE_{d_1}[f(s_1)] + \gamma\EE\bracks*{w^-(s,a)\EE_{s'\sim U^-(s,a)}[f(s')] }, \label{eq:robust-w-moments}
\end{equation}
where we relaxed notation and defined $w^-(s,a):=w(s)\cdot\pit(a\mid s)/\nu(a\mid s)$.
As before, in the unconfounded base ($\Lambda=1$), this result recovers the classic Bellman flow.

\subsection{Estimating \texorpdfstring{$w^{-}$}{w-} with Robust Minimax Indirect Learning}\label{sec:w-minimax}
\begin{algorithm}[t!]
\caption{\RobustMIL: Minimax Estimation of $w^\pm$ with a Stabilizer}
\label{alg:robust-minimax}
\begin{algorithmic}[1]
\STATE\textbf{Input:} Dataset $\Dcal$, prior stage estimate $\wt\zeta$, function classes $\Wcal,\Fcal$, stabilizer weight $\lambda>0$. \\
\STATE Define weights
$
    \xi^-(s,a,s') := \Lambda^{-1}(s,a) + (1-\Lambda^{-1}(s,a))\tau^{-1}(s,a)\II\bracks*{ \wt\zeta(s,a,s')\leq 0}.
$
\STATE\textbf{Output: }
\begin{align}
    \wh w^- = \argmin_{w\in\Wcal}\max_{f\in\Fcal}\,
    &~\EE_n\bracks{w(s,a)\prns*{ \gamma\xi^-(s,a,s')f(s',\pit) - f(s,a) } + (1-\gamma)\EE_{d_1}f(s_1,\pit) }  \nonumber
    \\&- \lambda\|\gamma\xi^-(s,a,s';\wt\zeta)f(s',\pit) - f(s,a)\|_{2,n}^2 \label{eq:w-minimax-objective}
\end{align}
\end{algorithmic}
\end{algorithm}
We now propose a penalized minimax estimator for $w^-$ that generalizes the Minimax Indirect Learning (MIL) of \citet{uehara2021finite} to our robust MDP setting. Our estimator, \RobustMIL (\pref{alg:robust-minimax}), leverages a general function class $\Wcal\subset\Scal\times\Acal\mapsto\RR_+$ to approximately solve the moment equation in \pref{eq:robust-w-moments}. It does so by minimizing the difference between the left- and right-hand sides of the equation across a sufficiently large set of adversaries $f$ in a discriminator class $\Fcal\subset \Scal\times\Acal\mapsto\RR$.
Since $U^-$ is unknown, we approximate it via \pref{eq:def-U-pm} by plugging in a threshold $\wt\zeta(s,a,s')$ in the indicator function to approximate the true threshold $\zeta^-(s,a,s'):=V^-(s')-\beta_{\tau(s,a)}^-(s,a)$. This yields the minimax objective in \pref{eq:w-minimax-objective}, where we also allow for an optional regularization of the adversary's norm which can be useful for obtaining fast convergence rates.

We make the following assumptions for MIL \citep{uehara2021finite}.
The first is a regularity condition that (i) our function class has bounded outputs and (ii) $\zeta$ is continuously distributed around the threshold.
\begin{assumption}[Regularity]\label{assum:regular}
(i) $\sup_{w\in\Wcal\cup\{w^-\}}\|w\|_\infty<\infty$;  (ii) the marginal CDF of $V^-(s')-\beta^-(s,a)$, \ie, $F(y)=P(V^-(s')-\beta_{\tau(s,a)}^-(s,a)\leq y)$, is boundedly differentiable around $0$. 
\end{assumption}
If next-value distribution is discrete, we can use the discrete form of CVaR and (ii) can be removed.

The second is that the adversary class is rich enough to capture all projected errors under the adjoint of the operator $\Jcal_{U^-} f(s,a) := \gamma\EE_{U^-}[f(s',\pit)\mid s,a]-f(s,a)$. 
\begin{assumption}[$w^-$-realizability and completeness]\label{asm:w-realizability-completeness}
$w^-\in\Wcal$ and $\Jcal_{U^-}'(\Wcal-w^-)\subset\Fcal$.
\end{assumption}
We note that \cref{asm:w-realizability-completeness} is monotone in the function class size and can be satisfied by making the function class more expressive, \eg, increasing size of the neural net. Our algorithms are also robust to violations in \cref{asm:w-realizability-completeness}, which we show in \pref{app:proofs-minimax}.

We are now ready to state the main estimation result for $w^-$ in terms of the critical radius (\pref{app:critical-radius}) of the function class. 

\begin{restatable}{theorem}{minimaxFastRatesW}\label{thm:minimax-fast-rates-for-w}
Let $\eps_n^\Wcal$ denote the maximum critical radii of the following classes:
\begin{align*}
    &\Gcal_1 = \braces{ (s,a,s')\mapsto (f(s,a)-\gamma f(s',\pit)), f\in\Fcal },
    \\&\Gcal_2 = \braces{ (s,a,s')\mapsto (w(s,a)-w^-(s,a))( \gamma f(s',\pit)-f(s,a) ), f\in\Fcal,w\in\Wcal }.
\end{align*}
Under \cref{assum:regular,asm:w-realizability-completeness}, \RobustMIL ensures that for any $\delta$, w.p. $1-\delta$,
\begin{equation*}
    \textstyle\nm{\Jcal_{U^-}'(\wh w-w^-)}_2\lesssim \eps_n^\Wcal+\|\wt\zeta^- -\zeta^-\|_\infty+\sqrt{ {\log(1/\delta)}/{n} }.
\end{equation*}
\end{restatable}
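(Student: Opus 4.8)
The plan is to adapt the indirect minimax (MIL) analysis of \citet{uehara2021finite} to accommodate the robust reweighting $\xi^-$. First I would set up the population loss functional: for the \emph{true} threshold $\zeta^-$, define
\begin{align*}
  L(w,f) &:= \EE_\nu[w(s,a)\Jcal_{U^-}f(s,a)] + (1-\gamma)\EE_{d_1}[f(s_1,\pit)],
\end{align*}
where $\Jcal_{U^-}f(s,a)=\gamma\EE_{U^-}[f(s',\pit)\mid s,a]-f(s,a)$. The Bellman-flow moment condition \pref{eq:robust-w-moments} is exactly the statement $L(w^-,f)=0$ for all $f$; subtracting this off yields the key identity $L(w,f)=\langle w-w^-,\Jcal_{U^-}f\rangle=\langle\Jcal_{U^-}'(w-w^-),f\rangle$. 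Hence, if $L$ could be evaluated exactly and $f$ ranged over a class rich enough to realize $\Jcal_{U^-}'(w-w^-)$ — which is guaranteed by \cref{asm:w-realizability-completeness} — then $\max_{f}L(w,f)$ would witness $\nm{\Jcal_{U^-}'(w-w^-)}_2$. The overall strategy is therefore to relate the empirical regularized objective \pref{eq:w-minimax-objective} to $L$, control the two gaps between them (a plug-in bias and an empirical-process error), and then use minimax optimality together with completeness to convert the resulting bound on the regularized max into the stated bound on $\nm{\Jcal_{U^-}'(\wh w-w^-)}_2$.

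The first gap is the plug-in of $\wt\zeta$ for $\zeta^-$ inside the indicator defining $\xi^-$. Observe that $\EE_P[(U^-/P)f(s',\pit)\mid s,a]=\EE_{U^-}[f(s',\pit)\mid s,a]$ exactly when the indicator uses $\zeta^-$, by \pref{lem:identification-U-robust}. Thus the only error arises on the region where $\II[\wt\zeta\leq 0]$ and $\II[\zeta^-\leq 0]$ disagree, i.e.\ where $\zeta^-$ lies within $\nm{\wt\zeta-\zeta^-}_\infty$ of $0$. Under \cref{assum:regular}(ii) the CDF of $\zeta^-$ has bounded density near $0$, so the $\nu$-mass of this region is $\lesssim\nm{\wt\zeta-\zeta^-}_\infty$; combined with boundedness of $w\in\Wcal$ and $f\in\Fcal$ from \cref{assum:regular}(i) and $\tau^{-1}=\Lambda+1$, the induced bias is $\lesssim\nm{\wt\zeta-\zeta^-}_\infty$ uniformly over $\Wcal\times\Fcal$. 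This contributes the $\nm{\wt\zeta^--\zeta^-}_\infty$ term.

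The second gap is the replacement of $\EE_\nu$ by $\EE_n$ and the addition of the stabilizer. Holding the weight at $\zeta^-$, the fluctuation $L_n-L$ is a centered empirical process indexed by $\Wcal\times\Fcal$; its deviations and those of the $\lambda\nm{\cdot}_{2,n}^2$ term are exactly what the critical radii of $\Gcal_1$ and $\Gcal_2$ (the summands appearing in \pref{eq:w-minimax-objective}) control, together with a $\sqrt{\log(1/\delta)/n}$ high-probability term from a Talagrand/Bernstein bound. Now I would invoke the minimax structure: because $w^-\in\Wcal$ (realizability), the inner max at $w=w^-$ is small — its population part vanishes and only the statistical plus plug-in errors remain — so minimax optimality of $\wh w$ forces the regularized max at $\wh w$ to be comparably small. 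Finally, by completeness $f^\star:=\Jcal_{U^-}'(\wh w-w^-)\in\Fcal$, and evaluating the inner max at $f^\star$ lower-bounds it by $\nm{\Jcal_{U^-}'(\wh w-w^-)}_2^2-\lambda\nm{\gamma\xi^-f^\star(s',\pit)-f^\star(s,a)}_{2,n}^2$; choosing $\lambda$ as in \citet{uehara2021finite} so that the localized empirical $L_2$ norm is equivalent (up to constants) to the population norm, a standard self-bounding argument converts the squared inequality into the claimed rate $\eps_n^\Wcal+\nm{\wt\zeta^--\zeta^-}_\infty+\sqrt{\log(1/\delta)/n}$.

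The main obstacle is the plug-in bias from the \emph{discontinuous} indicator $\II[\wt\zeta\leq 0]$: unlike the smooth nuisances in standard MIL, a perturbation of $\wt\zeta$ flips $\xi^-$ on an entire region of states, so one cannot simply Lipschitz-bound the error in $\wt\zeta$. The margin/density regularity of \cref{assum:regular}(ii) is precisely what tames this, turning an a priori discontinuous dependence into a term linear in $\nm{\wt\zeta^--\zeta^-}_\infty$; this is the step genuinely specific to the robust setting. A secondary, more routine point is verifying that the localization and critical-radius machinery carries over once $\xi^-$ is absorbed into the definitions of $\Gcal_1$ and $\Gcal_2$.
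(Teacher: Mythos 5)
Your high-level strategy (a direct perturbation analysis of the minimax objective, carried out in the true robust MDP) is a sensible alternative, and your use of the margin condition in \cref{assum:regular}(ii) to convert the discontinuous indicator error into a disagreement region of mass $\lesssim\|\wt\zeta^--\zeta^-\|_\infty$ matches the paper's key technical step. However, there is a genuine gap in the final self-bounding step: your argument cannot deliver the claimed \emph{linear} dependence on $\|\wt\zeta^--\zeta^-\|_\infty$. Write $b=\|\wt\zeta^--\zeta^-\|_\infty$ and $\Delta=\Jcal_{U^-}'(\wh w-w^-)$. The plug-in bias $B(w,f)=\EE\bracks{w\,\gamma(\xi^-_{\wt\zeta}-\xi^-_{\zeta^-})f(s',\pit)}$ is supported on the disagreement region $D$ with $P(D)\lesssim b$, so it admits only the bounds $|B|\lesssim b$ (sup-norm) or $|B|\lesssim\sqrt{b}\,\|f\|_2$ (Cauchy--Schwarz); neither scales like $b\|f\|_2$. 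Feeding either into the minimax sandwich --- upper bound at $w=w^-$ of order $(\eps_n^\Wcal)^2+b$, lower bound at $f^\star=\Delta$ of order $\|\Delta\|_2^2-(\eps_n^\Wcal+\sqrt{b})\|\Delta\|_2-(\eps_n^\Wcal)^2-b$ --- and solving the resulting quadratic inequality gives $\|\Delta\|_2\lesssim\eps_n^\Wcal+\sqrt{b}$, not $\eps_n^\Wcal+b$. The obstruction is structural: the objective is quadratic in $\Delta$ while your bias perturbs the objective additively at level $b$, so it must surface as $\sqrt{b}$ after taking square roots; moreover the $\sqrt{b}$ is unavoidable uniformly over $\Fcal$, since a test function concentrated on $D$ with $\|f\|_2=1$ witnesses a bias of order $P(D)/\sqrt{P(D)}=\sqrt{b}$.

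The paper's proof is organized precisely to sidestep this barrier. Instead of treating $\xi^-_{\wt\zeta}$ as a biased estimate of the $U^-$-reweighting, it reinterprets \RobustMIL as \emph{exactly unbiased} MIL in the MDP whose kernel is the estimated $\wh P$ (defined by $\wh P/P\propto\II[\wt\zeta\leq 0]$): there the weights are the true density ratio of that MDP, so no bias term ever enters the minimax/localization analysis, only an extra $\sqrt{\log(1/\delta)/n}$ from the bounded importance weights. Theorem 6.1 of \citet{uehara2021finite} is then invoked in the $\wh P$-MDP, and the $\zeta$-error enters only through (a) approximate realizability/completeness of $(\Wcal,\Fcal)$ relative to that MDP's truth $w_{\wh P}$ and operator $\Jcal_{\wh P}'$, and (b) a final triangle-inequality translation from $\|\Jcal_{\wh P}'(\wh w-w_{\wh P})\|_2$ back to $\|\Jcal_{U^-}'(\wh w-w^-)\|_2$. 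Misspecification errors of type (a) enter the minimax rate \emph{linearly} because they multiply $\|f\|_2$ or $\|\Delta\|_2$ inside the bilinear form, and the paper bounds both (a) and (b) by $\Ocal(b)$ using the margin condition, the induced bound $\EE\|\wh P(\cdot\mid s,a)-P^\star(\cdot\mid s,a)\|_{\TV}\lesssim b$, a visitation performance-difference lemma, and sup-norm duality with normalized witnesses. To salvage your direct route you would need the analogous ingredient --- an $L_\infty$-to-$L_2$ comparison for the relevant witnesses such as $\|f^\star\|_\infty\lesssim\|f^\star\|_2$ --- which is exactly what the paper's duality step leans on; without it, your argument is sound but proves a strictly weaker statement with $\sqrt{\|\wt\zeta^--\zeta^-\|_\infty}$ in place of $\|\wt\zeta^--\zeta^-\|_\infty$.
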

As before, the critical radius $\eps_n^\Wcal$ converges at an $\wt\Ocal(n^{-1/2})$ rate for parametric classes.
Notably, our bounds degrade linearly w.r.t. the $\ell_\infty$ error in $\wt\zeta^-$ for estimating $\zeta^-$. For example, if $\wt\zeta(s,a,s')=\wh v(s')-\wh\beta(s,a)$ where $\wh v,\wh\beta$ are estimated with \RobustFQE, then the $\zeta$-error can be bounded by $\Ocal(\|\wh v-v^-\|_\infty+\|\wh\beta-\beta^-\|_\infty)$. 
We present the full proof in \pref{app:proofs-minimax}, where we also present a more general result that is robust to misspecifications to realizability and completeness (\cref{asm:w-realizability-completeness}).

\section{Orthogonal and Efficient Estimator for Robust Policy Value}\label{sec:ortho-est}

In this section, we propose an orthogonal estimator that is robust against errors in the nuisances (exhibiting only second-order sensitivity), achieves semiparametric efficiency, and enables inference. Our estimator is based on the efficient influence function (EIF) of $V^{-}_{d_1}$, which is the canonical gradient of a statistical estimand \citep{tsiatis2006semiparametric}.
The adoption of EIFs for developing efficient estimators is a broadly employed technique in causal inference \citep{chernozhukov2018double, kennedy2020towards}  and reinforcement learning \citep{jiang2016doubly, kallus2022efficiently}.

We define the collection of nuisance parameters by $\eta^{-} = (w^{-}, q^{-}, \beta^{-})$. The notation $\widehat{\eta}$ indicates that these functions are estimated from data, while the notation $\eta$ denotes their true values.

\begin{theorem}[(Recentered) Efficient Influence Function]\label{thm:eif-main}
    The (R)EIF of $V^{-}_{d_1}$ is given by:
    \begin{align*}
        &\textstyle\psi(s, a, s';\eta^{-}) = V^{-}_{d_1} + w^{-}(s,a)\big(r(s,a)+\gamma\rho^{-}(s,a,s'; v^{-}, \beta^{-}) -q^{-}(s,a)\big),\quad\text{where}
    \\
        &\textstyle\rho^{-}(s,a,s'; v^{-}, \beta^{-}) = \Lambda(s,a)^{-1}v^{-}(s') +  (1-\Lambda(s,a)^{-1})\big(\beta^{-}(s,a) + \tau^{-1}(v^{-}(s')-\beta^{-}(s,a))_{-}\big).
    \end{align*}
\end{theorem}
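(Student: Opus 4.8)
The plan is to derive the EIF of $V^-_{d_1}$ by computing the pathwise derivative of the target functional along a smooth parametric submodel of the data-generating distribution, and then matching this derivative to an inner product with a candidate score function in the tangent space. Since the data $(s,a,s')\sim\nu\times P$, the relevant nuisance structure is the conditional law $P(s'\mid s,a)$ together with the marginal $\nu(s,a)$; I would hold $\nu$, $r$, and $d_1$ fixed (as the latter two are assumed known) and perturb $P$ along a one-dimensional path $P_t$ with score $s_P(s'\mid s,a)$. The target $V^-_{d_1}$ depends on $P_t$ both directly, through the robust Bellman fixed point $Q^-$, and indirectly, through the worst-case kernel $U^-$ and the induced visitation measure $d^{-,\infty}$.

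The key computational steps, in order, are as follows. First, I would express $V^-_{d_1}=(1-\gamma)\EE_{d_1}[V^-(s_1)]$ and use the robust Bellman fixed-point characterization from \pref{lem:identification-q} to write a recursive dependence of $Q^-$ on $P_t$. Second, I would differentiate this fixed-point equation in $t$; the crucial simplification is that, by an envelope-type argument, the derivative of the $\cvar^-_\tau$ term with respect to the value function $v^-$ vanishes at the optimizing quantile $\beta^-$, so the chain rule produces clean terms. This is exactly why the recentering $\rho^-$ in the statement uses the primal CVaR form $\beta^-(s,a)+\tau^{-1}(v^-(s')-\beta^-(s,a))_-$ rather than its derivative — the quantile enters as a plug-in that is first-order insensitive. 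Third, I would telescope the resulting recursion using the worst-case visitation weights, converting the infinite-horizon sum of discounted perturbations into a single expectation against $d^{-,\infty}$, equivalently against the density ratio $w^-$. This telescoping is the mechanism by which $w^-(s,a)$ appears as the outer multiplier in $\psi$.

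The structure I expect to emerge is that the pathwise derivative equals $\EE_\nu[\,w^-(s,a)\,\EE_{P}[(\text{Bellman residual under }\rho^-)\,s_P(s'\mid s,a)]\,]$, at which point the EIF can be read off as the integrand, recentered by subtracting its mean so that $\EE[\psi]=V^-_{d_1}$ and $\EE[\psi-V^-_{d_1}]=0$. Concretely, the robust Bellman residual $r(s,a)+\gamma\rho^-(s,a,s';v^-,\beta^-)-q^-(s,a)$ has conditional mean zero under $P$ at the true nuisances, by \pref{lem:identification-q} and the CVaR primal identity, which both confirms mean-zero-ness and pins down the multiplicative weight $w^-$ via the adjoint Bellman-flow relation \pref{eq:robust-w-moments}. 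The main obstacle is rigorously justifying the differentiation through the robust fixed point and the CVaR/quantile nonsmoothness: I must verify that $\beta^-$ is Hadamard-differentiable (or invoke the envelope theorem under \cref{assum:regular}(ii), the bounded-density condition at the threshold), so that the quantile's own perturbation contributes no first-order term and the interchange of differentiation with the infinite discounted sum is valid. Once that regularity is in hand, the remaining algebra is a routine telescoping-and-recentering computation.
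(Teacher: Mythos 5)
Your proposal is correct and takes essentially the same approach as the paper's proof in Appendix~\ref{sec:eif-proof}: a pathwise derivative of the target through the robust Bellman fixed point (the paper computes it along the Dirac $\epsilon$-contamination path of \citet{hines2022demystifying} rather than a general score submodel, an equivalent formalism), Danskin's theorem to make the quantile's perturbation vanish at first order, telescoping the one-step recursion through the worst-case kernel so that $w^-$ emerges as the multiplier, and recentering via the conditional-mean-zero robust Bellman residual. The one point you leave implicit that the paper states explicitly is why the influence function is the \emph{efficient} one: the model for $(s,a,s')$ is fully nonparametric, so the tangent space is the whole space and the influence function is unique.
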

\begin{remark}
When $\Lambda=1$, there is no shift in the target environment, and the weight on the $\cvar$ term is zero. The (R)EIF then reduces to the (R)EIF in \citet{kallus2022efficiently} for regular OPE with an infinite horizon. As $\Lambda\rightarrow\infty$, the $\cvar$ term becomes predominant, with the quantile $\beta^{-}(s,a)$ taking extreme values. This yields the (novel) (R)EIF for the problem in \citet{du2022provably}, where the expected value term is replaced solely by a $\cvar$ component in the Bellman equation.
\end{remark}

The (R)EIF forms the basis of our orthogonal estimator. First, we note that $\EE[\psi(s,a,s';\eta^{-})]$ is an unbiased estimator of $V^{-}_{d_1}$. Furthermore, the expression for $\psi(s,a,s';\eta^{-})$ depends only on quantities $w^{-}, q^{-}, \beta^{-}$ which can be estimated from data. Thus, we can cast the expression $\EE[\psi(s,a,s';\eta^{-})]$ as a statistical estimand to be learned from the observed sample. This suggests a natural two-stage estimator that we summarize in \pref{alg:ortho-ope}. In the first stage, we estimate the nuisance parameters $\widehat{\eta}$ from the data with $K$-fold cross-fitting; in the second stage, these estimates are incorporated into the (R)EIF expression and we calculate the empirical average using the observed data. We summarize our procedure in \pref{alg:ortho-ope}.

\begin{algorithm}[t!]
\caption{Orthogonal Estimator for $V^{-}_{d_1}$}
\label{alg:ortho-ope}
\begin{algorithmic}[1]
    \STATE \textbf{Input:} Dataset $\Dcal$, number of splits $K$.
    \FOR{$k=1,2,\dots,K$}
        \STATE Use data $\Dcal \setminus \Dcal_k$ to learn $(q^{-, [k]},\beta^{-,[k]})$ with \pref{alg:robust-fqe} and $w^{-,[k]}$ with \pref{alg:robust-minimax}
        \STATE \textbf{for} $i=\lfloor (k-1)n/K\rfloor,\dots,\lfloor kn/K\rfloor-1$ \textbf{do} $\psi^{-}_i = \psi(s_i,a_i,s'_i, \widehat{\eta}^{-})$
    \ENDFOR
    \STATE\textbf{Output: } $\widehat{V}^{-}_{d_1} = \frac{1}{n}\sum_{i=1}^n \psi^{-}_i$.
    \end{algorithmic}
\end{algorithm}

The nuisance estimation is detailed in \cref{sec:q-est,sec:w-minimax}. The reliance on the EIF confers our estimator desirable statistical properties including a second order bias due to the nuisances, meaning the bias has a product structure with respect to the nuisance errors. Thus, this special structure orthogonalizes away the dependency on $\widehat{Q}^{-}$ errors which now only appear in second order. Furthermore, our estimator is semiparametrically efficient in the sense that under mild consistency assumptions, it achieves minimum variance among all regular and asymptotically linear (RAL) estimators. We provide theoretical justifications for these properties in the next section.

\subsection{Theoretical Guarantees of the Orthogonal Estimator}
We now characterize the theoretical properties of our orthogonal estimator. 
We consider the $K$-fold cross-fitted estimator in \pref{alg:ortho-ope} given by
\begin{equation*}\textstyle
    \widehat{V}^{-}_{d_1}= \frac{1}{n}\sum_{k=1}^K \sum_{(s,a,s')\in \Dcal^k} \psi(s,a,s'; \widehat{\eta}^{[k]}),
\end{equation*}
where nuisances $\widehat{\eta}^{[k]}, k\in[K]$ are trained on all data excluding the $k^\text{th}$ fold $\Dcal^k$. The following theorem outlines the theoretical guarantees of this estimator:

\begin{theorem}[Efficiency of $\widehat{V}_{d_1}^-$]\label{thm:efficiency}
    Let $r_{n, p}^w,r_{n,p}^q, r_{n,p}^{\beta}$ be functions of the same size $n=|\Dcal|$ such that $\|\Jcal_{U^-}'(\widehat{w}^{-, [k]}-w)\|_p\leq r_{n,p}^w$, $\|\widehat{q}^{-, [k]}-q\|_p\leq r_{n,p}^q$, and $\|\beta^{-, [k]}-\beta\|_p\leq r_{n,p}^{\beta}$ for any $k\in [K]$. Furthermore, assume that the regularity conditions in \pref{assum:regular} hold. Then:
    \begin{align}
        |\widehat{V}_{d_1}^--V^-_{d_1}| \lesssim O_p(n^{-1/2}) + O_p(r_{n,2}^wr_{n,2}^q+(r_{n, \infty}^{q})^2+(r_{n,\infty}^{\beta})^2)\tag{Rates}\label{eq:rates}
    \end{align}
    Furthermore, if $r_{n,2}^w \vee r_{n,2}^q=o_p(1)$, $r_{n,2}^w r_{n,2}^q=o_p(n^{-1/2})$, $r_{n,\infty}^{q}=o_p(n^{-1/4})$, and $r_{n,\infty}^{\beta}=o_p(n^{-1/4})$, then $\widehat{V}^-_{d_1}$ satisfies:
    \begin{align}
        \sqrt{n}(\widehat{V}^-_{d_1}-V^{-}_{d_1}) \xrightarrow{d} \Ncal(0, \Sigma) \tag{Normality \& Efficiency}\label{eq:efficiency},\quad\Sigma=\mathrm{Var}(\psi(s,a,s';\eta^-)).
    \end{align}
    Moreover, $\Sigma$ is the minimum achievable asymptotic variance among RAL estimators in the nonparametric model for $(s,a,s')$ (the efficiency bound).
\end{theorem}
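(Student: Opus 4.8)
The plan is to follow the standard debiased/double-machine-learning template, exploiting that $\psi$ is the (recentered) efficient influence function established in \cref{thm:eif-main}. Writing $z=(s,a,s')$ and reading the recentering term $V^-_{d_1}$ in $\psi$ as its plug-in form $(1-\gamma)\EE_{d_1}[v(s_1)]$ (so that $\widehat V^-_{d_1}=\EE_n[\psi(z;\widehat\eta^{[k]})]$ is a genuine estimator), I would use $\EE[\psi(z;\eta^-)]=V^-_{d_1}$ to decompose
\begin{align*}
\widehat V^-_{d_1}-V^-_{d_1}
&= \underbrace{(\EE_n-\EE)[\psi(z;\eta^-)]}_{\textrm{(I) oracle}}
+ \underbrace{\tfrac1K{\textstyle\sum}_k (\EE_n^{(k)}-\EE)\,\Delta\psi^{[k]}}_{\textrm{(II) empirical process}}
+ \underbrace{\tfrac1K{\textstyle\sum}_k \EE[\Delta\psi^{[k]}\mid\widehat\eta^{[k]}]}_{\textrm{(III) bias}},
\end{align*}
where $\Delta\psi^{[k]}:=\psi(z;\widehat\eta^{[k]})-\psi(z;\eta^-)$ and $\EE_n^{(k)}$ averages over fold $\Dcal^k$. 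Term (I) is a centered i.i.d.\ average handled by the CLT; term (II) is controlled via cross-fitting (avoiding Donsker conditions); term (III) is the conditional bias that I must show is second order in the nuisance errors.

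The heart of the argument is verifying Neyman orthogonality, i.e.\ that every Gateaux derivative of the population map $\eta\mapsto\EE[\psi(z;\eta)]-V^-_{d_1}$ vanishes at $\eta^-$. For $w$: since $\psi$ is linear in $w$ and the multiplier $r+\gamma\EE_{s'}[\rho(\cdot;v^-,\beta^-)\mid s,a]-q^-=\Trob^- q^- - q^-=0$ by \cref{lem:identification-q}, the $w$-derivative is zero. For $q$ (acting through the explicit $-q(s,a)$, the recentering $(1-\gamma)\EE_{d_1}[v]$, and $v(s')$ inside $\rho$): differentiating $\rho$ in $v$ recovers exactly the worst-case likelihood ratio $U^-/P$ of \cref{lem:identification-U-robust}, so $\gamma\EE_{s'}[\partial_v\rho\cdot\Delta v\mid s,a]=\gamma\EE_{U^-}[\Delta v(s')\mid s,a]$; combining with $-\Delta q(s,a)$ and the reweighting identity $\EE[w^-(s,a)\Delta q(s,a)]=\EE[w^-(s)\Delta v(s)]$ gives $\EE[w^-\Jcal_{U^-}\Delta q]$, which the robust Bellman-flow moment condition \cref{eq:robust-w-moments} (tested against $\Delta v$) equals $-(1-\gamma)\EE_{d_1}[\Delta v(s_1)]$; this is precisely cancelled by the derivative of the recentering term. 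For $\beta$: $\partial_\beta\EE_{s'}[\beta+\tau^{-1}(v^-(s')-\beta)_-\mid s,a]=1-\tau^{-1}F^-(\beta^-\mid s,a)=0$ at the true $\tau$-quantile. Hence (III) is quadratic: the $w$-$q$ cross term is of $\langle\Jcal_{U^-}'(\widehat w-w^-),\widehat q-q^-\rangle$ type, bounded by $r^w_{n,2}r^q_{n,2}$ via Cauchy--Schwarz, while the curvature in $v$ and $\beta$ around the kink of $(\cdot)_-$ is controlled using the bounded density near the threshold (\cref{assum:regular}(ii)), producing terms of order $(r^q_{n,\infty})^2$ and $(r^\beta_{n,\infty})^2$.

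For term (II), cross-fitting makes $\widehat\eta^{[k]}$ independent of $\Dcal^k$, so conditionally the summand is centered i.i.d.\ with conditional variance at most $n_k^{-1}\nm{\psi(\cdot;\widehat\eta^{[k]})-\psi(\cdot;\eta^-)}_2^2$; boundedness (\cref{assum:regular}(i)) yields (II)$=O_p(n^{-1/2})$, and consistency $r^w_{n,2}\vee r^q_{n,2}=o_p(1)$ upgrades this to $o_p(n^{-1/2})$. Assembling (I)--(III) gives the \eqref{eq:rates} bound. Under the stated rate conditions, (III)$=o_p(n^{-1/2})$ and (II)$=o_p(n^{-1/2})$, so $\sqrt n(\widehat V^-_{d_1}-V^-_{d_1})=\sqrt n\,(\mathrm{I})+o_p(1)\xrightarrow{d}\Ncal(0,\Sigma)$ with $\Sigma=\mathrm{Var}(\psi(z;\eta^-))$ by the CLT. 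Finally, because \cref{thm:eif-main} certifies that $\psi-V^-_{d_1}$ is the EIF of $V^-_{d_1}$ in the nonparametric model for $(s,a,s')$, its variance equals the semiparametric efficiency bound by the convolution/local-asymptotic-minimax theorem; our estimator is RAL with influence function equal to this EIF, hence efficient.

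The hard part will be the second-order control of the non-smooth CVaR/quantile component: the non-differentiable $(\cdot)_-$ and the implicit indicator in $\rho$ prevent a naive Taylor expansion, so I expect the main technical work to be showing, via the density-smoothness assumption, that the region where the estimated and true indicators disagree contributes only $(r^q_{n,\infty})^2+(r^\beta_{n,\infty})^2$ (with the $\ell_\infty$ norms arising from uniformly bounding this crossing region), and in carefully pairing the Bellman-flow condition for $w^-$ with the recentering term to obtain the exact cancellation in the $q$-direction.
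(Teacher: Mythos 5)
Your proposal is correct and follows essentially the same route as the paper's proof: the same oracle/empirical-process/bias decomposition with cross-fitting plus conditional Chebyshev for the stochastic terms, the same three cancellation mechanisms for the bias (the robust Bellman equation in the $w$-direction, the Bellman-flow moment condition \cref{eq:robust-w-moments} paired with the recentering term in the $q$-direction, and the quantile first-order condition in the $\beta$-direction, which the paper packages as \cref{lem:sharpness-correct-q-beta,lem:sharpness-correct-zeta-w}), and the same margin/bounded-density control of the indicator-disagreement region yielding the $(r^{q}_{n,\infty})^2+(r^{\beta}_{n,\infty})^2$ terms (the paper's \cref{lem:margin-guarantees}). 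The only difference is presentational: you phrase the bias control as Neyman orthogonality via Gateaux derivatives plus a quadratic remainder, whereas the paper derives the identical bounds through exact multi-robustness identities (its $\varepsilon_A$/$\varepsilon_B$ decomposition), but the substance is the same.
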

We provide the intuition along with a detailed proof in \pref{app:efficiency-proof}. The first part of \pref{thm:efficiency} implies that as long as we estimate the nuisances at rates faster that $n^{-1/4}$, then we can learn $\widehat{V}^{-}_{d_1}$ at parametric rates.
The second part of \pref{thm:efficiency} states that under mild consistency assumptions, our estimator attains the efficiency bound and is asymptotically normal. That means, for example, we can construct asymptotically valid lower 95\%-confidence bound on $\widehat{V}^{-}_{d_1}$ by simply subtracting 1.64 times $\hat{\mathrm{se}}=\frac{1}{n}\prns*{\sum_{k=1}^K \sum_{(s,a,s')\in \Dcal^k} (\psi(s,a,s'; \widehat{\eta}^{[k]})-\widehat{V}^{-}_{d_1})^2}^{1/2}$. Then, we can be sure to have a bound on the worst-case RL policy value, accounting \emph{both} for potential environment shift and finite data. 
Finally, in \pref{app:validity-proofs}, we describe two settings when our orthogonal estimator remains valid even if some nuisances are \emph{inconsistent}, which is a desirable guarantee for sensitivity analysis \citep{dorn2023sharp}.

\paragraph{Bringing it all together.} We can instantiate \pref{thm:efficiency} with the nuisance estimators from the previous sections. First, use \RobustFQE to estimate $\wh q^-$ and $\wh\beta^-$, ensuring $\|\wh q^--Q^-\|_2\leq \Ocal(\eps^\Qcal_n+\QRerr^2)$. Under smoothness conditions (\pref{lem:smoothness}), the $L_2$ guarantee for $\wh q^-$ implies an $L_\infty$ guarantee for $\wh q^-$, which also ensures an $L_\infty$ guarantee for $\wh\beta^-$. 
This ensures $\max(\|\wh q^--Q^-\|_\infty,\|\wh\beta^--\beta^-\|_\infty)$ is well-controlled.
Then, we can set $\wt\zeta^-(s,a,s')=\wh q^-(s',\pit)-\wh\beta^-(s,a)$ and run \RobustMIL for estimating $\wh w^-$. By \pref{thm:minimax-fast-rates-for-w}, its projected-$L_2$ error is $\Ocal(\eps_n^\Wcal+\|\wh q^--Q^-\|_\infty+\|\wh\beta^--\beta^-\|_\infty)$. Therefore, the final rate via \pref{thm:efficiency} is $\Ocal((\eps^\Qcal_n+\QRerr^2)\cdot\eps^\Wcal_n + \|\wh q^--Q^-\|_\infty^2+\|\wh\beta^--\beta^-\|_\infty^2)$. 

\section{Empirical Evaluation}\label{sec:experiments}

We now provide a proof-of-concept empirical investigation to validate our theoretical findings. We experiment with our proposed methodology in a simple synthetic environment. First, we discuss our environment, followed by our approach for solving for the nuisances functions $\eta^-$. Then, we provide empirical results for our orthogonal estimator, and compare its performance to weighted or direct estimators using the $Q^-$ or $w^-$ nuisances only. The code for our experiments is open-sourced and available at \url{https://github.com/CausalML/adversarial-ope/}. 

\begin{figure}[!h]
    \centering
    \includegraphics[width=\textwidth]{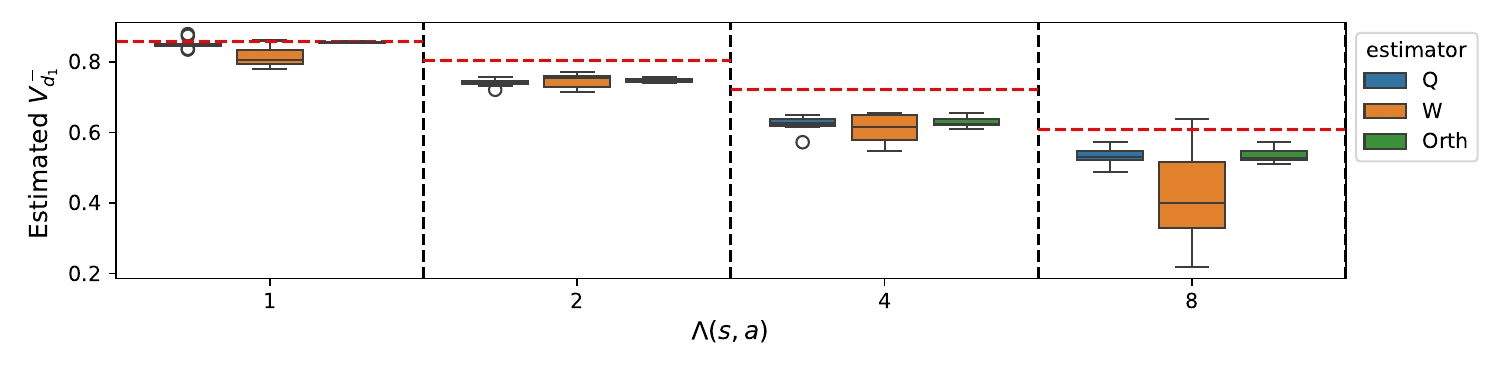}
    \vspace{1em}
    \begin{tabular}{cccc}
         \multirow{2}{*}{$\Lambda$} & \multicolumn{3}{c}{Mean squared error (MSE) to true worst-case policy value} \vspace{0.5em}\\
         & \textbf{Q} & \textbf{W} & \textbf{Orth} \\
         \hline
            1 & $.000240 \pm .000170$ & $.002722 \pm .002266$ & $\mathbf{.000005 \pm .000006}$ \\
            2 & $.004053 \pm .001329$ & $.003584 \pm .002311$ & $\mathbf{.003244 \pm .000600}$  \\
            4 & $.009799 \pm .005172$ & $.013862 \pm .009228$ & $\mathbf{.008721 \pm .002543}$  \\
            8 & $.006247 \pm .003980$ & $.052643 \pm .050839$ & $\mathbf{.005713 \pm .002730}$  \\
         \hline
    \end{tabular}
    \vspace{0.5em}
    \caption{Results of our synthetic data experiments. We show results for our three estimators on all four $\Lambda$ values, over our 10 experiment replications. \textbf{Above:} Box plot summarizing range of policy value estimates for each combination of estimator and $\Lambda$, with Horizontal red dashed lines showing the true worst-case policy values $V^-_{d_1}$. \textbf{Below:} Table summarizing the corresponding MSE of these estimators for the true worst-case policy value, along with one standard deviation errors.}
    \label{fig:results}
\end{figure}

\paragraph{Experimental Setup} We consider a synthetic MDP with a one-dimensional state and two actions, modeled after a simple control problem with non-deterministic dynamics. The task is to estimate the worst-case policy value $V^-_{d_1}$ of a fixed candidate policy $\pit$, across four different constant values of the sensitivity parameter: $\Lambda(s,a) \in \{1,2,4,8\}$.

\newpage
We considered three methods for estimating the robust value $V^-_{d_1}$:
\begin{enumerate}[noitemsep,topsep=0ex,partopsep=0ex,parsep=0ex,left=0em]
    \item \textbf{Q} (\RobustFQE): Direct method using the estimated robust quality function $\wh Q^-$ only.
    \item \textbf{W} (\RobustMIL): Importance-sampling method using the estimated robust density ratio $\wh w^-$ only.
    \item \textbf{Orth}: Our orthogonal estimator which combines the former two, as described in \cref{alg:ortho-ope}.
\end{enumerate}

We performed 10 replications of our experimental procedure, where for each replication we: (1) sampled a dataset of 20,000 tuples using a different fixed logging policy $\pi_b$; (2) fit the nuisance functions $Q^-$, $\beta^-$, and $w^-$ following the method outlined in \cref{alg:robust-fqe,alg:robust-minimax} for each $\Lambda$; and (3) estimated the corresponding robust policy value $V^-_{d_1}$ for all estimators using the fitted nuisances.

\paragraph{Results} We summarize our results in \cref{fig:results}.
We note that all of our estimators are consistently valid for all values of $\Lambda$ in our experiment. Notably, \textbf{Orth} consistently has the lowest mean squared error for the true worst-case policy value. In particular, incorporating the robust importance-sampling weights improves the \RobustFQE estimator $\textbf{Q}$, even though these importance-sampling weights by themselves (as in \textbf{W}) are much noisier estimators. This is consistent with our theory that the orthogonal estimator is semiparametrically efficient and insensitive to errors in the nuisance functions.

Full experimental details, including our MDP, target/logging policies, methodology for computing the true robust policy values $V^-_{d_1}$, and nuisance estimation, are provided in \cref{apx:experiment}.
Finally, we also performed an empirical evaluation in the real-world medical problem of sepsis management using the MIMIC-III dataset \citep{johnson2016mimic}. We detail these results in \cref{sec:medical-experiments}.

\section{Conclusion}
We consider the problem of infinite-horizon OPE in RL settings when there can be unknown, but bounded, shifts in the transition distribution compared to the transition distribution generating the data. This can arise due to unobserved confounding, where observed transitions do not reflect the true causal ones, non-stationarity in the environment, or adversarial environments. We propose a sensitivity model for such transition kernel shifts analogous to the classic MSM for static decision making, and provide theoretical guarantees for identifying and estimating the sharp (\ie, tightest possible) bounds on the best/worst-case policy value, as well as the corresponding robust $Q$-function and state density ratio functions. Our estimator for the best/worst-case policy value is orthogonal (insensitive to how the nuisance functions are estimated) and achieves semiparametric efficiency (attaining the best possible asymptotic variance). Finally, our estimator also supports inference, ensuring we can derive reliable bounds for the robust policy value even with finite data.

\subsubsection*{Acknowledgements}
We thank the anonymous reviewers for their valuable feedback and insightful suggestions. This material is based upon work supported by the National Science Foundation under Grant Numbers 1846210, IIS-2154711, CAREER 2339395, and by the U.S. Department of Energy, Office of Science, Office of Advanced Scientific Computing Research, under Award Number DE-SC0023112.

\bibliography{paper}
\bibliographystyle{plainnat}

\newpage
\appendix
\onecolumn
\setlength{\parindent}{0pt}
\setlength{\parskip}{\baselineskip}
\begin{center}\LARGE
\textbf{Appendices}
\end{center}

\section{Notations}\label{app:notation}

{\renewcommand{\arraystretch}{1.3}
\begin{table}[h!]
    \centering
      \caption{List of Notations} \vspace{0.3cm}
    \begin{tabular}{l|l}
    $\Scal, \Acal$ & State and action spaces. \\
    $\Delta(S)$ & The set of distributions supported by set $S$. \\
    $d_1$ & The initial state distribution. \\
    $\Lambda(s,a)$ & Tolerance parameter for kernel shift at $(s,a)$. Takes values $[1,\infty]$. \\
    $\tau(s,a)$ & $\tau(s,a) = \frac{1}{1+\Lambda(s,a)}\in[0,\frac12]$. \\
    $V^\pm,Q^\pm$ & Robust value and quality functions of the target policy $\pit$. \\
    $f(s,\pi)$ & $f(s,\pi):=\EE_{a\sim\pi(s)}[f(s,a)]$. \\
    $U^\pm(s'\mid s,a)$ & Robust transition kernel which attains the best- or worst-case value. \\
    $\Tcal_U,\Trob^\pm$ & Bellman operator under $U$ and the robust Bellman operators. \\
    $\Jcal_U$ & $\Jcal_U f(s,a) := \gamma\EE_U[ f(s',\pit)\mid s,a ]-f(s,a)$ \\
    $\beta_\tau^\pm(s,a)$ & The upper $\tau$-th quantile of $V^+(s')$ and lower $\tau$-th quantile of $V^-(s')$, $s'\sim P(s,a)$.\\
    $d^{\pit,\infty}_{d_1,U}$ & The $\gamma$-discounted average visitation of $\pit$ under MDP with transition $U$ starting from $d_1$.\\
    $d^{\pm,\infty}$ & $d^{\pm,\infty}=d^{\pit,\infty}_{d_1,U^\pm}$. \\
    $\nu(s),\nu(s,a)$ & Data generating distribution. $\nu(s)$ marginalizes over actions. \\
    $w^{\pm}$ & $w^{\pm}=\nicefrac{\diff d^{\pm,\infty}}{\diff\nu}$. This is valid both as a function of $s$ or $(s,a)$. \\
    $\omega(s,a)$ & $\omega(s,a) = \frac{\pit(a\mid s)}{\nu(a\mid s)}$. \\
    $x_+, x_-$ & $\max(0, x),\min (0, x)$ respectively, for $x\in \RR$.\\
    $x\lesssim y$ & $x\leq Cy$ for some constant $C$.\\
    $\EE_n$ & Empirical average over $n$ samples. \\
    $\magd{f}_p$ & $L^p$ norm, $(\EE|f(X)|^p)^{1/p}$.\\
    $f^\star$ & True (oracle) value of a parameter or function $f$.\\
    $f, \bar{f}$ & Putative value of a parameter or function $f$.\\
    $\wh f$ & Estimated value of a parameter or function $f$.
    \end{tabular}
    \label{tab:notation}
\end{table}
}

\section{Results for Policy Evaluation Under Best-Case Perturbations}\label{app:best-case-results} 
In this section, we present analogous results for the best-case perturbation under the uncertainty set, corresponding to the supremum case of \cref{eq:robust-q-primal-def}.
We derive a similar orthogonal estimator with the properties outlined in \pref{thm:efficiency}, following the same reasoning presented in the main text. 

\paragraph{$Q^+$ Identification and Estimation.} We present the results of \pref{lem:identification-q} for $\Trob^+$:
\begin{align*}
    \Trob^+ q(s,a) 
    &= r(s,a) + \gamma\Lambda^{-1}(s,a)\EE[ v(s')\mid s,a ]
    +\gamma(1-\Lambda^{-1}(s,a))\cvar^+_{\tau(s,a)}[ v(s')\mid s,a ].
\end{align*}
Next, applying \pref{asm:qr-oracle} and \pref{asm:robust-bc} to $\Trob^+$, we derive from \pref{thm:robust-fqe} for $Q^-$ that:
\begin{align*}
    &\nm*{ \wh q^+_M - Q^+ }_{d_1} \lesssim (1-\gamma)^{-2}\prns*{\sqrt{C^+_{d_1}}\cdot\eps_n^\Qcal+\QRerr^2(n/2M,\delta/2M)},~~\text{and}
    \\&\abs{(1-\gamma)\EE_{d_1}[ \wh v_M^+(s_1) ]- V_{d_1}^+} \lesssim \gamma^M + (1-\gamma)^{-1}\prns*{\sqrt{C^+_{d_1}}\cdot\eps_n^\Qcal+\QRerr^2(n/2M,\delta/2M)}.
\end{align*}

\paragraph{$w^+$ Identification and Estimation.} We first state the identification result for $U^-$ as in \pref{lem:identification-U-robust}:
\begin{equation*}
  {U^+(s'\mid s,a)}/{P(s'\mid s,a)} = \Lambda^{-1}(s,a)+(1-\Lambda^{-1})\tau(s,a)^{-1}\II\bracks*{\prns*{V^+(s')-\beta^+_{\tau}(s,a)}\geq 0}.
\end{equation*}
Then, under \pref{assum:regular} and \pref{asm:w-realizability-completeness} formulated for $U^+$, the minimax rates from \pref{thm:minimax-fast-rates-for-w} are given by:
\begin{equation*}
    \nm{\Jcal_{U^+}'(\wh w-w^+)}_2\lesssim \eps_n^\Wcal+\|\wt\zeta^+-\zeta^+\|_\infty+\sqrt{ {\log(1/\delta)}/{n} }.
\end{equation*}

\begin{algorithm}[t!]
\caption{Orthogonal Estimator for $V^{+}_{d_1}$}
\label{alg:ortho-ope-best-case}
\begin{algorithmic}[1]
    \STATE \textbf{Input:} Dataset $\Dcal$, number of splits $K$.
    \FOR{$k=1,2,\dots,K$}
        \STATE Use data $\Dcal \setminus \Dcal_k$ to learn $(q^{+, [k]},\beta^{+,[k]})$ with \pref{alg:robust-fqe} and $w^{+,[k]}$ with \pref{alg:robust-minimax}
        \STATE \textbf{for} $i=\lfloor (k-1)n/K\rfloor,\dots,\lfloor kn/K\rfloor-1$ \textbf{do} $\psi^{+}_i = \psi(s_i,a_i,s'_i, \widehat{\eta}^{+})$
    \ENDFOR
    \STATE\textbf{Output: } $\widehat{V}^{+}_{d_1} = \frac{1}{n}\sum_{i=1}^n \psi^{+}_i$.
    \end{algorithmic}
\end{algorithm}

\paragraph{Orthogonal and Efficient Estimator for $V^+_{d_1}$.} Let the set of nuisance parameters be denoted by $\eta^{+} = (w^{+}, q^{+}, \beta^{+})$. Then, the (recentered) efficient influence function (R)EIF (see \pref{thm:eif-main}) for in $V^+_{d_1}$ is formulated as: 
\begin{align*}
        &\psi(s, a, s';\eta^{+}) = V^{+}_{d_1} + w^{+}(s,a)\big(r(s,a)+\gamma\rho^{+}(s,a,s'; v^{+}, \beta^{+}) -q^{+}(s,a)\big),\quad\text{where}
    \\
        & \rho^{+}(s,a,s'; v^{+}, \beta^{+}) = \Lambda(s,a)^{-1}v^{+}(s') +  (1-\Lambda(s,a)^{-1})\big(\beta^{+}(s,a) + \tau^{-1}(v^{+}(s')-\beta^{+}(s,a))_{+}\big).
\end{align*}
Using this (R)EIF, the orthogonal estimator for $V^+_{d_1}$ is presented in \pref{alg:ortho-ope-best-case}. We now restate \pref{thm:efficiency} for $\wh V^+_{d_1}$:
\begin{theorem}[Efficiency of $\widehat{V}^+_{d_1}$]\label{thm:efficiency-best-case}
    Let $r_{n, p}^w,r_{n,p}^q, r_{n,p}^{\beta}$ be functions of $n=|\Dcal|$ such that $\|\Jcal_{U^+}'(\widehat{w}^{+, [k]}-w^*)\|_p\leq r_{n,p}^w$, $\|\widehat{q}^{+, [k]}-q^*\|_p\leq r_{n,p}^q$, and $\|\beta^{+, [k]}-\beta^*\|_p\leq r_{n,p}^{\beta}$ for any $k\in [K]$. Furthermore, assume that the regularity conditions in \pref{assum:regular} hold. Then:
    \begin{align}
        |\widehat{V}_{d_1}^+-V_{d_1}| \lesssim O_p(n^{-1/2}) + O_p(r_{n,2}^wr_{n,2}^q+(r_{n, \infty}^{q})^2+(r_{n,\infty}^{\beta})^2)\tag{Rates}\label{eq:rates-for-plus}
    \end{align}
    Furthermore, if $r_{n,2}^w \vee r_{n,2}^q=o_p(1)$, $r_{n,2}^w r_{n,2}^q=o_p(n^{-1/2})$, $r_{n,\infty}^{q}=o_p(n^{-1/4})$, and $r_{n,\infty}^{\beta}=o_p(n^{-1/4})$, then $\widehat{V}^+_{d_1}$ satisfies:
    \begin{align}
        \sqrt{n}(\widehat{V}^+_{d_1}-V_{d_1}) \xrightarrow{d} \Ncal(0, \Sigma) \tag{Normality \& Efficiency}\label{eq:efficiency-for-plus},\quad\Sigma=\mathrm{Var}(\psi(s,a,s';\eta^+)).
    \end{align}
    Moreover, $\Sigma$ is the minimum achievable asymptotic variance among RAL estimators in the nonparametric model for $(s,a,s')$ (the efficiency bound).
\end{theorem}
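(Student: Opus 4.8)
The plan is to prove \pref{thm:efficiency-best-case} by transporting the worst-case argument (that of \pref{thm:efficiency}, detailed in \pref{app:efficiency-proof}) to the best-case setting. Every best-case object arises from its worst-case counterpart through $\sup_{U\in\Ucal(P)}(\cdot)=-\inf_{U\in\Ucal(P)}(-\cdot)$: the upper CVaR and upper quantile $\beta^+$ replace the lower ones, the indicator $\II[V^+(s')-\beta^+(s,a)\geq 0]$ in the best-case form of \pref{lem:identification-U-robust} replaces $\II[V^-(s')-\beta^-(s,a)\leq 0]$, and $(\cdot)_+$ replaces $(\cdot)_-$ in $\rho^+$. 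Since these substitutions are sign flips that preserve the analytic structure, it suffices to re-run the worst-case analysis with $\eta^+=(w^+,q^+,\beta^+)$, the operator $\Jcal_{U^+}$, and the best-case (R)EIF $\psi(\cdot;\eta^+)$ restated in this appendix.

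The first step is to write the cross-fitted error in the standard form
\[
\widehat V^+_{d_1}-V^+_{d_1}=(\EE_n-\EE)\,\psi(\cdot\,;\eta^+)+\mathrm{(II)}+\mathrm{(III)},
\]
where (II) gathers the cross-fitted empirical-process remainders $(\EE_n-\EE)[\psi(\cdot;\widehat\eta^{+,[k]})-\psi(\cdot;\eta^+)]$ and (III) the conditional-bias terms $\EE[\psi(\cdot;\widehat\eta^{+,[k]})-\psi(\cdot;\eta^+)\mid \Dcal\setminus\Dcal^k]$, using $\EE[\psi(\cdot;\eta^+)]=V^+_{d_1}$ from unbiasedness of the (R)EIF. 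The leading term is an i.i.d. average of the fixed function $\psi(\cdot;\eta^+)$, so the CLT gives $\sqrt n\,(\EE_n-\EE)\psi(\cdot;\eta^+)\xrightarrow{d}\Ncal(0,\Sigma)$ with $\Sigma=\mathrm{Var}(\psi(\cdot;\eta^+))$. Term (II) is handled by cross-fitting: conditioning on the out-of-fold data freezes $\widehat\eta^{+,[k]}$, so each summand is conditionally mean-zero with variance of order $\|\psi(\cdot;\widehat\eta^{+,[k]})-\psi(\cdot;\eta^+)\|_2^2/n$, which is $o_p(n^{-1/2})$ whenever the nuisances are $L_2$-consistent; this is where $r_{n,2}^w\vee r_{n,2}^q=o_p(1)$ enters.

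The crux is the bias term (III). I would expand $\EE[\widehat w^+(r+\gamma\rho^+(\widehat v^+,\widehat\beta^+)-\widehat q^+)]$ around $\eta^+$, using that $\EE_{s'}[\rho^+(v^+,\beta^+)\mid s,a]=\sup_{U}\EE_U[v^+(s')\mid s,a]$ by the best-case form of \pref{lem:identification-q}, so the true residual vanishes conditionally ($\Trob^+ q^+-q^+=0$). Three contributions remain. (a) The part linear in the $w$-error has first-order term $\langle\,\widehat w^+-w^+,\ \Jcal_{U^+}(\widehat q^+-q^+)\rangle=\langle\Jcal_{U^+}'(\widehat w^+-w^+),\ \widehat q^+-q^+\rangle$, since the derivative of $\sup_U\EE_U[\cdot\mid s,a]$ in the integrand is integration against the maximizing kernel $U^+$; Cauchy--Schwarz bounds it by $r_{n,2}^w r_{n,2}^q$, which is exactly why the hypothesis uses the projected seminorm $\|\Jcal_{U^+}'(\widehat w^+-w^+)\|_2$ supplied by \pref{thm:minimax-fast-rates-for-w} and \pref{asm:w-realizability-completeness}. (b) Because $\beta^+$ is the stationary point of $\cvar_\tau^+(X)=\min_b\{b+\tau^{-1}\EE[(X-b)_+]\}$, the bias is first-order insensitive to $\beta$-perturbations, leaving only $O((r_{n,\infty}^\beta)^2)$. (c) The kink $(v-\beta)_+$ and the indicator defining $U^+$ produce a second-order-in-$q$ remainder supported on $\{V^+(s')\approx\beta^+(s,a)\}$, whose mass is controlled by the bounded density in \pref{assum:regular}(ii), giving $O((r_{n,\infty}^q)^2)$. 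Summing (a)--(c) yields the \eqref{eq:rates-for-plus} bound.

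The main obstacle is contribution (c): $U^+$ and $\rho^+$ are nonsmooth in $v$ through the indicator and positive part, so a naive Taylor expansion is unavailable and one must localize to a neighborhood of the quantile threshold and invoke \pref{assum:regular}(ii) to trade nonsmoothness for a quadratic dependence on $\|\widehat q^+-q^+\|_\infty$ and $\|\widehat\beta^+-\beta^+\|_\infty$. Once (I)--(III) are assembled, the stated rates $r_{n,2}^w r_{n,2}^q=o_p(n^{-1/2})$, $r_{n,\infty}^q=o_p(n^{-1/4})$, $r_{n,\infty}^\beta=o_p(n^{-1/4})$ force (II) and (III) to be $o_p(n^{-1/2})$, so $\sqrt n(\widehat V^+_{d_1}-V^+_{d_1})=\frac1{\sqrt n}\sum_i\psi(\cdot;\eta^+)+o_p(1)\xrightarrow{d}\Ncal(0,\Sigma)$. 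Since $\psi(\cdot;\eta^+)$ is the EIF of $V^+_{d_1}$ (the best-case \pref{thm:eif-main}), this asymptotically linear representation with influence function equal to the EIF certifies that $\Sigma$ attains the nonparametric efficiency bound, completing the proof.
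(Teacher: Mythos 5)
Your proposal is correct and follows essentially the same route as the paper: the paper proves the worst-case theorem in full (cross-fitted decomposition into a CLT term, a conditionally mean-zero empirical-process term controlled by Chebyshev, and a bias term split via the two sharpness lemmas, with the margin condition of \pref{assum:regular}(ii) absorbing the indicator/kink nonsmoothness and Cauchy--Schwarz on $\|\Jcal_{U^+}'(\wh w - w^\star)\|_2\|\wh q - q^\star\|_2$ handling the cross term), and then obtains the best-case statement by exactly the sign-flip transport you describe ($\sup=-\inf(-\cdot)$, upper CVaR/quantile, $(\cdot)_+$, $\II[\cdot\geq 0]$), declaring the argument symmetric. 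Your (a)/(b)/(c) split of the bias is just a reorganization of the paper's $\eps^A$/$\eps^B$ decomposition through its double-sharpness lemmas, so the mechanisms coincide throughout.
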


\section{Additional Related Works}
\paragraph{Robust MDPs.}
There is a rich literature on Robust MDPs \citep{iyengar2005robust,wiesemann2013robust,mannor2016robust,goyal2023robust} with $s,a$-rectangular uncertainty sets,
but these foundational works assumed knowledge of the transition kernel.
Recently, learning-based robust MDP algorithms have been proposed for uncertainty sets under the total variation \citep{panaganti2022robust,kumar2023policy} and more generally $L_p$ balls \citep{kumar2022efficient}.
These $L_p$ uncertainty sets are additive in nature, \ie, the adversary adds or subtracts a vector in the $\ell_p$ ball to $P(\cdot\mid s,a)$, whereas our uncertainty set is multiplicative in nature, \ie, the adversary can multiply or divide a bounded factor and is more commonly used in causal inference to model unobserved confounding.
In the contextual bandit setting, \citep{kallus2022doubly} also derived efficiency bounds for robust OPE where both state distribution and reward distributions may shift -- their work is however restricted to the one-step bandit setting while our full RL setting is more challenging.

\paragraph{Risk-Sensitive RL.}
Risk-sensitive RL is the problem of optimizing the risk measure of cumulative rewards \citep{howard1972risk} and is tightly related to robust MDPs \citep{chow2015risk}. 
For example, as we proved in \pref{lem:identification-q}, the MSM uncertainty set is indeed equivalent to risk-sensitive RL with the dynamic risk measure $\Lambda \EE + (1-\Lambda)\cvar_\tau$.
We note that efficient online RL algorithms have been proposed for similar measures \citet{du2022provably,pmlr-v202-xu23d}. Static risk-sensitive RL also modifies the Bellman equations in an augmented MDP \citep{wang2023near,wang2024risk}.
Our focus is on deriving the optimal \emph{off-policy evaluation} estimators for the problem, which involves a different set of challenges such as deriving the efficiency bound and ensuring sharpness guarantees even when nuisances are estimated slowly.

\section{Additional Technical Details}

\subsection{Higher Order Norms via Smoothness}

For any $x\in \RR^+$, define $\lfloor x \rfloor$ as the greatest integer that is strictly less than $x$, and let  $x$ and $\{x\}=x-\lfloor x\rfloor$ represent the fractional part. Thus, we obtain the distinct decomposition $x = \lfloor x \rfloor + \{x\}$, where $\lfloor x \rfloor \in \NN$ and $\{x\} \in (0,1]$.

\begin{definition}[$\alpha$-smooth functions]\label{def:a-smooth}
   Given $\alpha\in(0, \infty)$ and $\Xcal \subseteq \RR^m$, $f:\Xcal\rightarrow\RR$ is an $\alpha$-smooth function if (1) the mixed derivatives up to $\lfloor \alpha \rfloor$-order exist and are bounded; and (2) all $\lfloor \alpha \rfloor$-order derivatives are $\{\alpha\}$-H\"older continuous \citep{leoni2017first}. 
\end{definition}

\begin{lemma}[$L^\infty$ Bound for $\alpha$-Smooth Functions]\label{lem:smoothness} Let $f:\Xcal\rightarrow\RR, \Xcal \subseteq \RR^m$ be an $\alpha$-smooth function as in \pref{def:a-smooth}. Then, if $\Xcal$ is $\RR^m$, a half-space or a bounded Lipschitz domain in $\RR^m$, there exists a constant $C$ such the following inequality holds:
\begin{align*}
    \|f\|_\infty \leq C\|f\|_p^{\frac{p\alpha}{p\alpha + m}}.
\end{align*}
\end{lemma}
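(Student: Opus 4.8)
The plan is to prove this as an interpolation-type inequality that trades $L_p$ control for $L_\infty$ control using the smoothness budget $\alpha$ against the ambient dimension $m$. The guiding intuition is that an $\alpha$-smooth function cannot have a tall, thin spike: if $|f(x_0)|$ is large at some point, smoothness forces $|f|$ to remain comparably large on a ball of a definite radius around $x_0$, and that ball contributes a definite amount to $\|f\|_p^p$. Quantifying this tradeoff between peak height, spike width, and integrated mass is exactly the content of the bound $\|f\|_\infty \leq C\|f\|_p^{p\alpha/(p\alpha+m)}$ (after normalizing the smoothness constant so it does not appear explicitly).

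First I would reduce to a local statement. Let $x_0$ be a point where $|f|$ nearly attains its supremum, say $|f(x_0)| \geq \tfrac12\|f\|_\infty =: \tfrac12 M$. The goal is to show $M^p \lesssim \|f\|_p^p \cdot M^{m/\alpha}$, equivalently $M^{p - m/\alpha} \lesssim \|f\|_p^p$, which rearranges to $M \lesssim \|f\|_p^{p\alpha/(p\alpha+m)}$ once one checks the exponent arithmetic $(p - m/\alpha)\cdot \tfrac{\alpha}{p\alpha + m}\cdot \text{(scaling)}$ works out; the cleanest route is to show directly that $\|f\|_p^p \gtrsim M^{(p\alpha+m)/\alpha}$. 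The mechanism: use the $\{\alpha\}$-Hölder continuity of the $\lfloor\alpha\rfloor$-order derivatives together with a Taylor expansion of $f$ around $x_0$ to control how fast $f$ can decay. The subtlety is that the first derivatives of $f$ at $x_0$ need not vanish, so one cannot immediately say $f$ stays near $M$; one must instead argue via the top-order derivatives. The standard device here is a Landau--Kolmogorov / Gagliardo--Nirenberg interpolation that bounds intermediate derivatives by $\|f\|_p$ and the top smoothness, or a direct scaling argument on a shrinking ball.

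The scaling argument I would carry out concretely: for radius $\delta>0$ consider the rescaled function $f_\delta(y) = f(x_0 + \delta y)$ on the unit ball. Its $\lfloor\alpha\rfloor$-order derivatives pick up a factor $\delta^{\lfloor\alpha\rfloor}$ and their $\{\alpha\}$-Hölder seminorm picks up $\delta^{\alpha}$, so the full $\alpha$-smooth seminorm of $f_\delta$ scales like $\delta^\alpha$ times a constant. Choosing $\delta \asymp M^{1/\alpha}$ balances the peak value $M$ against the smoothness-driven variation, so that on the ball $B(x_0,\delta)$ we have $|f| \geq M/4$ on a subset of positive, $\delta$-independent \emph{relative} measure — hence absolute measure $\gtrsim \delta^m \asymp M^{m/\alpha}$. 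Integrating, $\|f\|_p^p \geq \int_{B} |f|^p \gtrsim (M/4)^p \cdot M^{m/\alpha} \asymp M^{p + m/\alpha}$, which is the desired inequality after taking $p$-th roots and simplifying the exponent to $p\alpha/(p\alpha+m)$. The three geometric hypotheses on $\Xcal$ (all of $\RR^m$, a half-space, or a bounded Lipschitz domain) enter precisely here: they guarantee that a ball (or, near the boundary, a fixed fraction of a ball via a Lipschitz cone/corkscrew condition) around any point of $\Xcal$ retains a definite volume inside $\Xcal$, so the integrated-mass lower bound survives even when $x_0$ lies on the boundary.

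The main obstacle I expect is the boundary case. In the interior the ball $B(x_0,\delta)\subset\Xcal$ outright, but when $x_0$ is near $\partial\Xcal$ one must replace ``ball'' by ``the portion of the ball lying in $\Xcal$,'' and show this portion still has measure $\gtrsim \delta^m$ with a constant independent of $x_0$. For a half-space this is immediate (a half-ball). For a bounded Lipschitz domain it follows from the interior cone / corkscrew condition satisfied by Lipschitz domains, which is the technically delicate point and is exactly why the lemma restricts $\Xcal$ to these three classes rather than allowing arbitrary domains (where thin cusps could let $f$ spike with negligible $L_p$ mass). I would cite the standard Sobolev-embedding or extension machinery for Lipschitz domains (e.g.\ via a bounded extension operator reducing the general case to $\Xcal=\RR^m$) to discharge this step cleanly, and otherwise treat the constant $C$ as depending only on $\alpha$, $m$, $p$, the smoothness bound, and the Lipschitz character of $\Xcal$.
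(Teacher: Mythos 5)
The paper's own proof of this lemma is a one-line citation: the statement is exactly the fractional Gagliardo--Nirenberg interpolation inequality (Theorem 1 of \citet{brezis2019sobolev}), with the $\alpha$-smoothness constants absorbed into $C$. Your self-contained spike argument is therefore a genuinely different route, and its guiding intuition --- a bump of height $M=\|f\|_\infty$ must have width $\gtrsim M^{1/\alpha}$, hence $L_p$ mass $\gtrsim M^{p+m/\alpha}$ --- is the correct one, as is your exponent arithmetic and your cone-condition treatment of the boundary. But the concrete scaling step you use to justify the width bound fails for $\alpha>1$, which is precisely the regime where the lemma improves on the trivial Lipschitz bound.

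Here is the gap. When you rescale $f_\delta(y)=f(x_0+\delta y)$, only the $\{\alpha\}$-H\"older seminorm of the $\lfloor\alpha\rfloor$-th derivatives picks up the factor $\delta^\alpha$; the sup-norm bounds on the $k$-th derivatives, $1\le k\le \lfloor\alpha\rfloor$, pick up only $\delta^k$, and for small $\delta$ the $k=1$ term dominates. A priori the first derivatives of $f$ are merely bounded by a constant, so over a ball of radius $\delta\asymp M^{1/\alpha}$ the variation of $f$ that your rescaling controls is of order $\delta=M^{1/\alpha}\gg M$, not of order $M$: nothing you wrote rules out $f$ sweeping linearly from $M$ down to $0$ well inside $B(x_0,\delta)$, so the claim that $|f|\ge M/4$ on a definite fraction of that ball does not follow. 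As written, your argument only uses the Lipschitz bound and hence only delivers the $\alpha=1$ exponent $\|f\|_\infty\lesssim\|f\|_p^{p/(p+m)}$. What is missing is exactly the Landau--Kolmogorov step you name in passing but never execute: using the global constraint $|f|\le M$ together with the top H\"older seminorm, one must first show $\|\nabla^k f\|_\infty\lesssim M^{1-k/\alpha}$ for $1\le k\le\lfloor\alpha\rfloor$ (at an interior near-maximum the gradient is forced to be small, and higher derivatives of size $A\gg M^{1-k/\alpha}$ would make $f$ overshoot below $-M$ at distance $\asymp A$, contradicting $\|f\|_\infty=M$). Only with those bounds does Taylor expansion give $|f(x)-f(x_0)|\lesssim \sum_{k\ge 1}M^{1-k/\alpha}\delta^k+\delta^\alpha\le M/4$ on $B(x_0,cM^{1/\alpha})$, after which the rest of your argument goes through. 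So either prove that intermediate-derivative interpolation as a preliminary lemma, or discharge the whole statement by citation as the paper does; the proposal as it stands has a genuine hole at its central step.
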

\begin{proof}
    This lemma is a direct application of the fractional Gagliardo-Nirenberg interpolation inequality (Theorem 1 in \citet{brezis2019sobolev}) from the functional analysis literature. For a more comprehensive exposition on this result, see Appendix A.1 in \citet{bennett2023low}. 
\end{proof}

\subsection{Localized Rademacher Complexity and Critical Radius}\label{app:critical-radius}
Here, we recap the localized Rademacher complexity and critical radius which is a standard complexity measure for obtaining fast rates for squared loss \citep{wainwright2019high}.
Let $\Gcal$ be a class of functions $g:\Zcal\to\RR$. 
Given $n$ datapoints $z_1,z_2,\dots,z_n$, the empirical localized Rademacher complexity is:
\begin{align*}
    \Rcal_n(\eps,\Gcal):=\EE_\sigma\bracks{ \sup_{g\in\Gcal:\|g\|_n\leq \eps}\frac1n\sum_{i=1}^n\epsilon_ig(z_i) },
\end{align*}
where $\EE_\sigma$ is expectation over $n$ independent Rademacher random variables $\sigma_1,\sigma_2,\dots,\sigma_n$, \ie, $\EE_\sigma[\cdot] = \frac{1}{2^n}\sum_{\sigma\in\{-1,1\}^n}[\cdot]$. Note that when $\eps=\infty$, there is no localization and $\Rcal_n(\infty,\Gcal)$ reduces to the vanilla Rademacher complexity. 
Let $C:=\sup_{g\in\Gcal}\|g\|_\infty$ be the envelope of $\Gcal$. Then, the critical radius of $\Gcal$ with $n$, called $\eps_n$, is the smallest $\eps$ that satisfies $\Rcal_n(\eps,\Gcal)\leq\nicefrac{\eps^2}{C}$.

Unless otherwise stated, we will posit that $\Gcal$ is star-shaped: there exists $g_0\in\Gcal$ such that for all $g\in\Gcal$ and $\alpha\in[0,1]$, we have $\alpha g_0 + (1-\alpha)g\in\Gcal$. If not, we can replace $\Gcal$ by its star-hull, \ie, the smallest star-shaped set containing $\Gcal$. We will also posit that $\Gcal$ is symmetric for simplicity.

The critical radius is a well-studied quantity in statistics \citep{wainwright2019high} and also recently in RL \citep{duan2021risk,uehara2021finite}.
For example if $\Gcal$ has $d$ VC-subgraph dimension, then w.p. $1-\delta$, $\eps_n\leq\Ocal(\sqrt{d\log n/n})$. 
For nonparametric models with metric entropy at most $1/t^\beta$, the critical radius can also be bounded by $\Ocal(n^{-1/(\max(2+\beta,2\beta))})$ \citep{uehara2021finite}, \eg, is $\Ocal(n^{-1/4})$ if $\beta=2$.

\section{Proofs for Identification Results}

\subsection{Identification of robust \texorpdfstring{$Q$}{Q}}
\identificationQ*
\begin{proof}
Consider the uncertainty set in $\Tcal_{\textsf{rob}}$ where the constraint on $U$ (\pref{eq:uncertainty-set}) can be rewritten as:
\begin{align*}
  \textstyle 0 \leq \frac{U(s'\mid s,a)-\Lambda^{-1}(s,a)P(s'\mid s,a)}{P(s'\mid s,a)} \leq \Lambda(s,a)-\Lambda^{-1}(s,a).
\end{align*}
Therefore, we can write $U(s'\mid s,a) = \Lambda^{-1}(s,a)P(s'\mid s,a)+(1-\Lambda^{-1})G(s'\mid s,a)$ where we define $G(s'\mid s,a) := \frac{U(s'\mid s,a)-\Lambda^{-1}(s,a) P(s'\mid s,a)}{1-\Lambda^{-1}(s,a)}$.
Thus, the constraints on $G$ are that $G(\cdot\mid s,a)\ll P(\cdot\mid s,a)$ and $\|\frac{\diff G(s'\mid s,a)}{\diff P(s'\mid s,a)}\| \leq \Lambda(s,a)+1$.
Setting $\tau(s,a)=\frac{1}{\Lambda(s,a)+1}$, we can apply the primal form of $\cvar$ \citep{dorn2021doubly,ang2018dual} to obtain
\begin{align*}
  \inf_{G\ll P: \|\frac{\diff G(\cdot\mid s,a)}{\diff P(\cdot\mid s,a)}\|_\infty\leq \tau^{-1}(s,a) }\EE_G[f(s')] = \cvar_{\tau(s,a)}^-\bracks{ f(s')\mid s,a }.
\end{align*}
Therefore, the supremum in $\Tcal_{\textsf{rob}}$ can be expressed as $\Lambda^{-1}(s,a)$ times the expectation under nominal $P$ and $(1-\Lambda^{-1}(s,a))$ times the above CVaR expression, which finishes the proof of the $-$ case.

For the $+$ case, we can simply use $\sup$ instead of $\inf$ and upper CVaR instead of lower CVaR.
\end{proof}

\subsection{Identification of robust kernel and visitation}\label{app:proof-identification-kernel}

\identifcationU*
\begin{lemma}\label{lem:identification-U-robust-general}
Fix any $v:\Scal\to\RR$ and define the pushforward $F_{v}(y\mid s,a)=P(v(s')\leq y\mid s,a)$. Suppose $F_{v}(\beta_{\tau,F_{v}(\cdot\mid s,a)}^\pm(s,a)\mid s,a)=\frac12\pm(\frac12-\tau)$, where $\beta_{\tau,F_{v}}^\pm$ is the upper/lower $\tau$-quantile of $F_v$.
Then, $\sup_{U\in\Ucal(P)}\EE_U[v(s')\mid s,a]=\EE_{s'\sim U_v^+(s,a)}[v(s')]$ and $\inf_{U\in\Ucal(P)}\EE_U[v(s')\mid s,a]=\EE_{s'\sim U_v^-(s,a)}[v(s')]$, where 
\begin{equation*}
  {U^\pm_v(s'\mid s,a)}/{P(s'\mid s,a)} = \Lambda^{-1}(s,a)+(1-\Lambda^{-1})\tau(s,a)^{-1}\II\bracks*{\pm\prns*{v(s')-\beta^\pm_{\tau,F_{v}(\cdot\mid s,a)}(s,a)}\geq 0}.
\end{equation*}
\end{lemma}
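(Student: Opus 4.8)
The plan is to reduce the vector optimization over $U\in\Ucal(P)$ to the scalar CVaR problem already solved in \pref{lem:identification-q}, and then to exhibit an explicit feasible kernel attaining the optimum. Following the decomposition used in the proof of \pref{lem:identification-q}, every $U\in\Ucal(P)$ can be written as $U(\cdot\mid s,a)=\Lambda^{-1}(s,a)P(\cdot\mid s,a)+(1-\Lambda^{-1}(s,a))G(\cdot\mid s,a)$, where $G(\cdot\mid s,a)$ ranges exactly over probability measures with $G\ll P$ and $\nm{\diff G/\diff P}_\infty\leq\tau^{-1}(s,a)$. Since $\EE_U[v(s')]=\Lambda^{-1}\EE_P[v(s')]+(1-\Lambda^{-1})\EE_G[v(s')]$ and only the second term depends on $G$, minimizing (resp.\ maximizing) $\EE_U[v(s')]$ reduces to minimizing (resp.\ maximizing) $\EE_G[v(s')]$ over this feasible set, whose optimal value is the lower CVaR $\cvar^-_{\tau}[v(s')]$ (resp.\ upper CVaR $\cvar^+_{\tau}[v(s')]$) by the primal characterization of CVaR invoked in \pref{lem:identification-q}.

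I would then propose the candidate reweighting $\diff G^\pm/\diff P=\tau^{-1}\II[\pm(v(s')-\beta^\pm_{\tau,F_v})\geq 0]$ (suppressing the $(s,a)$-dependence of the quantile) and verify feasibility. The continuity hypothesis $F_v(\beta^\pm_{\tau,F_v}\mid s,a)=\tfrac12\pm(\tfrac12-\tau)$ gives $P(\pm(v(s')-\beta^\pm_{\tau,F_v})\geq 0\mid s,a)=\tau$, so $\EE_P[\diff G^\pm/\diff P]=\tau^{-1}\cdot\tau=1$ and $G^\pm$ is a genuine probability measure; moreover $0\leq\diff G^\pm/\diff P\leq\tau^{-1}$, so the resulting $U^\pm_v=\Lambda^{-1}P+(1-\Lambda^{-1})G^\pm$ obeys the ratio bounds of \pref{eq:uncertainty-set} (indeed the upper bound is hit with equality, since $\Lambda^{-1}+(1-\Lambda^{-1})\tau^{-1}=\Lambda$ using $\tau^{-1}=\Lambda+1$). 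Substituting $\diff G^\pm/\diff P$ back yields precisely the claimed closed form for $U^\pm_v/P$.

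It remains to confirm optimality. For the infimum I would compute $\EE_{G^-}[v(s')]=\tau^{-1}\EE_P[v(s')\II[v(s')\leq\beta^-_{\tau,F_v}]]$ and identify it with $\cvar^-_\tau[v(s')]=\EE[v(s')\mid v(s')\leq\beta^-_{\tau,F_v}]$, again using $P(v(s')\leq\beta^-_{\tau,F_v})=\tau$. Since $U^-_v$ is feasible and attains the value $\Lambda^{-1}\EE_P[v]+(1-\Lambda^{-1})\cvar^-_\tau[v]$ that \pref{lem:identification-q} already identifies as the infimum, $U^-_v$ is a minimizer; the supremum case is entirely symmetric, swapping the lower tail and lower CVaR for their upper counterparts.

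The main obstacle is the careful use of the quantile-continuity hypothesis. It is exactly what guarantees that putting the maximal admissible weight $\tau^{-1}$ on the $\tau$-fraction of smallest (resp.\ largest) outcomes yields a bona fide probability measure and hits the CVaR value exactly; without it the mass sitting at the quantile would have to be split fractionally and the optimizer would no longer be a clean indicator. Everything else is the reduction to the scalar CVaR problem of \pref{lem:identification-q} together with a direct substitution, so the technical weight lies entirely in this boundary bookkeeping.
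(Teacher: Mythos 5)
Your proposal is correct and follows essentially the same route as the paper's proof: decompose $U=\Lambda^{-1}P+(1-\Lambda^{-1})G$ as in \pref{lem:identification-q}, use the quantile-continuity hypothesis to write the CVaR as the conditional tail expectation, and identify the optimal $G^\pm$ as the indicator reweighting $\tau^{-1}\II[\pm(v(s')-\beta^\pm_{\tau,F_v})\geq 0]$. Your explicit verification that $G^\pm$ is a bona fide probability measure satisfying the density bound, and that the attained value matches the CVaR optimum, is a slightly more careful writing of the same argument the paper states tersely.
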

\begin{proof}
We start with some intuitions.
First, if the CDF of $v(s')$ is differentiable $\beta^+_\tau(s,a)$, then $\cvar_{\tau}^+(v(s')\mid s,a)=\EE[ v(s')\mid f(s')\geq\beta^+_\tau(s,a), s, a ]$ and the result follows immediately from \pref{lem:identification-q} by noticing that the form of $U^+$ exactly recovers the convex combination of expectation and CVaR.
Alternatively, one can use the closed form solution of the primal CVaR as derived in \citep{ang2018dual} to obtain the result.

We now provide a formal proof.
Fix any $s,a$ and let $\tau=\tau(s,a)$. Fix any function $v(s')\in\mathbb{R}$. We want to show that the worst-case $U^+ = \arg\max _ {U\in\mathcal{U}(P)} \mathbb{E} _ U[v(s')\mid s,a]$ has a closed form expression as shown in line 725. By the proof of Lemma 3.1 above, we can rewrite $U^+(s'\mid s,a) = \Lambda^{-1}(s,a)P(s'\mid s,a) + (1-\Lambda^{-1}(s,a)) G^+(s'\mid s,a)$, where $G^+ = \arg\max _ {G\ll P: |dG(\cdot\mid s,a)/dP(\cdot\mid s,a)| _ \infty\leq \tau^{-1}(s,a)}\mathbb{E} _ G[v(s')]$. Thus, it suffices to simplify $G^+$. To do so, we invoke the premise that the CDF of $v(s')$ is differentiable at $\beta^+ _ \tau$, i.e. $F _ v(\beta^+ _ {\tau,F _ v}(s,a)\mid s,a)=1-\tau$. This implies that the CVaR is exactly the conditional expectation of the $1-\tau(s,a)$-fraction of best outcomes, i.e. $\text{CVaR}^+ _ \tau(v(s')\mid s,a) = \mathbb{E}[v(s')\mid v(s')\geq\beta^+ _ \tau(s,a),s,a]$, which in turn is equal to $\tau^{-1}\mathbb{E}[v(s')\mathbb{I}[v(s')\geq\beta^+_\tau(s,a)]\mid s,a]$. Thus, $G^+(s'\mid s,a) = \tau^{-1}P(s'\mid s,a) \mathbb{I}[v(s')\geq\beta^+ _ \tau(s,a)]$. This concludes the proof for the $+$ case. The proof for the $-$ case follows identical steps.
\end{proof}

\section{Proofs for Robust FQE}
We prove a more general result with approximate completeness, which shows that \pref{thm:robust-fqe} is robust to approximate completeness.
\begin{assumption}[Approximate Completeness]\label{asm:approx-robust-bc}
$\max_{q\in\Qcal}\min_{g\in\Qcal}\|g-\Tcal^\pm_{\cvar}q\|_{\nu}\leq\epsQComp$.
\end{assumption}
\begin{theorem}
Assume \pref{asm:approx-robust-bc}. Under the same setup as \pref{thm:robust-fqe}, we have 
\begin{align*}
    \nm{ \wh q^\pm_K - Q^\pm }_\mu \lesssim \frac{1}{(1-\gamma)^2}\prns*{\sqrt{C^\pm_\mu}\cdot\prns{\eps_n^\Qcal+\epsQComp}+\QRerr^2(n/2K,\delta/2K)},
\end{align*}
and
\begin{align*}
    \abs{V_{d_1}^\pm - (1-\gamma)\EE_{d_1}[ \wh q_K^\pm(s_1,\pit) ]} \lesssim \gamma^K + \frac{1}{1-\gamma}\prns*{\sqrt{C^\pm_\mu}\cdot\prns{\eps_n^\Qcal+\epsQComp}+\QRerr^2(n/2K,\delta/2K)}.
\end{align*}
\end{theorem}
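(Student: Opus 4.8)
The plan is to follow the standard two-part template for fitted-$Q$ analyses---bound the per-iteration robust Bellman error of the regression, then propagate these errors across the $M$ iterations using the contraction of $\Trob^-$---while carefully tracking how the second-order quantile bias and the approximate-completeness gap enter, and in which norm. Throughout, write $v=q(\cdot,\pit)$ and let $e_i:=\wh q_i-\Trob^-\wh q_{i-1}$ denote the one-step Bellman error at iteration $i$.

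For the per-iteration step, I would first identify the population regression target. Conditional on $(s,a)$, the Bayes-optimal predictor of the pseudo-outcome is $\EE[y^-\mid s,a]=\Trob^-\wh q_{i-1}(s,a)+b_i(s,a)$ by \cref{lem:identification-q}, where $b_i$ is the error from plugging the estimated quantile $\wh\beta_i^-$ into the CVaR primal objective $g(b)=b+\tau^{-1}\EE[(v-b)_-\mid s,a]$ in place of its maximizer $\beta^-_\tau$. Since $g$ is concave and stationary at $\beta^-_\tau$, and the next-value CDF is boundedly differentiable near the quantile, a second-order expansion gives $\|b_i\|_\infty\lesssim\|\wh\beta^-_i-\beta^-_\tau\|_\infty^2\lesssim\QRerr^2$---this is precisely the orthogonality of the CVaR estimating equation. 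I would then invoke the localized-Rademacher/critical-radius bound for \emph{misspecified} least squares over $\Qcal$: the estimator contracts to the $L_2(\nu)$-projection of $\EE[y^-\mid\cdot]$ onto $\Qcal$ at rate $\eps_n^\Qcal$, while that projection's approximation error is at most $\min_{g\in\Qcal}\|g-\Trob^-\wh q_{i-1}\|_\nu+\|b_i\|_\infty\le\epsQComp+O(\QRerr^2)$ by \cref{asm:approx-robust-bc}. This is exactly where the stronger ``all-$\beta$'' completeness is avoided: the target's departure from $\Qcal$ is absorbed as regression misspecification. A union bound over $i$ (hence the $\delta/2M$, $n/2M$ arguments) then yields, uniformly, $\|e_i\|_\nu\lesssim\eps_n^\Qcal+\epsQComp+\QRerr^2$, with the $\QRerr^2$ piece additionally controlled in $\ell_\infty$.

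Next I would propagate the errors. Using $\wh q_i=\Trob^-\wh q_{i-1}+e_i$ and $Q^-=\Trob^- Q^-$, the key fact is that $\Trob^-$ is a $\gamma$-contraction whose local linearization is an expectation under a feasible worst-case kernel: for any $q,q'$, optimality of the inf-achieving kernel (\cref{lem:identification-U-robust}) and monotonicity give $|\Trob^- q-\Trob^- q'|(s,a)\le\gamma\,\EE_U[\,|v-v'|\mid s,a\,]$ for a suitable $U\in\Ucal(P)$. Unrolling the recursion for $\wh q_M-Q^-$ produces a telescoping sum $\sum_{i=1}^{M}\gamma^{M-i}$ of errors $|e_i|$ pushed forward through $M-i$ worst-case steps, plus an initial term $\gamma^M|\wh q_0-Q^-|$. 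The pushforward measures assemble into the discounted worst-case occupancy $d^{-,\infty}$, so Jensen followed by an $L_2$ change of measure gives $\EE_{d^{-,\infty}}[|e_i|]\le\sqrt{C^-_\mu}\,\|e_i\|_\nu$; summing the geometric series contributes one $(1-\gamma)^{-1}$ and the value-function scale $(1-\gamma)^{-1}$ contributes the other, yielding the stated $(1-\gamma)^{-2}$. The $\ell_\infty$-controlled $\QRerr^2$ part propagates with no change of measure, which is exactly why it appears without the $\sqrt{C^-_\mu}$ factor. The value bound is the same unrolling integrated against $d_1$ instead of measured in $L_2(\mu)$, producing only a single $(1-\gamma)^{-1}$ from the geometric series together with the explicit initial-error term $\gamma^M$ (using $\|\wh q_0-Q^-\|_\infty\le(1-\gamma)^{-1}$).

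I expect the main obstacle to lie in the per-iteration step rather than the propagation: making rigorous that the quantile plug-in enters only at second order while \emph{simultaneously} absorbing the approximate-completeness gap through a misspecified least-squares bound, and keeping the two error sources in the correct norms ($L_2(\nu)$ for $\eps_n^\Qcal+\epsQComp$ versus $\ell_\infty$ for $\QRerr^2$) so that the concentrability factor $\sqrt{C^-_\mu}$ attaches only to the former. By contrast, the robust-contraction change-of-measure argument is standard once the inf-achieving kernel of \cref{lem:identification-U-robust} is in hand, so that part is mostly careful bookkeeping.
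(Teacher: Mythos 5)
Your proposal is correct and follows essentially the same route as the paper's proof: the per-iteration step (Bayes-optimal regression target equals the robust backup up to a second-order quantile bias via the CVaR orthogonality/margin argument, then misspecified least squares at the critical radius with \cref{asm:approx-robust-bc} absorbed as approximation error, union-bounded over iterations) is exactly the paper's argument, including the bookkeeping that attaches $\sqrt{C^\pm_\mu}$ only to $\eps_n^\Qcal+\epsQComp$ and keeps $\QRerr^2$ in $\ell_\infty$. The only difference is cosmetic and lies in the propagation step: the paper invokes packaged performance-difference and unrolling lemmas (\pref{lem:performance-difference} and \pref{lem:fqe-unrolling}, from \citet{chang2022learning}) applied to the MDP with the fixed worst-case kernel $U^\pm$, whereas you re-derive the same telescoping directly from the $\gamma$-contraction/linearization of $\Trob^-$ and assemble the pushforwards into $d^{-,\infty}$ -- the identical argument in unpackaged form (and with the same implicit glossing over which inf-achieving kernel appears in the linearization that the paper's ``by def of $U^\pm$'' step also makes).
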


\begin{proof}
Let $U^\pm$ denote the worst-case kernel that satisfies $V^\pm_{d_1} = (1-\gamma)\EE_{d_1}V^{\pit}_{U^\pm}(s_1)$. Then,
\begin{align*}
    V^\pm_{d_1}-(1-\gamma)\EE_{d_1}[\wh q^\pm_K(s_1,\pit)]
    &= (1-\gamma)\EE_{d_1}[V^{\pit}_{U^\pm}(s_1) - \wh q_K(s_1,\pit)]
    \\&= \EE_{d^{\pi,\infty}_{U^\pm}}[ \Tcal^{\pit}_{U^\pm} \wh q_K(s,a) - \wh q_K(s,a) ] \tag{\pref{lem:performance-difference}}
    \\&\leq \frac{4}{1-\gamma}\max_{k=1,2,\dots}\nm{ \wh q_k - \Tcal^{\pit}_{U^\pm} \wh q_{k-1} }_{d^{\pit,\infty}_{U^\pm}}+\gamma^{K/2}.\tag{\pref{lem:fqe-unrolling}}
\end{align*}
Consider any $k=1,2,\dots$.
By definition of $U^\pm$, we have
\begin{align*}
    \nm{ \wh q_k - \Tcal^{\pit}_{U^\pm} \wh q_{k-1} }_{d^{\pit,\infty}_{U^\pm}}
    =\nm{ \wh q_k - \Tcal_{\beta^\star_k}^\pm \wh q_{k-1} }_{d^{\pm,\infty}}, \tag{by def of $U^\pm$}
\end{align*}
where $\beta^\star_k(s,a)$ is the true quantile of $\wh v_{k-1}(s')$.
Denote $q^\star_k := \Trob^\pm\wh q_{k-1}$ and let $\beta_k^\star$ be the true upper/lower quantile of $\wh q_{k-1}$.
Recall the population loss function is
\begin{align*}
    L_k(q,\beta)
    &:= \EE\bracks{ \prns{ y_k^\beta(s,a,s')-q(s,a) }^2 }
    \\y_k^\beta(s,a,s')&=r(s,a)+\gamma\Lambda^{-1}(s,a)\wh v_{k-1}(s')
    \\  &+ \gamma(1-\Lambda^{-1}(s,a))\prns{ \beta(s,a)+\tau^{-1}(s,a)\prns{ \wh v_{k-1}(s')-\beta(s,a) }_\pm }.
\end{align*}
The empirical loss $\wh L_k(q,\beta)$ is if $\EE$ is replaced by $\EE_n$.
Note that $\wh q_k=\argmin_{q\in\Qcal}\wh L_k(q,\wh\beta_k)$.

\paragraph{Nonparametric Least Squares with Model Misspecification.}
We will directly invoke \citet[Theorem 13.13]{wainwright2019high},
which gives a fast rate for misspecified least squares with general nonparametric classes.
We now bound the misspecification.
Recall that at the $k$-th iteration, our regression Bayes-optimal is $\EE[y^{\wh\beta_k}_k(s,a,s')\mid s,a] = \Tcal_{\wh\beta_k}\wh q_{k-1}(s,a)$.
By \pref{lem:margin-guarantees}, we know this is close to $\Tcal_{\beta^\star_k}\wh q_{k-1}(s,a)$ with second order errors in $\beta$: for any $\mu$, we have
\begin{equation*}
    \nm{ \Tcal_{\wh\beta_k}^\pm\wh q_{k-1}-\Tcal_{\beta^\star_k}^\pm\wh q_{k-1} }_{d^{\pm,\infty}_\mu} \lesssim \|\wh\beta_k-\beta^\star_k\|_\infty^2.
\end{equation*}
Finally, by approximate completeness (\pref{asm:approx-robust-bc}), there exists $g\in\Qcal$ such that $\|\Tcal_{\beta^\star_k}\wh q_{k-1}(s,a)-g\|\leq\epsQComp$.
Putting this together: for any $k$, there exists a $g\in\Qcal$ such that
\begin{align*}
    \|g-\Tcal_{\wh\beta_k}\wh q_{k-1}(s,a)\|_{d^{\pm,\infty}_\mu}
    &\leq\|g-\Tcal_{\beta_k^\star}\wh q_{k-1}(s,a)\|_{d^{\pm,\infty}_\mu} + \|\Tcal_{\beta_k^\star}\wh q_{k-1}(s,a)-\Tcal_{\wh\beta_k}\wh q_{k-1}(s,a)\|_{d^{\pm,\infty}_\mu}
    \\&\leq\sqrt{C^\pm_\mu}\cdot\epsQComp + \|\wh\beta_k-\beta^\star_k\|_\infty^2.
\end{align*}
Therefore, \citet[Theorem 13.13]{wainwright2019high} (and concentration of least squares) certifies that:
\begin{align*}
    \nm{ \wh q_k-\Tcal_{\wh\beta_k}\wh q_{k-1} }_{d^{\pm,\infty}} \lesssim \sqrt{C^\pm_\mu}\cdot\prns{\epsQComp + \eps_n} + \|\wh\beta_k-\beta^\star_k\|_\infty^2.
\end{align*}
Therefore, we have proven:
\begin{align*}
    \nm{ \wh q_k-\Tcal_{\beta^\star_k}^\pm\wh q_{k-1} }_{d^{\pm,\infty}_\mu} 
    &\leq \nm{ \wh q_k-\Tcal_{\wh\beta_k}^\pm\wh q_{k-1} }_{d^{\pm,\infty}_\mu} + \nm{ \Tcal_{\wh\beta_k}^\pm\wh q_{k-1}-\Tcal_{\beta^\star_k}^\pm\wh q_{k-1} }_{d^{\pm,\infty}_\mu} 
    \\&\lesssim \sqrt{C^\pm_\mu}\cdot\prns{\epsQComp + \eps_n} + \|\wh\beta_k-\beta^\star_k\|_\infty^2.
\end{align*}
This concludes the proof.
\end{proof}

\begin{lemma}[Performance Difference]\label{lem:performance-difference}
For any $\pi$, transition kernel $P$, and function $f:\Scal\times\Acal\to\RR$, we have
\begin{align*}
    V^{\pi}_{P} - \EE_{s\sim d_1}[f(s,\pi)] = \frac{1}{1-\gamma}\EE_{d^{\pi,\infty}_{P}}[ \Tcal^{\pi}_Pf(s,a) - f(s,a) ].
\end{align*}
\end{lemma}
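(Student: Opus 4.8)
The plan is to prove the Performance Difference Lemma (\pref{lem:performance-difference}) via a standard telescoping argument using the discounted visitation distribution $d^{\pi,\infty}_P$. First I would recall the defining identity of the discounted visitation measure: for any $f:\Scal\to\RR$, the discounted occupancy satisfies the flow relation $d^{\pi,\infty}_P(s) = (1-\gamma)d_1(s) + \gamma\EE_{\tilde s\sim d^{\pi,\infty}_P,\tilde a\sim\pi(\tilde s)}[P(s\mid\tilde s,\tilde a)]$, which is exactly the Bellman flow equation appearing in \pref{eq:robust-w-moments} specialized to the nominal kernel. The key observation is that by definition $\Tcal^\pi_P f(s,a) - f(s,a) = r(s,a) + \gamma\EE_P[f(s',\pi)\mid s,a] - f(s,a)$, so the right-hand side of the lemma splits into a reward term, which recovers $V^\pi_P$ up to the $(1-\gamma)^{-1}$ normalization, and a difference of value terms that telescopes.

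Concretely, I would write $\EE_{d^{\pi,\infty}_P}[\Tcal^\pi_P f - f] = \EE_{d^{\pi,\infty}_P}[r(s,a)] + \gamma\EE_{d^{\pi,\infty}_P}[\EE_P[f(s',\pi)\mid s,a]] - \EE_{d^{\pi,\infty}_P}[f(s,a)]$. The first term equals $(1-\gamma)V^\pi_P$ by the series definition of the discounted visitation distribution and the $Q$-function. For the remaining two terms, I would apply the Bellman flow identity to the $f$-terms: the quantity $\gamma\EE_{d^{\pi,\infty}_P}[\EE_P[f(s',\pi)\mid s,a]]$ is precisely $\EE_{s\sim d^{\pi,\infty}_P}[f(s,\pi)] - (1-\gamma)\EE_{s\sim d_1}[f(s,\pi)]$, obtained by testing the flow equation against $f$. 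Substituting this in, the $\EE_{d^{\pi,\infty}_P}[f(s,\pi)]$ pieces cancel, leaving exactly $(1-\gamma)V^\pi_P - (1-\gamma)\EE_{s\sim d_1}[f(s,\pi)]$. Dividing through by $(1-\gamma)$ yields the claimed identity.

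The only mild subtlety — rather than a genuine obstacle — is bookkeeping the normalization constant $(1-\gamma)$ correctly, since $d^{\pi,\infty}_P$ is the normalized discounted occupancy (it integrates to one), so the reward integral gives $(1-\gamma)V^\pi_P$ rather than $V^\pi_P$ itself. An equivalent and perhaps cleaner route is the direct telescoping expansion: unrolling $V^\pi_P(s_1) - f(s_1,\pi)$ along the trajectory, one writes $f(s_1,\pi) = \EE_\pi[\sum_{t\geq1}\gamma^{t-1}(f(s_t,a_t) - \gamma f(s_{t+1},a_{t+1}))]$ as a telescoping sum, then adds and subtracts the Bellman residuals to match each $\gamma^{t-1}r_t$ term. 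I would present whichever form is most consistent with the conventions already fixed in \pref{sec:prelim}; given that the paper states the Bellman flow equation explicitly, the flow-testing argument is the most economical and I would adopt it.
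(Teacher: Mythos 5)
Your proof is correct, but it takes a different route from the paper in a trivial sense: the paper does not actually prove this lemma — its entire ``proof'' is a one-line deferral to Lemma C.1 of \citet{chang2022learning}. Your flow-testing argument is sound and self-contained: since $d^{\pi,\infty}_P$ is the \emph{normalized} occupancy, the reward term gives $\EE_{d^{\pi,\infty}_P}[r(s,a)] = (1-\gamma)\,\EE_{s_1\sim d_1}[V_{\pi,P}(s_1)]$, and testing the Bellman flow equation against $f(\cdot,\pi)$ turns the remaining two terms into $\EE_{d^{\pi,\infty}_P}[f(s,\pi)] - (1-\gamma)\EE_{d_1}[f(s,\pi)] - \EE_{d^{\pi,\infty}_P}[f(s,\pi)]$, so the occupancy terms cancel and what is left is exactly $(1-\gamma)\bigl(V^{\pi}_P - \EE_{d_1}[f(s,\pi)]\bigr)$; dividing by $(1-\gamma)$ gives the claim. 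One point worth stating explicitly in a write-up: for the identity to hold as written (and to match how the lemma is invoked in the robust FQE proof, where both sides get multiplied by $1-\gamma$), the symbol $V^{\pi}_P$ must be read as the \emph{unnormalized} initial-state value $\EE_{s_1\sim d_1}[V_{\pi,P}(s_1)]$, not the $(1-\gamma)$-scaled quantity $V_{d_1}$ defined in \pref{sec:prelim}; your bookkeeping uses this convention correctly but never names it. What each approach buys: the paper's citation is maximally brief but leaves the appendix dependent on an external reference (with possibly mismatched conventions for normalization); your argument — either the flow-testing version or the equivalent trajectory-telescoping version you sketch — makes the result self-contained and verifiable against this paper's own definitions, which is preferable here precisely because the normalization convention is the only place one could go wrong.
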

\begin{proof}
See Lemma C.1 of \citep{chang2022learning}.
\end{proof}

\begin{lemma}[Unrolling]\label{lem:fqe-unrolling}
For any $\pi$, transition kernel $P$, and functions $f_0,f_1,\dots,f_K:\Scal\times\Acal\to\RR$ satisfying $f_0(s,a)=0$, we have $\nm{f_K-\Tcal^{\pi}_P f_K}_{d^{\pi,\infty}_P}\leq \frac{4}{1-\gamma}\max_{k=1,2,\dots}\nm{ f_k - \Tcal^\pi_P f_{k-1} }_{d^{\pi,\infty}_P}+\gamma^{K/2}$.
\end{lemma}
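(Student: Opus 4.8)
The plan is to linearize the iteration by subtracting the fixed point of $\Tcal^\pi_P$, and then exploit a contraction of the discounted transition operator in the visitation-weighted norm. Write $P^\pi g(s,a) := \EE_P[g(s',\pi)\mid s,a]$ so that $\Tcal^\pi_P g = r + \gamma P^\pi g$, let $Q := Q^\pi_P$ be the fixed point, set $A := \gamma P^\pi$, and define $g_k := f_k - Q$. Since $\Tcal^\pi_P$ is affine with $\Tcal^\pi_P Q = Q$, a short computation gives $f_k - \Tcal^\pi_P f_{k-1} = g_k - A g_{k-1}$, so writing $\delta_k := f_k - \Tcal^\pi_P f_{k-1}$ we obtain the linear recursion $g_k = A g_{k-1} + \delta_k$. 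Unrolling from $g_0 = f_0 - Q = -Q$ yields $g_K = A^K g_0 + \sum_{k=1}^K A^{K-k}\delta_k$, and the quantity to bound is the residual $f_K - \Tcal^\pi_P f_K = (I - A)g_K$, measured in $\nm{\cdot}_{d^{\pi,\infty}_P}$.

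The crux is the contraction estimate $\nm{A g}_{d^{\pi,\infty}_P} \le \sqrt\gamma\,\nm{g}_{d^{\pi,\infty}_P}$, which I establish from the Bellman flow identity $d^{\pi,\infty}_P = (1-\gamma)\,d_1^\pi + \gamma\,d^{\pi,\infty}_P P^\pi$ (here $d_1^\pi$ is the policy-weighted initial state-action distribution). Because the first summand is a nonnegative measure, this gives the domination $\gamma\,d^{\pi,\infty}_P P^\pi \preceq d^{\pi,\infty}_P$. Two applications of Jensen's inequality — first for the conditional expectation defining $P^\pi$, then for the average of $g(s',\cdot)$ over $\pi$ — yield $\nm{P^\pi g}_{d^{\pi,\infty}_P}^2 \le \EE_{d^{\pi,\infty}_P P^\pi}[g^2]$, while the domination gives $\gamma\,\EE_{d^{\pi,\infty}_P P^\pi}[g^2] \le \EE_{d^{\pi,\infty}_P}[g^2] = \nm{g}_{d^{\pi,\infty}_P}^2$. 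Chaining these two bounds produces $\gamma\,\nm{P^\pi g}_{d^{\pi,\infty}_P}^2 \le \nm{g}_{d^{\pi,\infty}_P}^2$, i.e. $\nm{A g}_{d^{\pi,\infty}_P} = \gamma\,\nm{P^\pi g}_{d^{\pi,\infty}_P} \le \sqrt\gamma\,\nm{g}_{d^{\pi,\infty}_P}$, and in particular $\nm{(I-A)h}_{d^{\pi,\infty}_P} \le (1+\sqrt\gamma)\,\nm{h}_{d^{\pi,\infty}_P}$.

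With the contraction in hand I bound $(I-A)g_K$ term by term. For the initial-condition term I use that $A$ commutes with $I-A$ together with $(I-A)(-Q) = -(Q - \gamma P^\pi Q) = -r$ and $\nm{r}_\infty \le 1$: thus $\nm{(I-A)A^K g_0}_{d^{\pi,\infty}_P} = \nm{A^K r}_{d^{\pi,\infty}_P} \le \gamma^{K/2}\nm{r}_\infty \le \gamma^{K/2}$, which is precisely what yields the clean burn-in term rather than a spurious $(1-\gamma)^{-1}\gamma^{K/2}$ factor. For the error sum, the triangle inequality, the bound $\nm{(I-A)A^{K-k}\delta_k}_{d^{\pi,\infty}_P} \le (1+\sqrt\gamma)\gamma^{(K-k)/2}\nm{\delta_k}_{d^{\pi,\infty}_P}$, and the geometric series $\sum_{k=1}^K \gamma^{(K-k)/2} \le (1-\sqrt\gamma)^{-1}$ give a bound $\tfrac{1+\sqrt\gamma}{1-\sqrt\gamma}\max_k \nm{\delta_k}_{d^{\pi,\infty}_P}$; since $\tfrac{1+\sqrt\gamma}{1-\sqrt\gamma} = \tfrac{(1+\sqrt\gamma)^2}{1-\gamma} \le \tfrac{4}{1-\gamma}$, adding the two contributions gives exactly the claimed bound.

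I expect the main obstacle to be the weighted-norm contraction in the second paragraph: the delicate points are extracting the measure domination $\gamma\,d^{\pi,\infty}_P P^\pi \preceq d^{\pi,\infty}_P$ from the Bellman flow equation and correctly composing the two Jensen steps, since the naive sup-norm contraction only furnishes a factor $\gamma$ rather than the $\sqrt\gamma$ that the $L_2(d^{\pi,\infty}_P)$ geometry provides and that the geometric sum needs. The commutativity identity $(I-A)A^K(-Q) = A^K(-r)$ is the other key idea, as it is what sharpens the burn-in constant to the stated $\gamma^{K/2}$; the remaining constant arithmetic ($\tfrac{1+\sqrt\gamma}{1-\sqrt\gamma}\le\tfrac{4}{1-\gamma}$) is routine.
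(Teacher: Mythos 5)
Your proof is correct. Be aware that the paper does not actually prove this lemma itself --- its ``proof'' is a pointer to Lemma~C.2 of \citet{chang2022learning} --- so yours is a self-contained reconstruction, and it follows what is essentially the standard argument behind that cited result. Every step checks out: the linearization $g_k = f_k - Q^\pi_P$ turns the iteration into $g_k = Ag_{k-1}+\delta_k$ with $A=\gamma P^\pi$ and correctly identifies the target residual as $(I-A)g_K$; the Bellman flow identity $d^{\pi,\infty}_P=(1-\gamma)\,d_1^\pi+\gamma\, d^{\pi,\infty}_P P^\pi$ plus two applications of Jensen give the key contraction $\nm{Ag}_{d^{\pi,\infty}_P}\le\sqrt{\gamma}\,\nm{g}_{d^{\pi,\infty}_P}$; the commutation identity $(I-A)A^K(-Q^\pi_P)=A^K(-r)$ combined with the paper's standing assumption $r(s,a)\in[0,1]$ yields the clean $\gamma^{K/2}$ burn-in term (this trick is genuinely needed --- bounding $(I-A)A^K Q^\pi_P$ naively through $\nm{Q^\pi_P}_\infty\le(1-\gamma)^{-1}$ would introduce a spurious $(1-\gamma)^{-1}$ factor); and the geometric series together with $\frac{1+\sqrt{\gamma}}{1-\sqrt{\gamma}}=\frac{(1+\sqrt{\gamma})^2}{1-\gamma}\le\frac{4}{1-\gamma}$ gives the stated constant. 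The only implicit dependence worth flagging is that the unscaled $\gamma^{K/2}$ term requires $\nm{r}_\infty\le 1$, which is part of this paper's setup rather than a hypothesis stated in the lemma itself; with that caveat noted, the argument is complete and valid in context.
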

\begin{proof}
See Lemma C.2 of \citep{chang2022learning}.
\end{proof}

\section{Proofs for Robust Minimax Algorithm}\label{app:proofs-minimax}
\begin{assumption}[Approximate $W$-realizability and completeness]\label{asm:approximate-w-realizability-and-completeness}
Assume the following hold for $\Wcal$ and $\Fcal$:
\\(A) Approximate realizability: $\min_{w\in\Wcal}\nm{ \Jcal_{U^\pm}(w^\pm-w) }_2\leq\epsWReal$; 
\\(B) Approximate completeness: $\max_{w\in\Wcal}\min_{f\in\Fcal}\nm{f-\Jcal_{U^\pm}'(w-w^\pm)}_2\leq\epsWComp$.
\end{assumption}

We prove a more general result with approximate realizability and completeness, which implies \pref{thm:minimax-fast-rates-for-w} that is robust to misspecification in its assumptions.
\begin{theorem}
Under \pref{asm:approximate-w-realizability-and-completeness} and the same setup as \pref{thm:minimax-fast-rates-for-w}, we have
\begin{align*}
    \nm{\Jcal_{U^\pm}'(\wh w-w^\pm)}_2\lesssim \eps_n^\Wcal+\|\wt\zeta^\pm-\zeta^\pm\|_\infty+\sqrt{ \frac{\log(1/\delta)}{n} }+\epsWReal+\epsWComp.
\end{align*}
\end{theorem}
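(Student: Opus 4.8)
The plan is to follow the localized-minimax analysis of \citet{uehara2021finite}, adapted to the robust MDP, while explicitly tracking the two extra error sources beyond the parametric fluctuation: the misspecification budgets $\epsWReal,\epsWComp$ from \pref{asm:approximate-w-realizability-and-completeness} and the plug-in bias from using $\wt\zeta$ in place of $\zeta^-$. First I would fix notation for the moment functional. Write $g_f(s,a,s'):=\gamma\xi^-(s,a,s')f(s',\pit)-f(s,a)$, so the \RobustMIL objective is $\EE_n[w(s,a)g_f]+(1-\gamma)\EE_{d_1}[f(s_1,\pit)]-\lambda\nm{g_f}_{2,n}^2$, and let $g_f^\star$ denote the same quantity built from the \emph{true} threshold $\zeta^-$, so that $\EE[\xi^- f(s',\pit)\mid s,a]=\EE_{U^-}[f(s',\pit)\mid s,a]$ and $\EE[g_f^\star\mid s,a]=\Jcal_{U^-}f(s,a)$. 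The key population identity is that, for $L(w,f):=\EE[w\,g_f^\star]+(1-\gamma)\EE_{d_1}[f(s_1,\pit)]$, we have $L(w,f)-L(w^-,f)=\langle w-w^-,\Jcal_{U^-}f\rangle_\nu=\langle\Jcal_{U^-}'(w-w^-),f\rangle_\nu$, while the Bellman-flow moment \pref{eq:robust-w-moments} gives $L(w^-,f)=0$ for all $f$. This identity is what lets the adversary $f$ ``read off'' the projected error $\Jcal_{U^-}'(w-w^-)$ that we wish to bound.

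Second, I would control the plug-in bias from $\wt\zeta$. The weights $\xi^-(\cdot;\wt\zeta)$ and $\xi^-(\cdot;\zeta^-)$ differ only on the event $\{\II[\wt\zeta\le0]\neq\II[\zeta^-\le0]\}$, which forces $|\zeta^-|\le\nm{\wt\zeta-\zeta^-}_\infty$; by the regularity \pref{assum:regular}(ii) (bounded density of $V^-(s')-\beta^-$ around $0$) this event has $\nu$-measure $\Ocal(\nm{\wt\zeta-\zeta^-}_\infty)$. Since $\Wcal,\Fcal$ are bounded, this yields $\nm{g_f-g_f^\star}=\Ocal(\nm{\wt\zeta-\zeta^-}_\infty)$ uniformly, hence an $\Ocal(\nm{\wt\zeta^--\zeta^-}_\infty)$ perturbation of the objective, producing that term in the bound.

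Third is the statistical heart of the argument: a localized uniform-deviation bound relating the empirical and population objectives over $w\in\Wcal,f\in\Fcal$. Here I would invoke the critical-radius machinery of \pref{app:critical-radius} on exactly the classes $\Gcal_1$ and $\Gcal_2$ named in the hypotheses -- $\Gcal_1$ controlling the adversary residual $f(s,a)-\gamma f(s',\pit)$ and $\Gcal_2$ controlling the product $(w-w^-)g_f$ -- and use the stabilizer $\lambda\nm{g_f}_{2,n}^2$ to localize, so that fluctuations scale with the second moment of $g_f$ rather than with its envelope. Combined with a Talagrand/Bernstein concentration step, this gives empirical-to-population closeness of order $\eps_n^\Wcal+\sqrt{\log(1/\delta)/n}$ (up to the local radius). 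I expect this localization step -- correctly pairing the stabilizer weight $\lambda$ with $\eps_n^\Wcal$ and showing that the resulting self-bounding (fixed-point) inequality closes -- to be the main obstacle, as it is the most delicate part of the minimax analysis it generalizes.

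Finally, I would assemble the pieces. Optimality of $\wh w$ in the empirical minimax problem, together with approximate realizability (\pref{asm:approximate-w-realizability-and-completeness}(A), giving a near-feasible comparator in $\Wcal$ whose population value is within $\epsWReal$), upper-bounds the population objective at $\wh w$ by the error terms already collected. Approximate completeness (\pref{asm:approximate-w-realizability-and-completeness}(B)) then guarantees that $\Fcal$ contains, up to $\epsWComp$, a direction aligned with $\Jcal_{U^-}'(\wh w-w^-)$; inserting that near-optimal adversary into the maximization converts the population objective into $\nm{\Jcal_{U^-}'(\wh w-w^-)}_2$, with $\lambda$ taken small enough that the stabilizer penalty is absorbed into the error terms. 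Chaining these inequalities and carrying the $\pm$ symbol through verbatim (the best-case case is identical with $\sup$, $\cvar^+$, and the corresponding $U^+$) yields $\nm{\Jcal_{U^\pm}'(\wh w-w^\pm)}_2\lesssim \eps_n^\Wcal+\nm{\wt\zeta^\pm-\zeta^\pm}_\infty+\sqrt{\log(1/\delta)/n}+\epsWReal+\epsWComp$, and \pref{thm:minimax-fast-rates-for-w} is recovered by setting $\epsWReal=\epsWComp=0$.
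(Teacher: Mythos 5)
Your outline is correct, but it takes a genuinely different route from the paper's proof. You re-derive the minimax analysis from first principles in the robust MDP: the population identity $L(w,f)-L(w^-,f)=\langle \Jcal_{U^-}'(w-w^-),f\rangle_\nu$ together with $L(w^-,\cdot)\equiv 0$ from \pref{eq:robust-w-moments}, a basic inequality from empirical optimality of $\wh w$, localized concentration over $\Gcal_1,\Gcal_2$ with the stabilizer, and approximate completeness to convert the population objective into $\nm{\Jcal_{U^-}'(\wh w-w^-)}_2$, with the $\wt\zeta$ error entering as a uniform perturbation of the objective. The paper instead argues by \emph{reduction}: since $\xi^\pm\le\tau^{-1}<\infty$, up to $\Ocal(\sqrt{\log(1/\delta)/n})$ concentration error \RobustMIL is exactly the MIL algorithm of \citet{uehara2021finite} run in the MDP whose kernel is the \emph{estimated} robust kernel $\wh P$; it then invokes their Theorem 6.1 as a black box, which requires (a) bounding the realizability/completeness misspecification \emph{in the $\wh P$-MDP} by $\epsWReal+\epsWComp+\|\wt\zeta-\zeta^\star\|_\infty$ (via the margin lemma, the TV bound $\EE\|\wh P(\cdot\mid s,a)-P^\star(\cdot\mid s,a)\|_{\TV}\lesssim\|\wt\zeta-\zeta^\star\|_\infty$, and the visitation performance-difference lemma comparing $w_{\wh P}$ with $w_{P^\star}$), and (b) a final triangle inequality translating $\nm{\Jcal_{\wh P}'(\wh w-w_{\wh P})}_2$ back to $\nm{\Jcal_{P^\star}'(\wh w-w_{P^\star})}_2$. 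What each buys: the paper's reduction inherits all the localization machinery for free but pays with bookkeeping for the intermediate MDP and its density ratio $w_{\wh P}$ -- objects your direct route never needs to introduce -- while your route keeps the target norm in view throughout, but the step you yourself flag as the main obstacle (closing the localized fixed-point inequality with the stabilizer under the $\xi$-perturbation) is precisely the content of Uehara et al.'s proof, so executing your plan amounts to inlining and adapting that argument rather than citing it. Two cautions if you carry it out: the bound $\|g_f-g_f^\star\|=\Ocal(\|\wt\zeta-\zeta^-\|_\infty)$ holds in $L_1$ (hence for the bias of the linear term and of the stabilizer), but in $L_2$ the indicator mismatch only yields the square root $\Ocal(\|\wt\zeta-\zeta^-\|_\infty^{1/2})$, so track which norm each step actually requires; and the empirical perturbation $\EE_n[w(g_f-g_f^\star)]$ needs the boundedness $\xi^-\le\tau^{-1}$ (as the paper uses) to keep its fluctuation at $\Ocal(\sqrt{\log(1/\delta)/n})$.
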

\begin{proof}
For this proof, we focus on the worst-case kernel $P^\star$ of the form $\frac{P^\star(s'\mid s,a)}{P(s'\mid s,a)}=\tau^{-1}(s,a)\II\bracks*{\zeta^\star(s,a,s')\leq 0}$ where $\zeta^\star(s,a,s')=V^-(s')-\beta^-(s,a)$. This corresponds to the pure CVaR case of $\Trob^-$; the $\EE$ part is identical to standard non-robust RL so we omit it. The best-case kernel $U^+$ can be handled similarly. 
Let $\wh P(s'\mid s,a)$ denote our estimated robust kernel, which satisfies $\frac{\wh P(s'\mid s,a)}{P(s'\mid s,a)}=\tau^{-1}(s,a)\II\bracks*{\wh\zeta(s,a,s')\leq 0}$, where $\wh\zeta(s,a,s')$ is the given prior stage estimate of $\zeta^\star(s,a,s') = V^-(s')-\beta^-(s,a)$.

The key and only difference between our \pref{alg:robust-minimax} and the MIL algorithm ($\wh w_{\text{mil}}$) of \citet{uehara2021finite} is that our next-state samples are importance weighted with $\xi^\pm(s,a,s')$, which is the density ratio of the estimated robust kernel $\wh P(s'\mid s,a)$ and the nominal kernel $P(s'\mid s,a)$. 
Note also that $\xi^\pm(s,a,s')\leq \tau^{-1}(s,a)<\infty$, and hence $\abs*{\EE_n[ \zeta(s,a,s')f(s') ]-\EE_{s,a\sim\nu,s'\sim\wh P(s,a)}[f(s')]}\lesssim \sqrt{\log(1/\delta)/n}$ w.p. $1-\delta$. 
Therefore, up to $\Ocal(\sqrt{\log(1/\delta)/n})$ errors, our \pref{alg:robust-minimax} can be viewed as MIL applied to the MDP with kernel $\wh P$.

To invoke the result of \citet[Theorem 6.1]{uehara2021finite} (in MDP with kernel $\wh P$), we need to show that its assumptions are met by bounding the model misspecification, \ie, Eq.\,(6) and Appendix C of \citet{uehara2021finite}. Note that these misspecifications are w.r.t. the MDP with kernel $\wh P$, since this is the MDP in which we're applying Theorem 6.1 of \citet{uehara2021finite}. 
Specifically, the two errors we need to bound are, 
(A) approximate realizability: $\eps_A=\min_{w\in\Wcal}\|\Jcal_{\wh P}'(w_{\wh P} - w)\|_2$; 
and (B) approximate completeness: $\eps_B = \max_{w\in\Wcal}\min_{f\in\Fcal}\|f-\Jcal_{\wh P}'(w-w_{\wh P})\|_2$ where recall that $\Jcal_P$ is the linear operator defined as $\Jcal_P f(s,a) := \gamma\EE_P[f(s',\pit)\mid s,a]-f(s,a)$ and $\Jcal_P'$ is the adjoint.

\paragraph{Bounding misspecifications by $\|\wh\zeta-\zeta^\star\|_\infty$.}
Since $\zeta^\star(s,a,s')$ has a marginal CDF that's boundedly differentiable around $0$ (\ie, (ii) of \pref{assum:regular}), \citet[Lemma 3]{kallus2022s} implies that $\zeta^\star(s,a,s')$ satisfies a $1$-margin (\pref{def:margin}). Hence, \pref{lem:margin-guarantees} and the continuity of $\zeta^\star(s,a,s')$ implies that 
\begin{align*}
    &\Pr\prns{\II\bracks*{\wh\zeta(s,a,s')\leq 0}\neq \II\bracks*{\zeta^\star(s,a,s')\leq 0}} 
    \\&=\Pr\prns{(\II\bracks*{\wh\zeta(s,a,s')\leq 0}\neq \II\bracks*{\zeta^\star(s,a,s')\leq 0}), \zeta^\star(s,a,s')\neq 0} 
    \lesssim \|\wh\zeta-\zeta^\star\|_\infty,
\end{align*}
Thus, for any $v:\Scal\to\RR$,
\begin{align*}\EE\abs{(\EE_{\wh P}-\EE_{P^\star})[ v(s')\mid s,a ]} &\leq
\EE[\tau^{-1}(s,a)\prns*{\II\bracks*{\wh\zeta(s,a,s')\leq 0}\neq \II\bracks*{\zeta^\star(s,a,s')\leq 0}}\cdot\abs{v(s')}]
\\&\lesssim 
\|v\|_\infty\cdot \Pr\prns{\II\bracks*{\wh\zeta(s,a,s')\leq 0}\neq \II\bracks*{\zeta^\star(s,a,s')\leq 0}} 
\\&\lesssim \|v\|_\infty\|\wh\zeta-\zeta^\star\|_\infty,
\end{align*}
or equivalently
\begin{equation}
    \EE\|\wh P(\cdot\mid s,a)-P^\star(\cdot\mid s,a)\|_{\TV}\lesssim \|\wh\zeta-\zeta^\star\|_\infty. \label{eq:tv-p-bound}
\end{equation}

Equipped with \pref{eq:tv-p-bound}, we can now bound the following two types of errors: (i) $\langle f,(\Tcal_{P^\star}-\Tcal_{\wh P})g\rangle$, and (ii) $\langle w_{\hat P}-w_{P^\star},h\rangle$, where $f,g:\Scal\times\Acal\to\RR$ and $h:\Scal\to\RR$, and $\Tcal_{P}$ and $w_{P}$ are the Bellman operator and visitation density of target policy $\pit$ in the MDP with kernel $P$.

For (i):
\begin{align*}
    \abs{\langle f,(\Jcal_{P^\star}-\Jcal_{\wh P}) g\rangle}
    &=\abs{\EE[ f(s,a)\prns{ \gamma(\EE_{P^\star}-\EE_{\wh P})[g(s',\pit)\mid s,a] } ]}
    \\&\leq \gamma\|f\|_\infty \EE\abs{ (\EE_{P^\star}-\EE_{\wh P})[g(s',\pit)\mid s,a] }
    \\&\lesssim \gamma\|f\|_\infty\|g(\cdot,\pit)\|_\infty \|\wh\zeta-\zeta^\star\|_\infty.
\end{align*}
For (ii):
\begin{align*}
    \langle w_{\wh P}-w_{P^\star},h\rangle
    &=\EE[ (w_{\wh P}(s)-w_{P^\star}(s))h(s) ]
    \\&\leq \|h\|_\infty \|d_{\wh P}-d_{P^\star}\|_{\TV}
    \\&\leq \|h\|_\infty \frac{\gamma}{1-\gamma} \EE_{d_{P^\star}}\|\wh P(\cdot\mid s,a)-P^\star(\cdot\mid s,a)\|_{\TV} \tag{\pref{eq:pdl-for-visitations}}
    \\&\lesssim C\|h\|_\infty \frac{\gamma}{1-\gamma} \EE\|\wh P(\cdot\mid s,a)-P^\star(\cdot\mid s,a)\|_{\TV} \tag{\pref{assum:regular}(i)}
    \\&\lesssim C\|h\|_\infty \frac{\gamma}{1-\gamma} \|\wh\zeta-\zeta^\star\|_{\infty},
\end{align*}
where $C=\nm*{ \nicefrac{\diff d^{P^\star}}{\diff\nu} }_{\infty}<\infty$.

For approximate realizability ($\eps_A$): for any $w\in\Wcal$, we have
\begin{align*}
    &\|\Jcal_{\wh P}'(w_{\wh P} - w)\|_2
    \\&\leq \|(\Jcal_{\wh P}-\Jcal_{P^\star})'(w_{\wh P}-w)\|_2 + \|\Jcal_{P^\star}'(w_{\wh P}-w_{P^\star})\|_2 + \|\Jcal_{P^\star}'(w^\star-w)\|_2
    \\&= \langle w_{\wh P}-w, (\Jcal_{\wh P}-\Jcal_{P^\star}) g_1\rangle + \langle w_{\wh P}-w_{P^\star}, \Jcal_{P^\star} g_2\rangle + \|\Jcal_{P^\star}'(w^\star-w)\|_2
    \\&\lesssim \nm*{ \wh\zeta-\zeta^\star }_{\infty} + \|\Jcal_{P^\star}'(w^\star-w)\|_2
\end{align*}
where $g_1=\prns*{(\Jcal_{P^\star}-\Jcal_{\wh P})'(w_{\wh P}-w)}/\nm*{(\Jcal_{P^\star}-\Jcal_{\wh P})'(w_{\wh P}-w)}_2$, $g_2 = \prns*{\Jcal_{P^\star}'(w_{\wh P}-w_{P^\star})}/\nm*{\Jcal_{P^\star}'(w_{\wh P}-w_{P^\star})}_2$.
The last inequality uses (i) and (ii) with the fact that $\|g_1\|_\infty <\infty$ and $\|g_2\|_\infty<\infty$ as the $w$ terms are bounded by our premise.
Therefore, taking min over $w$ and using \pref{asm:approximate-w-realizability-and-completeness}, we have $\eps_A\lesssim\|\wh\zeta-\zeta^\star\|_{\infty}+\epsWReal$.

For approximate completeness ($\eps_B$): for any $w\in\Wcal$ and $f\in\Fcal$, we have
\begin{align*}
    &\|f-\Jcal_{\wh P}'(w-w_{\wh P})\|_2
    \\&\leq \|f-\Jcal'_{P^\star}(w-w_{P^\star})\|_2 + \|(\Jcal_{P^\star}-\Jcal_{\wh P})'(w-w_{P^\star})\|_2 + \|\Jcal_{P^\star}'(w_{\wh P}-w_{P^\star})\|_2
    \\&\lesssim \|f-\Jcal'_{P^\star}(w-w_{P^\star})\|_2 + \nm*{ \wh\zeta-\zeta^\star }_\infty,
\end{align*}
for the same reason as $\eps_A$ as the error terms are the same. 
Thus, $\eps_B\lesssim \|\wh\zeta-\zeta^\star\|_\infty + \epsWComp$.

In sum, we have shown that the misspecification is at most $\Ocal(\|\wh\zeta-\zeta^\star\|_\infty+\epsWReal+\epsWComp)$. Therefore, \citet[Theorem 6.1 and Appendix C]{uehara2021finite} ensures that w.p. $1-\delta$, our learned $\wh w$ satisfies,
\begin{align*}
    \nm{ \Jcal_{\wh P}'(\wh w- w_{\wh P}) }_2\lesssim \eps_n^\Wcal+\|\wh\zeta-\zeta^\star\|_\infty+\epsWReal+\epsWComp + \sqrt{\log(1/\delta)/n}.
\end{align*}

\paragraph{Concluding the proof.}
The final step is to translate the above guarantee to $\nm*{ \Jcal_{P^\star}'(\wh w- w_{P^\star}) }_2$.
The following shows that the switching cost is $\Ocal(\|\wh\zeta-\zeta^\star\|_\infty)$ as before:
\begin{align*}
    &\nm*{ \Jcal_{P^\star}'(\wh w -w_{P^\star}) }_2
    \\&\leq\nm*{ (\Jcal_{P^\star}-\Jcal_{\wh P})' (\wh w- w_{P^\star}) }_2 + \nm*{ \Jcal_{\wh P}' (\wh w- w_{\wh P}) }_2 + \nm*{ \Jcal_{\wh P}' (w_{\wh P}- w_{P^\star}) }_2
    \\&\lesssim \eps_n^\Wcal+\|\wh\zeta-\zeta^\star\|_\infty+\epsWReal+\epsWComp+\sqrt{\log(1/\delta)/n}.
\end{align*}
This concludes the proof.
\end{proof}

\begin{lemma}[Visitation performance-difference]\label{lem:performance-diff-visitation}
Let $P,U:\Scal\to\RR_+$ be non-negative measures, which should be thought of as transitions in a discounted Markov chain.
Assume $U$ satisfies $\sum_{s'} U(s'\mid s)\leq 1$.
Define $d_U = (1-\gamma)\sum_{h=1}^\infty \gamma^{h-1}d_U^h$, where $d_U^h = \int_{s_1,s_2,\dots,s_{h-1}}d_1(s_1)U(s_2\mid s_1)\dots U(s\mid s_{h-1})\diff s_{1:h-1}$.
Assume the same for $P$.

Let $\Fcal\subset\Scal\to\RR$ be a function class that satisfies $f\in\Fcal\implies g(s)=\EE_{s'\sim P(s)}[f(s')]\in\Fcal$, \ie, closed under projection with $P$.
Then, define the integral (probability) metric $\|P-U\|_\Fcal:=\sup_{f\in\Fcal}\abs{ (\EE_P-\EE_U)[f(s)] }$.
Then we have,
\begin{align}
    \nm{d_P-d_U}_\Fcal\leq \frac{\gamma}{1-\gamma}\EE_{d_U}\nm{P(\cdot\mid s)-U(\cdot\mid s)}_{\Fcal}. \label{eq:tv-performance-diff}
\end{align}
\end{lemma}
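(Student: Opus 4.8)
The plan is to fix an arbitrary test function $f\in\Fcal$, bound $(\EE_{d_P}-\EE_{d_U})[f]$ by the right-hand side uniformly in $f$, and then take the supremum to obtain $\nm{d_P-d_U}_\Fcal$. Writing $\Pi_P f(s):=\EE_{s'\sim P(\cdot\mid s)}[f(s')]$ and $\Pi_U$ analogously, the definition of the discounted occupancy gives $\EE_{d_U}[f]=(1-\gamma)\sum_{h\ge1}\gamma^{h-1}\EE_{d_1}[\Pi_U^{\,h-1}f]$ and likewise for $P$, so that $(\EE_{d_P}-\EE_{d_U})[f]=(1-\gamma)\sum_{h\ge1}\gamma^{h-1}\EE_{d_1}[(\Pi_P^{\,h-1}-\Pi_U^{\,h-1})f]$. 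The whole argument then rests on controlling the operator differences $\Pi_P^{k}-\Pi_U^{k}$ termwise.

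The key step — and the one that makes the closure hypothesis on $\Fcal$ do its work — is the telescoping identity $\Pi_P^{k}-\Pi_U^{k}=\sum_{j=0}^{k-1}\Pi_U^{\,j}(\Pi_P-\Pi_U)\Pi_P^{\,k-1-j}$. I would choose this direction deliberately: the $U$-powers sit on the left (toward the initial distribution) and the $P$-powers on the right (toward the value side). Applying $\EE_{d_1}[\Pi_U^{\,j}\,\cdot\,]$ reproduces a step occupancy, $\EE_{d_1}[\Pi_U^{\,j}\phi]=\EE_{d_U^{\,j+1}}[\phi]$, while the inner function is $\phi=(\Pi_P-\Pi_U)\Pi_P^{\,k-1-j}f$. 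Because $\Fcal$ is closed under $\Pi_P$, the function $\Pi_P^{\,k-1-j}f$ is again in $\Fcal$ — crucially a single power, never an infinite sum, which is exactly why I avoid a resolvent/Neumann-series argument that would instead require $\Fcal$ to be closed under addition. Hence $|\phi(s)|=|(\EE_{P(\cdot\mid s)}-\EE_{U(\cdot\mid s)})[\Pi_P^{\,k-1-j}f]|\le\nm{P(\cdot\mid s)-U(\cdot\mid s)}_\Fcal$ straight from the definition of the IPM, and so $|\EE_{d_U^{\,j+1}}[\phi]|\le\EE_{d_U^{\,j+1}}[\Delta]$ with $\Delta(s):=\nm{P(\cdot\mid s)-U(\cdot\mid s)}_\Fcal$, using that each $d_U^{\,j+1}$ is a nonnegative measure — here the hypothesis $\sum_{s'}U(s'\mid s)\le1$ guarantees the step measures are well defined and of mass at most one.

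Finally I would collect the bound $|(\EE_{d_P}-\EE_{d_U})[f]|\le(1-\gamma)\sum_{h\ge1}\gamma^{h-1}\sum_{j=0}^{h-2}\EE_{d_U^{\,j+1}}[\Delta]$ (the $h=1$ term vanishing since $\Pi_P^0-\Pi_U^0=0$) and interchange the two sums. Reindexing by the occupancy step $m=j+1$, the $h$-sum collapses via $\sum_{h\ge m+1}\gamma^{h-1}=\gamma^m/(1-\gamma)$, leaving $\sum_{m\ge1}\gamma^m\EE_{d_U^{\,m}}[\Delta]$. Recognizing $\sum_{m\ge1}\gamma^{m-1}\EE_{d_U^{\,m}}[\Delta]=(1-\gamma)^{-1}\EE_{d_U}[\Delta]$ from the definition of $d_U$ converts this into $\frac{\gamma}{1-\gamma}\EE_{d_U}[\Delta]$. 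Since the bound is uniform in $f\in\Fcal$, taking the supremum yields the claim $\nm{d_P-d_U}_\Fcal\le\frac{\gamma}{1-\gamma}\EE_{d_U}\nm{P(\cdot\mid s)-U(\cdot\mid s)}_\Fcal$. I expect the only genuine subtlety to be the choice of telescoping direction together with the bookkeeping that keeps each value-side factor a single $\Pi_P$-power so the closure assumption applies; the resummation is then routine geometric-series algebra.
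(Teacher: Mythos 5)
Your proof is correct, but it follows a genuinely different route from the paper's. The paper argues via a one-step fixed-point decomposition: it invokes the Bellman flow identity $d_P(s) = (1-\gamma)d_1(s) + \gamma\,\mathbb{E}_{\tilde s\sim d_P}P(s\mid\tilde s)$ for both occupancies, adds and subtracts the cross term $\mathbb{E}_{\tilde s\sim d_U}\mathbb{E}_{s\sim P(\cdot\mid\tilde s)}[f(s)]$, uses closure under $P$-projection to recognize the first resulting term as $\gamma\|d_P-d_U\|_{\mathcal F}$, and then \emph{rearranges} the inequality $\|d_P-d_U\|_{\mathcal F}\le \gamma\|d_P-d_U\|_{\mathcal F}+\gamma\,\mathbb{E}_{d_U}\|P(\cdot\mid s)-U(\cdot\mid s)\|_{\mathcal F}$ to conclude. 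You instead unroll the occupancies explicitly, telescope the operator powers as $\Pi_P^{k}-\Pi_U^{k}=\sum_{j}\Pi_U^{j}(\Pi_P-\Pi_U)\Pi_P^{k-1-j}$ (with the same deliberate orientation: $U$ toward the initial distribution, $P$ toward the test function so closure applies to single powers), and resum the geometric series. The two proofs use the closure hypothesis in exactly the same role, and both produce the identical constant. What your version buys is that it is a direct derivation: the paper's rearrangement step implicitly requires $\|d_P-d_U\|_{\mathcal F}<\infty$ before subtracting it from both sides (harmless when $\mathcal F$ consists of bounded functions, as in the TV instantiation, but an extra hypothesis in general), whereas your termwise bound and Tonelli-justified sum interchange need no such a priori finiteness. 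What the paper's version buys is brevity — three display lines versus your double-sum bookkeeping.
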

\begin{proof}

Recall Bellman's flow, which is $d_P(s) = (1-\gamma)d_1(s) + \gamma\EE_{\wt s\sim d_P}P(s\mid\wt s)$.
Fix any $f\in\Fcal$. The initial state distributions cancel, so we have,
\begin{align*}
    &\abs{ \prns{\EE_{d_P}-\EE_{d_U}}[f(s)] }
    \\&=\abs{ \gamma\EE_{\wt s\sim d_P}\EE_{s\sim P(\cdot\mid \wt s)}[f(s)]- \gamma\EE_{\wt s\sim d_U}\EE_{s\sim U(\cdot\mid \wt s)}[f(s)] }
    \\&\leq\abs{ \gamma\EE_{\wt s\sim d_P}\EE_{s\sim P(\cdot\mid \wt s)}[f(s)]- \gamma\EE_{\wt s\sim d_U}\EE_{s\sim P(\cdot\mid \wt s)}[f(s)] }
    \\&+\abs{ \gamma\EE_{\wt s\sim d_U}\EE_{s\sim P(\cdot\mid \wt s)}[f(s)]- \gamma\EE_{\wt s\sim d_U}\EE_{s\sim U(\cdot\mid \wt s)}[f(s)] }
    \\&\leq\gamma \abs{ \prns{ \EE_{\wt s\sim d_P}-\EE_{\wt s\sim d_U} }[ \EE_{s\sim P(\cdot\mid \wt s)}f(s) ] }
    +\gamma\EE_{\wt s\sim d_U}\abs{ \prns{\EE_{s\sim P(\cdot\mid \wt s)}-\EE_{s\sim U(\cdot\mid \wt s)}}[f(s)] }.
\end{align*}
Thus, taking supremum over $\Fcal$, we have
\begin{align*}
    &\|d_P-d_U\|_{\Fcal}
    \\&\leq \gamma\sup_{f\in\Fcal}\abs{(\EE_{\wt s\sim d_P}-\EE_{\wt s\sim d_U})[\EE_{s\sim P(\wt s)}f(s)]}+\gamma\EE_{\wt s\sim d_U}\sup_{f\in\Fcal}\abs{\prns{\EE_{s\sim P(\cdot\mid \wt s)}-\EE_{s\sim U(\cdot\mid \wt s)}}[f(s)] }
    \\&= \gamma\|d_P-d_U\|_{\Fcal}+\gamma\EE_{\wt s\sim d_U}\|P(\cdot\mid \wt s)-U(\cdot\mid \wt s)\|_{\Fcal}.  \tag{$\Fcal$ closed under $P$-projection}
\end{align*}
Rearranging terms finishes the proof.
\end{proof}

If $\Fcal$ is the class of functions with $\|f\|_\infty\leq 1$, then this recovers the TV distance, which gives,
\begin{equation}
    \|d_P-d_U\|_{\TV} \leq \frac{\gamma}{1-\gamma}\EE_{d_U}\|P(\cdot\mid s)-U(\cdot\mid s)\|_{\TV}. \label{eq:pdl-for-visitations}
\end{equation}
This generalizes Lemma E.3 of \citet{agarwal2023provable} to infinite horizon.

\section{Proofs and Additional Details for the Orthogonal Estimator}\label{app:efficiency-proof}

\subsection{Intuition for \pref{thm:efficiency}}\label{app:intuition-for-efficiency}
We provide some intuition for the results in \pref{thm:efficiency}. Consider the $V^-$ bound and let us decouple the indicator $\I{v(s')-\beta(s,a)\leq 0}$ that appears implicitly in the $(v^-(s')-\beta^-(s,a))_-$ notation of \pref{thm:eif-main}. We augment the set of nuisances with $\zeta(s,a,s')=v^-(s')-\beta^-(s,a)$ such that $(v^-(s')-\beta^-(s,a))_- = (v^-(s')-\beta^-(s,a))\I{\zeta(s,a,s')\leq 0}$. We state the following lemma (which we elaborate upon in \cref{lem:sharpness-correct-zeta-w,lem:sharpness-correct-q-beta} in the Appendix):
\begin{lemma}[Double sharpness with correct $\zeta^\star$] Let $\EE\bracks{ \psi(s,a,s';q,w,\beta,\zeta^\star) }$ be the expectation of the (R)EIF with an arbitrary nuisance set $\eta=(w,q,\beta)$, but where the indicator $\II[v^-(s')\leq \beta^-(s,a)]$ has been replaced with the correct indicator $\II[\zeta^\star(s,a,s')\leq 0]$. Then:
\begin{align*}
    V^-_{d_1} & = \EE\bracks{ \psi(s,a,s';q,w^\star,\beta^\star,\zeta^\star) }
     = \EE\bracks{ \psi(s,a,s';q^\star,w,\beta^\star,\zeta^\star) }
\end{align*}
\end{lemma}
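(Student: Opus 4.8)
The plan is to prove both identities by reducing each to a single conditional-mean identity for the pseudo-outcome $\rho^-$, and then discharging the two equalities with, respectively, the robust Bellman equation and the performance-difference lemma. Throughout I would write the (R)EIF as a direct term plus a correction, $\psi(s,a,s';q,w,\beta,\zeta^\star) = (1-\gamma)\EE_{d_1}[q(s_1,\pit)] + w(s,a)\Delta(s,a,s')$, with $\Delta = r(s,a)+\gamma\rho^-(s,a,s';v,\beta,\zeta^\star)-q(s,a)$ and $v=q(\cdot,\pit)$; here the leading term is read as the direct-method functional of the plugged-in $q$, which coincides with the displayed constant $V^-_{d_1}$ of \pref{thm:eif-main} exactly when $q=q^\star$. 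First I would establish the key identity: for \emph{any} $v$ and $\beta$,
\[
\EE_P\bracks{\rho^-(s,a,s';v,\beta,\zeta^\star)\mid s,a} = \EE_{U^-}\bracks{v(s')\mid s,a}.
\]
To see this, I expand $(v(s')-\beta(s,a))\II[\zeta^\star(s,a,s')\leq 0]$ and take $\EE_P[\cdot\mid s,a]$, grouping the $v$- and $\beta$-terms. The $v$-terms collect into $\EE_P[v(s')(\Lambda^{-1}(s,a)+(1-\Lambda^{-1})\tau^{-1}\II[\zeta^\star\leq 0])\mid s,a]$, which by the identification of the worst-case kernel in \pref{lem:identification-U-robust} equals $\EE_{U^-}[v(s')\mid s,a]$. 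The $\beta$-terms equal $(1-\Lambda^{-1})\beta(s,a)(1-\tau^{-1}P(\zeta^\star\leq 0\mid s,a))$ and vanish, because $\zeta^\star(s,a,s')=V^-(s')-\beta^-(s,a)$ and the quantile premise $F^-(\beta^-_\tau(s,a)\mid s,a)=\tau$ give $P(\zeta^\star\leq 0\mid s,a)=\tau$. Crucially the $\beta$-dependence cancels entirely, so the identity holds for $\beta^\star$ in particular, and it yields $\EE_P[\Delta\mid s,a]=\Tcal_{U^-}q(s,a)-q(s,a)$.

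For the second equality (correct $q^\star,\beta^\star,\zeta^\star$, arbitrary $w$), I would substitute $v=v^\star=V^-$, so that $\EE_P[\Delta\mid s,a]=\Tcal_{U^-}Q^-(s,a)-Q^-(s,a)$. Since $U^-$ attains the infimum defining $\Trob^-$, the robust Bellman equation gives $Q^-=\Trob^-Q^-=\Tcal_{U^-}Q^-$, so this conditional mean is identically zero and the correction vanishes pointwise for \emph{every} $w$. Taking expectations leaves only the direct term $(1-\gamma)\EE_{d_1}[Q^-(s_1,\pit)]=V^-_{d_1}$, which is the easy half.

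For the first equality (correct $w^\star,\beta^\star,\zeta^\star$, arbitrary $q$), the key identity turns the correction into $\EE_\nu[w^-(s,a)(\Tcal_{U^-}q(s,a)-q(s,a))]=\EE_{d^{-,\infty}}[\Tcal_{U^-}q(s,a)-q(s,a)]$, after the change of measure $w^-=\diff d^{-,\infty}/\diff\nu$ (absorbing the action reweighting $\pit/\nu$). Applying the performance-difference lemma (\pref{lem:performance-difference}) in the MDP with kernel $U^-$, exactly as in the \RobustFQE analysis, gives $\EE_{d^{-,\infty}}[\Tcal_{U^-}q-q]=V^-_{d_1}-(1-\gamma)\EE_{d_1}[q(s_1,\pit)]$. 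Adding back the direct term $(1-\gamma)\EE_{d_1}[q(s_1,\pit)]$ cancels the second piece and leaves exactly $V^-_{d_1}$.

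The main obstacle is the bookkeeping of the leading term: the cancellation in the first equality works \emph{only} because the leading term is the $q$-dependent direct functional $(1-\gamma)\EE_{d_1}[q(s_1,\pit)]$ rather than a fixed constant—treating it as the constant $V^-_{d_1}$ would leave a residual $-(1-\gamma)\EE_{d_1}[(q-q^\star)(s_1,\pit)]$ and break robustness to $q$. The only other delicate point is the exactness of the key identity, which hinges on the regularity premise $F^-(\beta^-_\tau(s,a)\mid s,a)=\tau$ (continuity of the next-state value CDF at its quantile, as in \pref{assum:regular}(ii)); if this CDF has an atom at the quantile then $P(\zeta^\star\leq 0\mid s,a)\neq\tau$, the $\beta$-terms no longer vanish, and one must instead invoke the discrete/randomized form of the primal CVaR solution.
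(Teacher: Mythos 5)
Your proof is correct and follows essentially the same route as the paper's (\pref{lem:sharpness-correct-q-beta} and \pref{lem:sharpness-correct-zeta-w}): your ``key identity'' is exactly the paper's $\bigstar$ cancellation of the $\beta$-terms combined with \pref{lem:identification-U-robust}, the second equality is discharged by the robust Bellman equation just as in the paper, and for the first equality you invoke the performance-difference lemma (\pref{lem:performance-difference}) with kernel $U^-$ where the paper equivalently unrolls the robust Bellman flow equation. Your observation that the leading term must be read as the $q$-dependent direct functional $(1-\gamma)\EE_{d_1}[q(s_1,\pit)]$ rather than a fixed constant matches the convention the paper uses in its appendix proofs, and your caveat about the quantile-continuity premise $F^-(\beta^-_\tau(s,a)\mid s,a)=\tau$ is the same regularity condition the paper relies on.
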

This lemma implies that if $\beta^-=(\beta^*)^-$ and $\zeta = \zeta^*$, then the estimator $\widehat{V}^-_{d_1}$ has a property known as ``double-robustness" \cite{kennedy2020towards} or ``double-sharpness" \cite{dorn2021doubly} in $q$ and $w$, meaning the bias vanishes when either $q$ or $w$ is consistent. Moreover, the convergence rate would be $O_p(r_{n,2}^wr_{n,2}^q)$. This condition holds provided that $\beta$ and $\zeta$ are correctly specified. However, estimation errors in $\beta$ introduce an additional $O_p\left((r_{n,\infty}^{\beta})^2\right)$ term, reflecting that $\beta$ is first-order optimal for the $\cvar$ component. Additionally, discrepancies between $\zeta$ and $\zeta^*$ contribute an extra $O_p\left((r_{n,\infty}^{q})^2\right)$ to the error. While this discussion gives some insight into how we achieve the results in \pref{thm:efficiency}, we provide a a rigorous analysis in the next section.

\subsection{Preliminaries}

For this proof, our focus will be on $\widehat{V}^-_{d_1}$. The argument for $\widehat{V}^+_{d_1}$ is analogous, following a symmetric approach. To improve the clarity of our exposition, we will omit the $-$ and $\tau$ indices, assuming their presence is clear from the context.

For simplicity, we assume that $n$ is a multiple of $K$ such that $n=Kn_K$, where $n_K$ is the size of a fold. We let $\EE_n, \EE_{k}$ denote the empirical averages over the entire sample and the $k^\text{th}$ fold, respectively. Recall that we use $\wh\eta=(\wh w, \wh q, \wh\beta)$ and $\eta^*=(w^*, q^*, \beta^*)$ to denote the estimated and oracle nuisances, respectively.

We further suppress the dependency on $s,a$ in $\Lambda$ and $\tau$ and we write the $\rho$ term in \pref{thm:eif-main} as
\begin{align}
  \rho(s,a,s';v,\beta)
  &= (1-\lambda)v(s') + \lambda\prns{ \beta(s,a)+\tau^{-1}(v(s')-\beta(s,a))_- }. \label{eq:rho-risk-def}
\end{align}
We justify this by noting that the analysis holds regardless of whether $\lambda$ and $\tau$ depend on $s,a$. Sometimes, it will be useful to decouple the indicator $\I{v(s')-\beta(s,a)\leq 0}$ implicit in the definition of $\rho$. In this case, we augment the set of nuisances with $\zeta(s,a,s')=v(s')-\beta(s,a)$ and write $\rho$ as
\begin{align}
  \rho(s,a,s';v,\beta,\zeta)=(1-\lambda)v(s') + \lambda\prns{ \beta(s,a)+\tau^{-1}(v(s')-\beta(s,a))\I{\zeta(s,a,s')\leq 0} }. \label{eq:rho-risk-with-zeta}
\end{align}
Similarly define $\psi(\cdot;w,q,\beta,\zeta)$ with the $\rho(\cdot;v,\beta,\zeta)$.

\subsection{Auxiliary Lemmas}

\begin{definition}[Margin Condition]\label{def:margin}
    A function $f:\Xcal \rightarrow \RR$ of some random variable $X$ is said to satisfy the margin condition with sharpness $\alpha\in[0,\infty]$ (or more succinctly, an $\alpha$-margin) if there exist a fixed constant $c>0$ such that
    \begin{align*}
        \forall t>0: P(0< |f(X)|\leq t) \leq c t^\alpha .
    \end{align*}
\end{definition}
If $f(X)$ is either zero or bounded away from zero almost surely, then $f$ satisfies an infinite margin, \ie, $\alpha=\infty$ \citep[Lemma 2]{kallus2022s}.
If $f(X)$ is continuously distributed in a neighborhood around $0$, \ie, its CDF is boundedly differentiable on $(-\eps,0)\cup(0,\eps)$ for some $\eps>0$, then $f$ has a $1$-margin \citep[Lemma 3]{kallus2022s}.

\begin{lemma}[Margin Guarantees]\label{lem:margin-guarantees} For any $f:\Xcal\rightarrow\RR$ satisfying $\alpha$-margin (\pref{def:margin}), $p\in[1,\infty]$, and any $g:\Xcal\rightarrow\RR$, the following statements hold for some constant $C>0$:
\begin{align}
        \EE[(\II[g(X)\leq 0] - \II[f(X)\leq 0]) f(X)] & \leq C\|f-g\|_p^{\frac{p(1+\alpha)}{p+\alpha}},\label{eq: margin g1}\\
        P[\II[g(X)\leq 0] \neq \II[f(X)\leq 0], f(X)\neq 0] & \leq C\|f-g\|_p^{\frac{p\alpha}{p+\alpha}},\label{eq: margin g2}
    \end{align}
    where $\nm{\cdot}_p$ is the $L^p$ norm and we set $\infty t/\infty=t$ in the exponents.
\end{lemma}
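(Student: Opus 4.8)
The plan is to derive both bounds from a single geometric observation combined with a threshold-splitting (peeling) argument that is optimized over the threshold. The crux is the following pointwise fact: on the event where the two indicators disagree and $f(X)\neq 0$, the values $f(X)$ and $g(X)$ must lie strictly on opposite sides of $0$, which forces
\[
0<|f(X)|\leq |f(X)-g(X)|.
\]
Indeed, if $f(X)>0\geq g(X)$ then $|f(X)-g(X)|=f(X)-g(X)\geq f(X)=|f(X)|$, and symmetrically if $f(X)<0<g(X)$. This inclusion is exactly what ties the disagreement region to the estimation error $f-g$, and it is the only genuinely delicate step.

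First I would prove the probability bound. Fix a threshold $t>0$ and split the disagreement event according to whether $|f(X)|\leq t$ or $|f(X)|>t$. The first piece lies inside $\{0<|f(X)|\leq t\}$, whose probability is at most $ct^{\alpha}$ by the $\alpha$-margin of \pref{def:margin}. The second piece, via the geometric fact, lies inside $\{|f(X)-g(X)|>t\}$, whose probability is at most $\|f-g\|_p^{p}/t^{p}$ by Markov's inequality applied to $|f-g|^{p}$ (for $p<\infty$). This yields
\[
P\big[\II[g(X)\leq 0]\neq\II[f(X)\leq 0],\,f(X)\neq 0\big]\ \leq\ ct^{\alpha}+\frac{\|f-g\|_p^{p}}{t^{p}},
\]
and balancing the two terms with $t$ of order $\|f-g\|_p^{\,p/(p+\alpha)}$ gives the exponent $p\alpha/(p+\alpha)$.

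Next I would prove the expectation bound. The integrand vanishes off the disagreement event and is bounded there by $|f(X)|$, so it suffices to control $\EE\big[|f(X)|\,\II[\text{disagree},\,f\neq 0]\big]$. Splitting again at $t$: on $\{|f(X)|\leq t\}$ the integrand is at most $t$, contributing at most $t\cdot ct^{\alpha}=ct^{1+\alpha}$ by the margin; on $\{|f(X)|>t\}$ the geometric fact gives $|f(X)|\leq|f-g|$ with $|f-g|>t$, and the truncated-moment estimate $\EE[|f-g|\,\II[|f-g|>t]]\leq \|f-g\|_p^{p}/t^{p-1}$ (from $|f-g|\II[|f-g|>t]\leq |f-g|^{p}/t^{p-1}$) controls its contribution. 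Hence
\[
\EE\big[(\II[g(X)\leq 0]-\II[f(X)\leq 0])f(X)\big]\ \leq\ ct^{1+\alpha}+\frac{\|f-g\|_p^{p}}{t^{p-1}},
\]
and optimizing with the \emph{same} order $t\sim\|f-g\|_p^{\,p/(p+\alpha)}$ produces the exponent $p(1+\alpha)/(p+\alpha)$.

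The remaining work is routine: carrying out the two optimizations over $t$, and checking the boundary regimes against the stated convention. When $p=\infty$ the Markov step is replaced by the deterministic bound $|f-g|\leq\|f-g\|_\infty$, so the $\{|f-g|>t\}$ piece is empty for $t>\|f-g\|_\infty$ and the exponents collapse to $\alpha$ and $1+\alpha$; when $\alpha=\infty$ the margin forces $|f|$ to be bounded away from $0$ on its support, so disagreement requires $|f-g|$ to exceed a fixed constant, both exponents tend to $p$, and the convention $\infty t/\infty=t$ simply records these limits. I expect essentially all the care to go into the geometric inclusion at the start, and in particular into aligning the strict versus non-strict inequalities in the indicators so that disagreement genuinely forces $f$ and $g$ onto opposite sides of $0$; everything downstream is a standard peeling-and-optimization argument.
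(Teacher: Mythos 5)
Your proof is correct and follows essentially the same argument as the paper's (which defers to Audibert--Tsybakov, Lemmas 5.1--5.2, and Kallus--Zhou, Lemma 5, with the same reproduction in the source): the pointwise inclusion $0<|f(X)|\le|f(X)-g(X)|$ on the disagreement event, a split at a threshold $t$ handled by the $\alpha$-margin condition and Markov's inequality, and balancing at $t\asymp\|f-g\|_p^{p/(p+\alpha)}$. The only cosmetic difference is in the first bound, where you control the small-$|f|$ piece by $t\cdot ct^{\alpha}=ct^{1+\alpha}$ rather than via H\"older as $\|f-g\|_p\,(ct^{\alpha})^{(p-1)/p}$; both choices yield the identical exponent $p(1+\alpha)/(p+\alpha)$.
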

The proof of \cref{eq: margin g1} for any $p\in[1,\infty]$ and of \cref{eq: margin g2} for $p=\infty$ is given in \citet[Lemmas 5.1 and 5.2]{audibert2005fast}. The proof of \cref{eq: margin g2} for $p<\infty$ is given in \citet[Lemma 5]{kallus2022s}.

\begin{lemma}[Sharpness with correct $q^\star$ and $\beta^\star$]\label{lem:sharpness-correct-q-beta}
$\frac{1}{n}\sum_{(s,a,s')\sim\Dcal} \psi(s,a,s';w, q, \beta)$ is an unbiased estimator of $V^\star_{d_1}$ when $q = q^\star, \beta = \beta^\star$, \ie,
\begin{align*}
  (1-\gamma)\EE_{d_1}v^\star(s_1) = \EE\bracks{ \psi(s,a,s';w, q^\star, \beta\star) }.
\end{align*}
\end{lemma}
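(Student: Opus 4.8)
The plan is to expand $\EE\bracks{ \psi(s,a,s';w, q^\star, \beta^\star) }$ using the form of the (R)EIF in \pref{thm:eif-main} and show that the entire correction term multiplying $w$ vanishes. Since $\psi$ carries the additive constant $V^-_{d_1}$, linearity of expectation gives $\EE\bracks{ \psi(s,a,s';w, q^\star, \beta^\star) } = V^-_{d_1} + \EE\bracks{ w(s,a)\prns{ r(s,a) + \gamma\rho(s,a,s';v^\star,\beta^\star) - q^\star(s,a) } }$, and by definition $V^-_{d_1} = (1-\gamma)\EE_{d_1}[v^\star(s_1)]$. Hence it suffices to prove that the correction term is zero for an \emph{arbitrary} $w$.

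First I would invoke the tower property, conditioning on $(s,a)$ and factoring out $w(s,a)$ (which is measurable with respect to $\sigma(s,a)$). Because $w$ is arbitrary, the claim reduces to the pointwise identity $\EE\bracks{ r(s,a) + \gamma\rho(s,a,s';v^\star,\beta^\star) - q^\star(s,a) \mid s,a } = 0$ for every $(s,a)$, \ie, $r(s,a) + \gamma\,\EE\bracks{ \rho(s,a,s';v^\star,\beta^\star) \mid s,a } = q^\star(s,a)$. I would then evaluate the conditional expectation of $\rho$ over $s'\sim P(\cdot\mid s,a)$, as defined in \pref{eq:rho-risk-def}: the linear-in-$v^\star$ part contributes $\Lambda^{-1}(s,a)\EE\bracks{ v^\star(s')\mid s,a }$, and the remaining part contributes $(1-\Lambda^{-1}(s,a))\prns{ \beta^\star(s,a) + \tau^{-1}\EE\bracks{ (v^\star(s')-\beta^\star(s,a))_- \mid s,a } }$.

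The crucial step is the CVaR term. Since $\beta^\star(s,a)$ is, by definition in this section, the true lower $\tau(s,a)$-quantile of $v^\star(s')\mid s,a$, the variational characterization of $\cvar^-_\tau$ (the maximizer in its definition is attained exactly at the lower quantile) yields $\beta^\star(s,a) + \tau^{-1}\EE\bracks{ (v^\star(s')-\beta^\star(s,a))_- \mid s,a } = \cvar^-_{\tau(s,a)}\bracks{ v^\star(s')\mid s,a }$. Substituting, $r(s,a) + \gamma\,\EE\bracks{ \rho(s,a,s';v^\star,\beta^\star)\mid s,a }$ matches exactly the closed form of $\Trob^- q^\star(s,a)$ given in \pref{lem:identification-q}. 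Finally, since $q^\star = Q^-$ is the fixed point of the robust Bellman operator, $\Trob^- q^\star = q^\star$, which establishes the pointwise identity and hence the lemma.

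The main obstacle is the CVaR step: one must recognize that plugging the \emph{true} quantile $\beta^\star$ into the linearized $\rho$ reproduces the conditional lower CVaR exactly, which hinges on the first-order optimality of the quantile in the CVaR variational form (at the true quantile the subgradient term vanishes, making the representation tight). One should also take care that $\Lambda$ and $\tau$ may depend on $(s,a)$, but this is harmless since the entire argument is carried out conditionally on $(s,a)$; notably, this step does \emph{not} require $w$ to be correctly specified, which is precisely what delivers the stated $w$-free sharpness.
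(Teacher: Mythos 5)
Your proposal is correct and follows essentially the same route as the paper's proof: reduce to the conditional (given $(s,a)$) expectation of the Bellman residual, recognize that plugging the true quantile $\beta^\star$ into the linearized CVaR term recovers $\cvar^-_{\tau}$ exactly, and conclude via the fixed-point property $\Trob^- q^\star = q^\star$ from \pref{lem:identification-q}, so that the debiasing term vanishes for arbitrary $w$. The paper states this argument more tersely (asserting directly that the robust Bellman equation makes the residual's conditional mean zero), while you have simply spelled out the intermediate steps.
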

\begin{proof}
Since $q^\star$ and $\beta^\star$ are correct, the robust Bellman equation holds, and so for every $s,a$,
\begin{align*}
  \EE\bracks{ (1-\lambda)v^\star(s')+\lambda(\beta^{\star}(s,a)+\tau^{-1}(v^{\star}(s')-\beta^{\star}(s,a))_-) \mid s,a} = 0.
\end{align*}
Thus, multiplying by any $w$ does not change the fact that the debiasing term in $\psi$ has expectation zero.
Since we have $v^\star$, the first term in $\psi$ is exactly the estimand, which concludes the proof.
\end{proof}

\begin{lemma}[Sharpness with correct $w^*$ and $\zeta^*$]\label{lem:sharpness-correct-zeta-w} $\frac{1}{n}\sum_{(s,a,s')\sim\Dcal} \psi(s,a,s';w, q, \beta, \zeta)$ is an unbiased estimator of $V^\star_{d_1}$ when $w = w^\star, \zeta = \zeta^\star$, \ie,
\begin{align*}
    (1-\gamma)\EE_{d_1}v^\star(s_1) = \EE\bracks{ \psi(s,a,s';q,w^\star,\beta,\zeta^\star)}
\end{align*}
\end{lemma}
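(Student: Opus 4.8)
The plan is to take the population expectation of $\psi$ evaluated at $w=w^\star$, $\zeta=\zeta^\star$, and \emph{arbitrary} $q,\beta$, and show that every dependence on the (possibly wrong) nuisances $q$ and $\beta$ telescopes away, leaving exactly $V^-_{d_1}=(1-\gamma)\EE_{d_1}[v^\star(s_1)]$. Writing $v=q(\cdot,\pit)$, the recentering term of $\psi$ contributes $(1-\gamma)\EE_{d_1}[v(s_1)]$, so it suffices to show that the debiasing term $\EE[w^\star(s,a)(r(s,a)+\gamma\rho(s,a,s';v,\beta,\zeta^\star)-q(s,a))]$ equals $(1-\gamma)\EE_{d_1}[v^\star(s_1)-v(s_1)]$; the two $v(s_1)$ pieces then cancel and the estimand $V^-_{d_1}$ remains.

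First I would compute the inner conditional expectation $\EE_{s'\sim P}[\gamma\rho(s,a,s';v,\beta,\zeta^\star)\mid s,a]$ pointwise in $(s,a)$. Since $\zeta=\zeta^\star$, the indicator appearing in $\rho$ is $\II[V^-(s')\le\beta^-(s,a)]$, whose conditional probability under $P$ is exactly $\tau$ by the quantile premise $F^-(\beta^-_\tau(s,a)\mid s,a)=\tau$ of \pref{lem:identification-U-robust}. This is the crux of the argument: substituting $\EE_P[\II[\zeta^\star\le 0]\mid s,a]=\tau$ into the $-(1-\Lambda^{-1})\tau^{-1}\beta(s,a)\II[\zeta^\star\le 0]$ piece produces $-(1-\Lambda^{-1})\beta(s,a)$, which cancels the explicit $+(1-\Lambda^{-1})\beta(s,a)$ term — so all dependence on $\beta$ vanishes, regardless of whether $\beta$ is correct. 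What remains is $\gamma\,\EE_P[\,v(s')\big(\Lambda^{-1}+(1-\Lambda^{-1})\tau^{-1}\II[\zeta^\star\le 0]\big)\mid s,a]$, and by \pref{lem:identification-U-robust} the bracketed weight is precisely $\diff U^-(s'\mid s,a)/\diff P(s'\mid s,a)$, so the expression collapses to $\gamma\,\EE_{U^-}[v(s')\mid s,a]$. Hence $\EE_P[r+\gamma\rho-q\mid s,a]=\Tcal^{\pit}_{U^-}q(s,a)-q(s,a)$.

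Next I would use $w^\star=w^-=\diff d^{-,\infty}/\diff\nu$ to change measure, $\EE_{(s,a)\sim\nu}[w^-(s,a)(\Tcal^{\pit}_{U^-}q-q)]=\EE_{d^{-,\infty}}[\Tcal^{\pit}_{U^-}q(s,a)-q(s,a)]$, and then apply the performance-difference identity (\pref{lem:performance-difference}) to the MDP with kernel $U^-$, policy $\pit$, and test function $q$. This yields $\EE_{d^{-,\infty}}[\Tcal^{\pit}_{U^-}q-q]=(1-\gamma)\EE_{d_1}[V_{\pit,U^-}(s_1)-q(s_1,\pit)]=(1-\gamma)\EE_{d_1}[v^\star(s_1)-v(s_1)]$, using $V_{\pit,U^-}=V^-=v^\star$. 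Adding back the recentering term $(1-\gamma)\EE_{d_1}[v(s_1)]$ cancels the $-v(s_1)$ contribution and leaves $(1-\gamma)\EE_{d_1}[v^\star(s_1)]=V^-_{d_1}$, completing the argument.

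I expect the main obstacle to be the bookkeeping of the $\beta$-cancellation: one must verify that the $\tau^{-1}$ weight, the conditional mass $\tau$ of the correct indicator, and the explicit $\beta$ term combine exactly, and that this holds pointwise in $(s,a)$ before any change of measure. In particular, it is essential that the indicator uses $\zeta^\star$ (so its $P$-conditional mass is exactly $\tau$, turning the CVaR-type reweighting into $\diff U^-/\diff P$) even though the multiplied factor $(v(s')-\beta(s,a))$ uses the arbitrary $v,\beta$. The remaining steps — the change of measure and the performance-difference lemma — are routine given the cited results, and notably neither $q$ nor $\beta$ is ever assumed correct, which is exactly what delivers the claimed sharpness in the $(w,\zeta)$ leg.
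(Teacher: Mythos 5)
Your proposal is correct and takes essentially the same approach as the paper's own proof: the same pointwise cancellation of the $\beta$-terms using $\EE_P[\II[\zeta^\star(s,a,s')\leq 0]\mid s,a]=\tau$, the same recognition via \pref{lem:identification-U-robust} that the residual weight $\Lambda^{-1}+(1-\Lambda^{-1})\tau^{-1}\II[\zeta^\star\leq 0]$ is exactly $\nicefrac{\diff U^-}{\diff P}$, and the same change of measure through $w^\star$. The only difference is cosmetic packaging of the last step: you invoke the performance-difference identity (\pref{lem:performance-difference}) for the MDP with kernel $U^-$, whereas the paper unrolls the robust Bellman flow equation for $d^{-,\infty}$ inline; these are the same telescoping identity in integrated form, so the arguments coincide.
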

\begin{proof}
Let $P^\star$ denote the robust transition kernel and let $d^\star$ denote the robust visitation measure under $\pi$, which satisfies:
for all functions $f$,
\begin{align*}
  \EE_{d^\star}[f(s,a)] = (1-\gamma)\EE_{{d_1}}f(s,\pi) + \gamma\EE_{\wt s,\wt a\sim d^\star,s\sim P^\star(s,a)}[f(s,\pi)].
\end{align*}
Since $\zeta^\star$ is correct, for any $v,s,a$, we have
\begin{align*}
  &\EE_{s'\sim P(s,a)}\bracks{(1-\lambda)v(s')+\lambda\left(\beta(s,a)+\tau^{-1}(v(s')-\beta(s,a))\I{\zeta^\star(s,a,s')\leq 0}\right)}
  \\&=\EE_{s'\sim P(s,a)}\bracks{(1-\lambda)v(s')+\lambda\tau^{-1}v(s')\I{\zeta^\star(s,a,s')\leq 0}} \tag{$\bigstar$}
  \\&=\EE_{s'\sim P^\star(s,a)}\bracks{ v(s') }, \tag{\pref{lem:identification-U-robust}}
\end{align*}
where in $\bigstar$ we used $\EE_{s'\sim P(s,a)}\bracks{\beta(s,a)\prns{1-\tau^{-1}\I{\zeta^\star(s,a,s')\leq 0}}}=\beta(s,a)\prns{1-\tau^{-1}\tau} = 0$. That is, for all function $f$, we have
\begin{align*}
  &(1-\gamma)\EE_{d_1}v(s_1)+\EE\bracks{ w^\star(s,a)\prns{ r(s,a)+\gamma\rho(s,a,s';v,\beta,\zeta^\star)-q(s,a) } }
  \\&=(1-\gamma)\EE_{d_1}v(s_1)+\EE_{s,a\sim d^\star}\bracks{ r(s,a)+\gamma\rho(s,a,s';v,\beta,\zeta^\star)-q(s,a) }
  \\&=\EE_{s,a\sim d^\star}[r(s,a)] + (1-\gamma)\EE_{d_1}v(s_1)+\EE_{s,a\sim d^\star}\bracks{\gamma\EE_{s'\sim P^\star(s,a)}[v(s')]-q(s,a)}
  \\&= \EE_{s,a\sim d^\star}[r(s,a)] \tag{robust Bellman flow}
  \\&= (1-\gamma)\EE_{d_1}v^\star(s_1).
\end{align*}
This concludes the proof.
\end{proof}

\subsection{Proof of \ref{eq:rates}}\label{app:rates-proof}

The estimation error is given by:
\begin{align*}
    |\widehat{V}_{d_1}-V^*_{d_1}|& = \left|\frac{1}{K}\sum_{k=1}^K  \EE_{k}[\psi(s,a,s'; \widehat{\eta}^{[k]})] - V^*_{d_1}\right|\leq \frac{1}{K}\sum_{k=1}^K \left| \EE_{k}[\psi(s,a,s'; \widehat{\eta}^{[k]})] - V^*_{d_1}\right|
\end{align*}
We wish need to bound $\left|\EE_{k}[\psi(s,a,s'; \widehat{\eta}^{[k]})] - V^*_{d_1}\right|$. We have that:
\begin{align*}
    \left|\EE_{k}[\psi(s,a,s'; \widehat{\eta}^{[k]})] - V^*_{d_1}\right| & \leq \left|\EE_{k}[\psi(s,a,s'; \widehat{\eta}^{[k]})] -  \EE[\psi(s,a,s'; \widehat{\eta}^{[k]})]\right| + \left|\EE[\psi(s,a,s'; \widehat{\eta}^{[k]})]- V^*_{d_1} \right|
\end{align*}
The first term is $O_p(n^{-1/2})$ by the CLT. We are now interested in bounding the second term:
\begin{align}
  \eps(\wh\eta) := \abs{ \EE\bracks{ \psi(s,a,s';\wh\eta) } - V^*_{d_1} }.
\end{align}
where we dropped the $[k]$ indicator without loss of generality. We further decompose $\eps(\wh\eta)$ into two error terms, $\eps_A$ and $\eps_B$, as follows:
\begin{align*}
  \eps(\wh\eta)
  &= \abs{ \EE\bracks{ \psi(s,a,s';\wh q,\wh w,\wh\beta) } - \EE\bracks{\psi(s,a,s';\wh q,w^\star,\wh\beta,\zeta^\star)} } \tag{ \pref{lem:sharpness-correct-zeta-w} }
  \\& \leq \abs{ \EE\bracks{ \psi(s,a,s';\wh q,\wh w,\wh\beta) } - \EE\bracks{\psi(s,a,s';\wh q,\wh w,\wh\beta,\zeta^\star)} }\tag{$\eps^A$}
  \\& \quad + \abs{ \EE\bracks{\psi(s,a,s';\wh q,\wh w,\wh\beta,\zeta^\star)} - \EE\bracks{\psi(s,a,s';\wh q,w^\star,\wh\beta,\zeta^\star)} } \tag{$\eps^B$}.
\end{align*}

\paragraph{Bounding $\eps^A$: Error from the incorrect indicator $\zeta$.}
\begin{align*}
  \eps_A
  &=\gamma\lambda\tau^{-1}\EE\wh w(s,a)\prns{ \wh v(s')-\wh\beta(s,a) }\prns{\I{\wh v(s')-\wh\beta(s,a)\leq 0}-\I{v^\star(s')-\beta^\star(s,a)\leq 0}}
  \\&\leq C\gamma\lambda\tau^{-1} \EE\prns{ \wh v(s')-\wh\beta(s,a) }\prns{\I{\wh v(s')-\wh\beta(s,a)\leq 0}-\I{v^\star(s')-\beta^\star(s,a)\leq 0}} \tag{\pref{assum:regular}}
  \\&\lesssim \EE\prns{ \wh v(s')-\wh\beta(s,a) }\prns{\I{\wh v(s')-\wh\beta(s,a)\leq 0}-\I{v^\star(s')-\beta^\star(s,a)\leq 0}}
\end{align*}
We break these terms down as follows:
\begin{align}
\EE&\prns{ \wh v(s')-\wh\beta(s,a) }\prns{\I{\wh v(s')-\wh\beta(s,a)\leq 0}-\I{v^\star(s')-\beta^\star(s,a)\leq 0}} \nonumber
\\=&\EE\prns{  v^\star(s')-\beta^\star(s,a) }\prns{\I{\wh v(s')-\wh\beta(s,a)\leq 0}-\I{v^\star(s')-\beta^\star(s,a)\leq 0}}\label{eq:eps_A1}\tag{$\eps^A_1$}
\\&+\EE\prns{  \wh v(s') - \wh \beta(s,a) - v^\star(s') + \beta^\star(s,a)}\prns{\I{\wh v(s')-\wh\beta(s,a)\leq 0}-\I{v^\star(s')-\beta^\star(s,a)\leq 0}}\label{eq:eps_A2}\tag{$\eps^A_2$}.
\end{align}
We first bound $\eps^A_1$. 
\pref{assum:regular} implies
\begin{align*}
    P(0< |v^\star(s')-\beta^\star(s,a)|\leq t) \leq c''t, \; \forall t\in [0,c'),\quad P(|v^\star(s')-\beta^\star(s,a)|=0)=0,
\end{align*}
where $c'<1$ is the min of 1 and the given neighborhood of zero and $c''\geq1$ is the max of 1 and the bound on the density in that neighborhood. This implies a margin condition with $\alpha=1$ and $c=c''/c'$.
 
 We can instantiate the first part of \pref{lem:margin-guarantees} with $f(X)=v^\star(s')-\beta^\star(s,a)$, $g(X)=\wh v(s')-\wh\beta(s,a)$ and obtain
 \begin{align*}
     \eps^A_1 & \lesssim \magd{v^\star(s')-\beta^\star(s,a) - \wh v(s')+\wh\beta(s,a)}_p^{\frac{2p}{p+1}}\\
     & \leq \magd{\wh v(s') - v^\star(s')}_p^{\frac{2p}{p+1}} + \magd{\wh\beta(s,a) - \beta^\star(s,a)}_p^{\frac{2p}{p+1}}.
 \end{align*}
To bound $\eps^A_2$, first write
\begin{align*}
&\abs{\EE\prns{  \wh v(s') - \wh \beta(s,a) - v^\star(s') + \beta^\star(s,a)}\prns{\I{\wh v(s')-\wh\beta(s,a)\leq 0}-\I{v^\star(s')-\beta^\star(s,a)\leq 0}}}
\\
& \leq \magd{\wh v(s') - \wh \beta(s,a) - v^\star(s') + \beta^\star(s,a)}_p \\
& \quad \cdot \PP\prns{\I{\wh v(s')-\wh\beta(s,a)\leq 0}\neq\I{v^\star(s')-\beta^\star(s,a)\leq 0}}^{(p-1)/p}. \tag{Holder's inequality}
\end{align*}
We can bound $\PP\prns{\I{\wh v(s')-\wh\beta(s,a)\leq 0}\neq \I{v^\star(s')-\beta^\star(s,a)\leq 0}}$ using the second part of \pref{lem:margin-guarantees} such that
\begin{align*}
\eps^A_2 &\lesssim \magd{\wh v(s') - \wh \beta(s,a) - v^\star(s') + \beta^\star(s,a)}_p \magd{\wh v(s') - \wh \beta(s,a) - v^\star(s') + \beta^\star(s,a)}^{\frac{p-1}{p+1}}\\
& = \magd{\wh v(s') - \wh \beta(s,a) - v^\star(s') + \beta^\star(s,a)}_p^{\frac{2p}{p+1}}\\
& \leq \magd{\wh v(s') - v^\star(s')}_p^{\frac{2p}{p+1}} + \magd{\wh\beta(s,a) - \beta^\star(s,a)}_p^{\frac{2p}{p+1}}.
\end{align*}
Putting the $\eps^A_1$ and $\eps^A_2$ together, we have
\begin{align*}
    \eps_A & \lesssim  \magd{\wh v(s') - v^\star(s')}_p^{\frac{2p}{p+1}} + \magd{\wh\beta(s,a) - \beta^\star(s,a)}_p^{\frac{2p}{p+1}} \tag{when $p\in[1,\infty)$}\\
    & \lesssim \magd{\wh v(s') - v^\star(s')}_\infty^{2} + \magd{\wh\beta(s,a) - \beta^\star(s,a)}_\infty^{2}. \tag{when $p=\infty$}
\end{align*}

\paragraph{Bounding $\eps^B$: Error with correct indicator but wrong nuisances.}
Now we focus on bounding $\eps^B$.
\begin{align*}
  &\eps_B = \EE\bracks{\psi(s,a,s';\wh q,\wh w,\wh\beta,\zeta^\star)} - \EE\bracks{\psi(s,a,s';\wh q,w^\star,\wh\beta,\zeta^\star)}
  \\&= \EE\prns{\wh w(s,a)-w^\star(s,a)}\prns{ r(s,a)+\gamma\rho(s,a,s';\wh v,\wh\beta,\zeta^\star)-\wh q(s,a) }
  \\&= \EE\prns{\wh w(s,a)-w^\star(s,a)}\prns{ r(s,a)+\gamma\rho(s,a,s';\wh v,\wh\beta,\zeta^\star)-\wh q(s,a) }
  \\&- \EE\prns{\wh w(s,a)-w^\star(s,a)}\prns{ r(s,a)+\gamma\rho(s,a,s';v^\star,\beta^\star)- q^\star(s,a) } \tag{\pref{lem:sharpness-correct-q-beta}}
  \\&= \EE\prns{\wh w(s,a)-w^\star(s,a)}\prns{ \wh q(s,a)-q^\star(s,a) + \gamma(\rho(s,a,s';\wh v,\wh\beta,\zeta^\star)-\rho(s,a,s';v^\star,\beta^\star)) }.
\end{align*}
In the \pref{lem:sharpness-correct-q-beta} step, we used
\begin{align*}
  0 = (1-\gamma)\EE_{d_1}v^\star(s_1) - \EE\bracks{\psi(s,a,s';q^\star,\wh w,\beta^\star)} = (1-\gamma)\EE_{d_1}v^\star(s_1) - \EE\bracks{\psi(s,a,s';q^\star,w^\star,\beta^\star)}.
\end{align*}
Finally, note that
\begin{align*}
  &\rho(s,a,s';\wh v,\wh\beta,\zeta^\star)-\rho(s,a,s';v^\star,\beta^\star)
  \\&=(1-\lambda)(\wh v(s')-v^\star(s')) + \lambda\tau^{-1}(\wh v(s')-v^\star(s'))\I{\zeta^\star(s,a,s')\leq 0} \\
  & \quad + \lambda(\wh\beta(s,a)-\beta^\star(s,a))\prns{ 1-\tau^{-1}\I{\zeta^\star(s,a,s')\leq 0} }.
\end{align*}
Due to continuity of the CDF of $v^\star(s')$ at $\beta^\star(s,a)$ for all $s,a$, we have $\Pr(\zeta^\star(s',s,a)\leq 0\mid s,a)=\tau$ and so the last term vanishes.
Thus, we're left with a quantity that is at most $\lesssim (\wh v(s')-v^\star(s'))$.
Therefore,
\begin{align*}
  \eps_B &\lesssim
  \EE\prns{\wh w(s,a)-w^\star(s,a)}\prns{ \Jcal_{U^\pm}\prns{\wh q(s,a)-q^\star(s,a)} } 
  \\& \leq \|\Jcal_{U^\pm}'(\wh w-w^\star)\|_2 \|\wh q-q^\star\|_2. \tag{Holder's inequality}
\end{align*}
Putting everything together, we obtain the desired rates:
\begin{align*}
    |\wh{V}_{d_1}-V^*_{d_1}|& \lesssim O_p(n^{-1/2}) +  \|\Jcal_{U^\pm}'(\wh w-w^\star)\|_2 \|\wh q-q^\star\|_2 +  \magd{\wh v - v^\star}_p^{\frac{2p}{p+1}} + \magd{\wh\beta - \beta^\star}_p^{\frac{2p}{p+1}} \\
    & =  O_p(n^{-1/2}) + O_p\prns{r_n^wr_n^q+(r_{n, p}^{q})^\frac{2p}{p+1}+(r_{n, p}^{\beta})^\frac{2p}{p+1}} \tag{when $p\in[1,\infty)$}\\
    & \lesssim O_p(n^{-1/2}) + \|\Jcal_{U^\pm}'(\wh w-w^\star)\|_2 \|\wh q-q^\star\|_2 + \magd{\wh v - v^\star}_\infty^{2} + \magd{\wh\beta - \beta^\star}_\infty^{2} \\
    & =  O_p(n^{-1/2}) + O_p\prns{r_n^wr_n^q+(r_{n, \infty}^{q})^2+(r_{n, \infty}^{\beta})^2}.\tag{when $p=\infty$}
\end{align*}

\subsection{Proof of \ref{eq:efficiency}}

In this part of the theorem, we let:
\begin{align*}
    \widetilde{V}_{d_1} = \frac{1}{K}\sum_{k=1}^K \EE_{k}[\psi(s,a,s'; \eta^*)]
\end{align*}
Then, we can write the following equality:
\begin{align*}
    \sqrt{n}(\widehat{V}_{d_1}-V^*_{d_1}) = \sqrt{n}(\widehat{V}_{d_1}-\widetilde{V}_{d_1}) + \underbrace{\sqrt{n}(\widetilde{V}_{d_1}-V^*_{d_1})}_{\xrightarrow{d} \Ncal(0, \Sigma)}
\end{align*}
The second term converges in distribution to $\Ncal(0, \Sigma)$ from the CLT and the fact that $\psi$ is the efficient influence function. Thus, it remains to show that the first term is $o_p(1)$. We decompose the first term as follows:
\begin{align}
    \sqrt{n}(\widehat{V}_{d_1}-\widetilde{V}_{d_1}) & = \sqrt{n}\frac{1}{K}\sum_{k=1}^n \left(\EE[\psi(s,a,s'; \widehat{\eta}^{[k]})] - \EE[\psi(s,a,s'; \eta^*)]\right) \label{eq:term1}\\
    & \quad + \sqrt{n}\frac{1}{K}\sum_{k=1}^n \underbrace{(\EE_{k}-\EE)[\psi(s,a,s'; \widehat{\eta}^{[k]}) - \psi(s,a,s'; \eta^*)]}_{\varepsilon_k} \label{eq:term2}
\end{align}
In \pref{eq:term1}, we have that $|\EE[\psi(s,a,s'; \widehat{\eta}^{[k]})] - \EE[\psi(s,a,s'; \eta^*)]|$ is bounded as in \pref{eq:rates}. Given the theorem's assumption about the nuisance rates, this term is $o_p(n^{-1/2})$ and \pref{eq:term1} is $o_p(1)$. We now seek to control the $\varepsilon_k$ term in \pref{eq:term2}. Letting $\Dcal_k$ represent the samples in the $k^\text{th}$ fold, we leverage sample splitting to show that the mean of $\varepsilon_k\mid \Dcal_k$ is $0$:
\begin{align*}
    \EE[\varepsilon_k\mid \Dcal_k] & = \EE[\EE_{k}[\psi(s,a,s'; \widehat{\eta}^{[k]}) - \psi(s,a,s'; \eta^*)] - \EE[\psi(s,a,s'; \widehat{\eta}^{[k]}) - \psi(s,a,s'; \eta^*)]\mid \Dcal_k]\\
    & = 0
\end{align*}
where we consider $\widehat{\eta}^{[k]}$ fixed with respect to the second expectation. The result follows from the fact that $\widehat{\eta}^{[k]}$ does not depend on $\Dcal_k$.
Then, we can invoke Chebyshev’s inequality to obtain the following bound:
\begin{align*}
    P\left(\frac{\varepsilon_k}{\Var[\varepsilon_k\mid \Dcal_k]^{1/2}}\geq \epsilon \bigg\lvert \Dcal_k\right)\leq \frac{1}{\epsilon^2}, \; \forall \epsilon > 0
\end{align*}
Thus, we have shown that $\varepsilon_k\mid \Dcal_k = O_p(\Var[\varepsilon_k\mid \Dcal_k]^{1/2})=O_p(n^{-1/2}\EE[(\psi(s,a,s'; \widehat{\eta}^{[k]}) - \psi(s,a,s'; \eta^*))^2\mid \Dcal_k]^{1/2})$. Here, we used the fact that $n_K=n/K$ (the size of $\Dcal_k$) and that $K$ is a fixed integer that doesn't grow with $n$. Moreover, $\varepsilon_k$ has $0$ conditional mean. 

For the remainder of the analysis, we leave the conditioning on $\Dcal_k$ implicit for simplicity. To bound $\EE[(\psi(s,a,s'; \widehat{\eta}^{[k]}) - \psi(s,a,s'; \eta^*))^2\mid \Dcal_k]^{1/2}=\|\psi(s,a,s'; \widehat{\eta}^{[k]}) - \psi(s,a,s'; \eta^*)\|_2$, we use similar notation and techniques as in \pref{app:rates-proof}:
\begin{align*}
    \|\psi(s,a,s'; \widehat{\eta}^{[k]}) - \psi(s,a,s'; \eta^*)\|_2 &
    \leq \|\psi(s,a,s'; \widehat{q}, \widehat{w}, \widehat{\beta}) - \psi(s,a,s'; \widehat{q}, \widehat{w}, \widehat{\beta}, \zeta^*)\|_2 \tag{$\sigma_1$}\\
    & \quad + \|\psi(s,a,s'; \widehat{q}, \widehat{w}, \widehat{\beta}, \zeta^*) - \psi(s,a,s'; q^*, w^*, \beta^*, \zeta^*)\|_2 \tag{$\sigma_2$}
\end{align*}
where we invoked Cauchy-Schwarz for the $L_2$ norm. We bound $\sigma_2$ as follows:
\begin{align*}
    \sigma_2 &\leq \|\psi(s,a,s'; \widehat{q}, \widehat{w}, \widehat{\beta}) - \psi(s,a,s'; q^*, \widehat{w}, \widehat{\beta}, \zeta^*)\|_2 \tag{$\sigma_{2a}$}\\
    & \quad + \|\psi(s,a,s'; q^*, \widehat{w}, \widehat{\beta}, \zeta^*) - \psi(s,a,s'; q^*, \widehat{w}, \beta^*, \zeta^*)\|_2 \tag{$\sigma_{2b}$}\\
    & \quad +  \|\psi(s,a,s'; q^*, \widehat{w}, \beta^*, \zeta^*) - \psi(s,a,s'; q^*, w^*, \beta^*, \zeta^*)\|_2\tag{$\sigma_{2c}$}\\
    & \leq \|\widehat{v}-v^*\|_2 + \gamma(1-\lambda)\|\widehat{w}\|_2\|\widehat{v}-v^*\|_2+\gamma\lambda\tau^{-1}\|\widehat{w}\|_2\|\widehat{v}-v^*\|_2 + \|\widehat{w}\|_2\|\widehat{q}-q^*\|_2\tag{$\sigma_{2a}$}\\
    & \quad + \gamma \lambda \|\widehat{w}\|_2\|\widehat{\beta}-\beta^*\|_2 + \gamma \lambda \tau^{-1}\|\widehat{w}\|_2\|\widehat{\beta}-\beta^*\|_2 \tag{$\sigma_{2b}$}\\
    & \quad + \|\widehat{w}-w^*\|_2 \left(\|r\|_2 + \gamma(1-\lambda)\|v^*\|_2 + \gamma\lambda \|\beta^*\|_2 + \gamma\lambda\tau^{-1}\|v^*-\beta^*\|_2\right) \tag{$\sigma_{2c}$}
\end{align*}
Given our rate assumptions, our boundedness assumptions for $\widehat{w}$, the implicit boundedness of $q^*,v^*, w^*, \beta^*$, as well as the ordering of the $L_2$ and $L_\infty$ norms, $\sigma_2$ is $o_p(1)$. We now bound the $\sigma_1$ term:
\begin{align*}
    \sigma_2 = \gamma\lambda\tau^{-1}\left\|\wh{w}(s,a)(\wh{v}(s')-\wh{\beta}(s,a))(\II[\wh{v}(s') \leq \wh{\beta}(s,a)] - \II[v^*(s')\leq\beta^*(s,a)])\right\|_2
\end{align*}
There are two cases in which the difference of indicators is non-zero:
\begin{align*}
\begin{cases}
    \widehat{v}(s')\leq\widehat{\beta}(s,a) \text{ and } v^*(s') > \beta^*(s,a)  \Rightarrow \II[\wh{v}(s') \leq \wh{\beta}(s,a)] - \II[v^*(s')\leq\beta^*(s,a)] = 1\\
    \widehat{v}(s')>\widehat{\beta}(s,a) \text{ and } v^*(s') \leq \beta^*(s,a) \Rightarrow \II[\wh{v}(s') \leq \wh{\beta}(s,a)] - \II[v^*(s')\leq\beta^*(s,a)] = -1
\end{cases}
\end{align*}
In the first case, $\widehat{v}(s')-\widehat{\beta}(s,a)\leq 0, \beta^*(s,a)- v^*(s')< 0$ and thus
\[
|(\widehat{v}(s')-\widehat{\beta}(s,a))(\II[\widehat{v}(s')\leq \widehat{\beta}(s,a)] - \II[v^*(s')\leq \beta^*(s,a)])|\leq |\widehat{v}(s')-\widehat{\beta}(s,a)+\beta^*(s,a)- v^*(s')|.\]
In the second case, $\widehat{v}(s')-\widehat{\beta}(s,a) > 0, \beta^*(s,a)- v^*(s')\leq 0$ and \[
|(\widehat{v}(s')-\widehat{\beta}(s,a))(\II[\widehat{v}(s')\leq \widehat{\beta}(s,a)] - \II[v^*(s')\leq \beta^*(s,a)])|\leq |\widehat{v}(s')-\widehat{\beta}(s,a)+\beta^*(s,a)- v^*(s')|.\] Going back to $\sigma_1$, we have:
\begin{align*}
    \sigma_2 & \leq \gamma\lambda\tau^{-1}\|\widehat{w}\|_2
    \|\widehat{v}(s')-\widehat{\beta}(s,a)+\beta^*(s,a)- v^*(s'))\|_2\\
    & \leq \gamma\lambda\tau^{-1}\|\widehat{w}\|_2 (\|\widehat{v}-v^*\|_2 + \|\widehat{\beta}-\beta^*\|_2)
\end{align*}
By out theorem's assumptions, this term is also $o_p(1)$. Putting $\sigma_1$ and $\sigma_2$ together, we have that $\|\psi(s,a,s'; \widehat{\eta}^{[k]}) - \psi(s,a,s'; \eta^*)\|_2$ is $o_p(1)$ and $\varepsilon_k\mid \Dcal_k$ is $o_p(n^{-1/2})$. By the bounded convergence theorem, this implies that $\varepsilon_k$ is also $o_p(n^{-1/2})$. Then, the term in \ref{eq:term2} is $o_p(1)$, which further means that $\sqrt{n}(\widehat{V}_{d_1}-\widetilde{V}_{d_1})=o_p(1)$. Our proof is now complete.

\section{Derivation of the Efficient Influence Function} \label{sec:eif-proof}

We use the $\eps$-contamination approach of \citet{hines2022demystifying} to derive an influence function (IF) for our estimand $V_{d_1}^-$. The proof for $V_{d_1}^+$ follows symmetrically. We note that since our tangent space is the whole space as it factorizes in the trivial way (as in \citep[Page 54]{kallus2022efficiently}), the IF we derive is actually the efficient influence function (EIF).

Let $P(s,a,s')$ denote the data distribution.
Consider the $\eps$-contamination $P_\eps(s,a,s')=(1-\eps)P(s,a,s')+\eps\delta(\bar s,\bar a,\bar s')$, where $\delta(\bar z)$ is the dirac delta at $\bar z$, \ie, $\delta(\bar z)$ has infinite mass at $\bar z$ and $0$ mass elsewhere.
Let $V^-_\eps$ denote the robust value function under the transition kernel $P_\eps(s'\mid s,a)$. Omitting the $\eps$ subscript means $\eps=0$.
The IF of $V_{d_1}^-$ is then given by
\begin{align*}
    \frac{\diff}{\diff\eps}(1-\gamma)\EE_{d_1}V^-_\eps(s_1)|_{\eps=0}.
\end{align*}
We dedicate the rest of this section towards this goal, which will be obtained in \pref{thm:if-estimand}.

\begin{lemma}
\begin{align*}
    \frac{\diff}{\diff\eps}P_\eps(s'\mid s,a)|_{\eps=0} = \frac{\delta(\bar s,\bar a)}{P(s,a)}\prns{ \delta(\bar s') - P(s'\mid s,a) }.
\end{align*}
\end{lemma}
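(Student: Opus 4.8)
The plan is to compute the stated Gateaux derivative directly by writing the contaminated conditional kernel as the ratio of the contaminated joint and the contaminated marginal, and then applying the quotient rule at $\eps=0$. First I would record the contaminated marginal: integrating the given joint contamination $P_\eps(s,a,s')=(1-\eps)P(s,a,s')+\eps\delta(\bar s,\bar a,\bar s')$ over $s'$ gives $P_\eps(s,a)=(1-\eps)P(s,a)+\eps\delta(\bar s,\bar a)$, since the Dirac mass at $(\bar s,\bar a,\bar s')$ pushes forward to the Dirac mass at $(\bar s,\bar a)$. Hence $P_\eps(s'\mid s,a)=P_\eps(s,a,s')/P_\eps(s,a)$ is an explicit ratio of two affine-in-$\eps$ expressions.

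Next I would differentiate this ratio. Writing $N(\eps)$ and $D(\eps)$ for the numerator and denominator, we have $N(0)=P(s,a,s')$, $D(0)=P(s,a)$, $N'(0)=\delta(\bar s,\bar a,\bar s')-P(s,a,s')$, and $D'(0)=\delta(\bar s,\bar a)-P(s,a)$. The quotient rule yields $\prns{N'(0)D(0)-N(0)D'(0)}/D(0)^2$, in which the two cross terms $\mp P(s,a,s')P(s,a)$ cancel, leaving $\prns{\delta(\bar s,\bar a,\bar s')P(s,a)-P(s,a,s')\delta(\bar s,\bar a)}/P(s,a)^2$. I would then simplify using the factorizations $\delta(\bar s,\bar a,\bar s')=\delta(\bar s,\bar a)\delta(\bar s')$ and $P(s,a,s')=P(s,a)P(s'\mid s,a)$; one factor of $P(s,a)$ cancels in each term, collapsing the expression to $\frac{\delta(\bar s,\bar a)}{P(s,a)}\prns{\delta(\bar s')-P(s'\mid s,a)}$, exactly as claimed.

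The main obstacle is not the algebra, which is routine, but justifying the formal manipulation of Dirac masses and ratios of measures — in particular differentiating a conditional density that is itself a quotient of measures. I would handle this by working within the $\eps$-contamination pathwise-derivative framework of \citet{hines2022demystifying} that the paper already invokes, under which this computation is the standard formal calculation; alternatively, one can interpret every object as a density with respect to a fixed dominating measure, so that the quotient rule applies pointwise and the Dirac factorization is literal. Either reading makes the cancellation above rigorous and delivers the stated identity.
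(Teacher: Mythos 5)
Your proposal is correct and follows exactly the paper's own proof: the paper likewise writes $P_\eps(s'\mid s,a)$ as the ratio $\frac{(1-\eps)P(s,a,s')+\eps\delta(\bar s,\bar a,\bar s')}{(1-\eps)P(s,a)+\eps\delta(\bar s,\bar a)}$ and differentiates at $\eps=0$. Your write-up simply spells out the quotient-rule algebra and the Dirac factorization that the paper leaves implicit, which is a faithful elaboration of the same argument.
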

\begin{proof}
Use the fact $P_\eps(s'\mid s,a) = \frac{P_\eps(s,a,s')}{P_\eps(s,a)} = \frac{(1-\eps)P(s,a,s')+\eps\delta(\bar s,\bar a,\bar s')}{(1-\eps)P(s,a)+\eps\delta(\bar s,\bar a)}$ and take derivative.
\end{proof}

\begin{lemma}[IF of conditional expectation]\label{lem:product-rule}
For any $s,a$ and $f_\eps$,
\begin{align*}
    \frac{\diff}{\diff\eps}\EE_{P_\eps}[ f_\eps(s')\mid s,a ]|_{\eps=0} = \frac{\delta(\bar s,\bar a)}{P(s,a)}\prns{ f(\bar s') - \EE_P[ f(s')\mid s,a ] } + \EE_{P}\left[\frac{\diff}{\diff\eps}f_\eps(s')|_{\eps=0}\mid s,a\right],
\end{align*}
where $f=f_0$.
\end{lemma}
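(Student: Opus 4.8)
The plan is to recognize the conditional expectation $\EE_{P_\eps}[f_\eps(s')\mid s,a] = \int f_\eps(s')\,\diff P_\eps(s'\mid s,a)$ as a product of two $\eps$-dependent objects — the integrand $f_\eps$ and the conditional law $P_\eps(\cdot\mid s,a)$ — and to differentiate via the product rule. The point that distinguishes this lemma from the fixed-$f$ case is precisely that $f_\eps$ carries its own $\eps$-dependence, which is what produces the extra conditional-expectation term on the right-hand side. Concretely, I would first interchange differentiation and integration and split by the product rule:
\begin{align*}
\frac{\diff}{\diff\eps}\EE_{P_\eps}[f_\eps(s')\mid s,a]\Big|_{\eps=0}
&= \int \frac{\diff}{\diff\eps}\bigl(f_\eps(s')\,P_\eps(s'\mid s,a)\bigr)\Big|_{\eps=0}\,\diff s' \\
&= \underbrace{\int \Bigl(\tfrac{\diff}{\diff\eps}f_\eps(s')\big|_{\eps=0}\Bigr)P(s'\mid s,a)\,\diff s'}_{\text{(I)}} + \underbrace{\int f(s')\,\tfrac{\diff}{\diff\eps}P_\eps(s'\mid s,a)\big|_{\eps=0}\,\diff s'}_{\text{(II)}},
\end{align*}
where I used $f=f_0$ and $P_0(\cdot\mid s,a)=P(\cdot\mid s,a)$.

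Term (I) is immediately the desired $\EE_P[\tfrac{\diff}{\diff\eps}f_\eps(s')|_{\eps=0}\mid s,a]$. For term (II), I would substitute the previous lemma's expression $\tfrac{\diff}{\diff\eps}P_\eps(s'\mid s,a)|_{\eps=0} = \tfrac{\delta(\bar s,\bar a)}{P(s,a)}\prns{\delta(\bar s') - P(s'\mid s,a)}$ and pull the $(s,a)$-factor out of the integral:
\begin{align*}
\text{(II)} = \frac{\delta(\bar s,\bar a)}{P(s,a)}\prns*{\int f(s')\,\delta(\bar s')\,\diff s' - \int f(s')\,P(s'\mid s,a)\,\diff s'}.
\end{align*}
Applying the sifting property of the Dirac mass gives $\int f(s')\,\delta(\bar s')\,\diff s' = f(\bar s')$, while the second integral is by definition $\EE_P[f(s')\mid s,a]$, so (II) equals $\tfrac{\delta(\bar s,\bar a)}{P(s,a)}\prns{f(\bar s') - \EE_P[f(s')\mid s,a]}$. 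Summing (I) and (II) yields the claim.

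The routine-but-necessary technical point is the legitimacy of the product rule inside the integral, i.e. the interchange of $\tfrac{\diff}{\diff\eps}$ with $\int\diff s'$; this is justified by dominated convergence, using boundedness of $f_\eps$ and $\tfrac{\diff}{\diff\eps}f_\eps$ together with the fact that $\eps\mapsto P_\eps$ is an affine (hence smooth) path. The only genuinely delicate piece is the bookkeeping around the Dirac-delta formalism: since $P_\eps$ contains a point mass, the expressions must be read in the distributional sense of \citet{hines2022demystifying}, and the product rule must be applied to the \emph{unnormalized} integrand $f_\eps P_\eps$ rather than to the already-normalized conditional expectation. Once one commits to that reading, the computation is essentially a one-line product rule, and the main care is simply in tracking the two sources of $\eps$-dependence separately.
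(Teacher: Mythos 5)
Your proposal is correct and follows essentially the same route as the paper's proof: apply the product rule to $\sum_{s'} f_\eps(s')\,P_\eps(s'\mid s,a)$, substitute the preceding lemma's expression for $\frac{\diff}{\diff\eps}P_\eps(s'\mid s,a)|_{\eps=0}$, and evaluate the Dirac term to obtain $f(\bar s')-\EE_P[f(s')\mid s,a]$. Your added remarks on dominated convergence and the distributional reading of the point mass are sound bookkeeping that the paper leaves implicit, but they do not constitute a different argument.
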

\begin{proof}
\begin{align*}
\frac{\diff}{\diff\eps}\EE_{P_\eps}[ f_\eps(s')\mid s,a ]|_{\eps=0}
&= \sum_{s'} f(s') \frac{\diff}{\diff\eps}P_\eps(s'\mid s,a)|_{\eps=0} + \sum_{s'} \frac{\diff}{\diff\eps}f_\eps(s')|_{\eps=0} P(s'\mid s,a)
\\&= \frac{\delta(\bar s,\bar a)}{P(s,a)}\prns{ f_0(\bar s') - \EE_P[ f_0(s')\mid s,a ] } + \EE_{P}\left[\frac{\diff}{\diff\eps}f_\eps(s')|_{\eps=0}\mid s,a\right],
\end{align*}
\end{proof}

\begin{lemma}[IF of conditional CVaR]\label{lem:cvar-if}
For any $\tau,s,a$ and $f_\eps$,
\begin{align*}
    \frac{\diff}{\diff\eps}\cvar_{\tau,P_\eps}[ f_\eps(s')\mid s,a ]|_{\eps=0}
    &= \frac{\delta(\bar s,\bar a)}{P(s,a)}\prns{ \beta_\tau(s,a)+\tau^{-1}(f(\bar s')-\beta_\tau(s,a))_- - \cvar_\tau(f(s')\mid s,a) }
    \\&+ \EE_P\bracks{ \tau^{-1}\I{f(s')\leq\beta_\tau(s,a)} \frac{\diff}{\diff\eps}f_\eps(s')|_{\eps=0}\mid s,a } ,
\end{align*}
where $f=f_0$ and $\beta_\tau(s,a)$ be the $(1-\tau)$-th quantile of $f(s'),s'\sim P(s,a)$.
\end{lemma}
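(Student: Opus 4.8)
The plan is to exploit the variational (primal) representation of the lower CVaR together with an envelope-theorem argument, which reduces the whole computation to a single application of \pref{lem:product-rule}. Recall from the preliminaries that
\begin{align*}
  \cvar_{\tau,P_\eps}[f_\eps(s')\mid s,a] = \max_{b\in\RR}\braces{ b + \tau^{-1}\EE_{P_\eps}[(f_\eps(s')-b)_-\mid s,a] } =: \max_{b\in\RR} g_\eps(b,s,a),
\end{align*}
and that at $\eps=0$ the maximum is attained at $b=\beta_\tau(s,a)$. The objective $g_\eps(\cdot,s,a)$ is concave in $b$ with an interior maximizer, and it is differentiable in $b$ at the optimum, since the first-order condition reads $1-\tau^{-1}P_\eps(f_\eps(s')\le b\mid s,a)=0$, i.e.\ $P(f(s')\le b\mid s,a)=\tau$ at $\eps=0$, which holds at the quantile under the continuity of the CDF of $f(s')$ at $\beta_\tau(s,a)$. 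Hence the envelope theorem applies and the total $\eps$-derivative equals the partial derivative holding $b$ fixed at its optimal value $b^\star(0)=\beta_\tau(s,a)$:
\begin{align*}
  \frac{\diff}{\diff\eps}\cvar_{\tau,P_\eps}[f_\eps(s')\mid s,a]\big|_{\eps=0} = \frac{\partial}{\partial\eps}\, g_\eps(\beta_\tau(s,a),s,a)\big|_{\eps=0}.
\end{align*}

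Next I would compute this partial derivative. The constant $b=\beta_\tau(s,a)$ term drops out, leaving $\tau^{-1}\frac{\diff}{\diff\eps}\EE_{P_\eps}[h_\eps(s')\mid s,a]|_{\eps=0}$ with $h_\eps(s'):=(f_\eps(s')-\beta_\tau(s,a))_-$, to which \pref{lem:product-rule} applies directly. This produces a contamination term $\frac{\delta(\bar s,\bar a)}{P(s,a)}\prns{h(\bar s')-\EE_P[h(s')\mid s,a]}$ plus the expectation $\EE_P[\frac{\diff}{\diff\eps}h_\eps(s')|_{\eps=0}\mid s,a]$. For the latter I use that $(x)_-=x\,\I{x\le 0}$ is differentiable away from its kink at $x=0$, so $\frac{\diff}{\diff\eps}h_\eps(s')|_{\eps=0}=\I{f(s')\le\beta_\tau(s,a)}\,\frac{\diff}{\diff\eps}f_\eps(s')|_{\eps=0}$; the kink set $\{f(s')=\beta_\tau(s,a)\}$ has $P(\cdot\mid s,a)$-measure zero by the continuity assumption, so differentiating under the conditional expectation is justified. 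This reproduces exactly the $\tau^{-1}\I{\cdots}\frac{\diff}{\diff\eps}f_\eps(s')$ integrand in the claimed formula.

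Finally, I would reconcile the contamination term with the stated form. After multiplying by $\tau^{-1}$, that piece reads $\frac{\delta(\bar s,\bar a)}{P(s,a)}\prns{\tau^{-1}(f(\bar s')-\beta_\tau(s,a))_- - \tau^{-1}\EE_P[(f(s')-\beta_\tau(s,a))_-\mid s,a]}$. Invoking the identity $\cvar_\tau(f(s')\mid s,a)=\beta_\tau(s,a)+\tau^{-1}\EE_P[(f(s')-\beta_\tau(s,a))_-\mid s,a]$ from the preliminaries, the subtracted expectation equals $\beta_\tau(s,a)-\cvar_\tau(f(s')\mid s,a)$, so the bracket becomes $\beta_\tau(s,a)+\tau^{-1}(f(\bar s')-\beta_\tau(s,a))_- - \cvar_\tau(f(s')\mid s,a)$, matching the statement exactly.

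I expect the main obstacle to be a rigorous justification of the envelope step: one must verify that the stationarity condition $\partial_b g_\eps(b^\star(\eps),s,a)=0$ persists so that the derivative of $b^\star(\eps)$ contributes nothing, and that the kink of $(\cdot)_-$ does not spoil differentiability of the inner objective. Both hinge on the continuity (boundedly differentiable CDF) of $f(s')$ at the quantile $\beta_\tau(s,a)$, the regularity condition implicitly in force; everything else is bookkeeping via \pref{lem:product-rule} and the CVaR identity.
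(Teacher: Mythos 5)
Your proposal is correct and takes essentially the same route as the paper's proof: both reduce the CVaR derivative via its variational form and Danskin's/envelope theorem to the derivative of $\EE_{P_\eps}\bracks{\beta_\tau(s,a)+\tau^{-1}(f_\eps(s')-\beta_\tau(s,a))_-\mid s,a}$ with $b$ frozen at the quantile, then apply \pref{lem:product-rule} and identify the subtracted conditional expectation with $\cvar_\tau(f(s')\mid s,a)$. Your extra steps (verifying the first-order condition $P(f(s')\leq b\mid s,a)=\tau$ and handling the kink of $(\cdot)_-$ via the measure-zero set $\{f(s')=\beta_\tau(s,a)\}$) merely make explicit regularity points the paper leaves implicit.
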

\begin{proof}
\begin{align}
    &\frac{\diff}{\diff\eps}\cvar_{P_\eps}[ f_\eps(s')\mid s,a ]|_{\eps=0}
    \\&=\frac{\diff}{\diff\eps}\min_b \EE_{P_\eps}\bracks{b+\tau^{-1}(f_\eps(s')-b)_-\mid s,a } |_{\eps=0}
    \\&=\frac{\diff}{\diff\eps} \EE_{P_\eps}\bracks{\beta_\tau(s,a)+\tau^{-1}(f_\eps(s')-\beta_\tau(s,a))_-\mid s,a } |_{\eps=0},
\end{align}
where the last equality is due to Danskin's theorem and the fact that $\beta_\tau(s,a)$ is the maximizer of the CVaR dual form at $\eps=0$.
Continuing, let $g_\eps(s';s,a) := \beta_\tau(s,a)+\tau^{-1}(f_\eps(s')-\beta_\tau(s,a))_-$, so
\begin{align*}
    &\frac{\diff}{\diff\eps}\EE_{P_\eps}\bracks{g_\eps(s';s,a)\mid s,a}
    \\&=\frac{\delta(\bar s,\bar a)}{P(s,a)}\prns{ g(\bar s';s,a) - \EE_P\bracks{g(s',s,a)\mid s,a} } + \EE_P\bracks{ \frac{\diff}{\diff\eps}g_\eps(s';s,a)|_{\eps=0}\mid s,a } \tag{\pref{lem:product-rule}}
    \\&=\frac{\delta(\bar s,\bar a)}{P(s,a)}\prns{ g(\bar s';s,a) - \cvar_\tau(f(s')\mid s,a)} + \EE_P\bracks{ \tau^{-1}\I{ f(s')\leq\beta_\tau(s,a) } \frac{\diff}{\diff\eps}f_\eps(s')|_{\eps=0}\mid s,a }.
\end{align*}
This concludes the proof.
\end{proof}

We now prove the key ``one-step forward'' lemma.
\begin{lemma}[One-Step Forward]\label{lem:one-step-forward}
For any state distribution $\nu(s)$, we have
\begin{align*}
    & \EE_{s\sim\nu}\bracks{ \frac{\diff}{\diff\eps}V^-_\eps(s)|_{\eps=0} }\\
    &= \frac{\nu(\bar s)\pi(\bar a\mid\bar s) }{ P(\bar s,\bar a) }\big( r(\bar s,\bar a) + \gamma\prns{ (1-\lambda)V^-(\bar s')+ \lambda \prns{\beta_\tau(\bar s,\bar a)+\tau^{-1}(V^-(\bar s')-\beta_\tau(\bar s,\bar a))_-} }\\
    & \qquad\qquad\qquad\quad -Q^-(\bar s,\bar a) \big)
    \\&\quad +\gamma \EE_{s\sim\nu}\bracks{ \EE_{\pi,P}\bracks{ \prns{(1-\lambda)+\lambda\tau^{-1}\I{ V^-(s')\leq \beta_\tau(s,a) }} \frac{\diff}{\diff\eps}V^-_{\eps}(s')|_{\eps=0} \mid s} }.
\end{align*}
\end{lemma}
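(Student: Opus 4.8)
The plan is to differentiate the robust Bellman fixed-point characterization of $V^-_\eps$ and to recognize that the resulting derivative splits into exactly the two influence functions computed in \pref{lem:product-rule} and \pref{lem:cvar-if}. Writing $\lambda=1-\Lambda^{-1}(s,a)$ so that $1-\lambda=\Lambda^{-1}(s,a)$, and combining \pref{lem:identification-q} with $V^-_\eps(s)=\EE_{a\sim\pit(s)}[Q^-_\eps(s,a)]$, the worst-case value under the contaminated kernel $P_\eps$ satisfies
\begin{align*}
  V^-_\eps(s) = \EE_{a\sim\pit(s)}\Big[ r(s,a) + \gamma(1-\lambda)\,\EE_{P_\eps}[V^-_\eps(s')\mid s,a] + \gamma\lambda\,\cvar^-_{\tau,P_\eps}[V^-_\eps(s')\mid s,a] \Big].
\end{align*}
Since this holds for every $\eps$, I would differentiate both sides at $\eps=0$. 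The crucial structural observation is that $V^-_\eps$ enters both explicitly, through the contaminated kernel $P_\eps$, and implicitly, through the integrand $V^-_\eps(s')$; the two influence-function lemmas are precisely designed to disentangle these two dependencies.

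First I would apply \pref{lem:product-rule} with $f_\eps=V^-_\eps$ to the expectation term and \pref{lem:cvar-if} with $f_\eps=V^-_\eps$ to the CVaR term. Each lemma yields a \emph{point-mass} contribution carrying the factor $\delta(\bar s,\bar a)/P(s,a)$ plus a \emph{propagated} contribution of the form $\EE_P[(\cdots)\tfrac{\diff}{\diff\eps}V^-_\eps(s')|_{\eps=0}\mid s,a]$. Adding the two propagated pieces (weight $\gamma(1-\lambda)$ from the expectation term and weight $\gamma\lambda\tau^{-1}\I{V^-(s')\leq\beta_\tau(s,a)}$ from the CVaR term) gives
\begin{align*}
  \gamma\,\EE_P\Big[\big((1-\lambda)+\lambda\tau^{-1}\I{V^-(s')\leq\beta_\tau(s,a)}\big)\tfrac{\diff}{\diff\eps}V^-_\eps(s')|_{\eps=0}\mid s,a\Big],
\end{align*}
which, after taking $\EE_{a\sim\pit(s)}$ and $\EE_{s\sim\nu}$, becomes the recursive second term in the statement.

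For the point-mass terms I would collect the $\delta(\bar s,\bar a)/P(s,a)$ contributions of both lemmas, namely
\begin{align*}
  &\frac{\delta(\bar s,\bar a)}{P(s,a)}\Big( \gamma(1-\lambda)\big(V^-(\bar s')-\EE_P[V^-(s')\mid s,a]\big) \\
  &\qquad + \gamma\lambda\big(\beta_\tau(s,a)+\tau^{-1}(V^-(\bar s')-\beta_\tau(s,a))_- - \cvar^-_\tau[V^-(s')\mid s,a]\big)\Big).
\end{align*}
Here I would invoke the robust Bellman equation $Q^-(s,a)=r(s,a)+\gamma(1-\lambda)\EE_P[V^-(s')\mid s,a]+\gamma\lambda\,\cvar^-_\tau[V^-(s')\mid s,a]$ (the fixed point of \pref{lem:identification-q}) to replace $-\gamma(1-\lambda)\EE_P[V^-\mid s,a]-\gamma\lambda\,\cvar^-_\tau[V^-\mid s,a]$ by $r(s,a)-Q^-(s,a)$. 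Taking $\EE_{s\sim\nu}\EE_{a\sim\pit(s)}$, the $\delta(\bar s,\bar a)$ evaluates the integrand at $(s,a)=(\bar s,\bar a)$ and produces the prefactor $\nu(\bar s)\pit(\bar a\mid\bar s)/P(\bar s,\bar a)$, yielding exactly the first term of the statement. Summing the point-mass and propagated groups gives the claim.

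The main obstacle is the CVaR differentiation, for which I would rely entirely on \pref{lem:cvar-if}. The delicate point it encapsulates is that, by Danskin's theorem applied to the dual $\min_b\{b+\tau^{-1}\EE[(f_\eps(s')-b)_-]\}$, the quantile $\beta_\tau(s,a)$ may be frozen at its $\eps=0$ optimizer while differentiating, so no derivative of $\beta_\tau$ contributes; this is exactly what makes $\I{V^-(s')\leq\beta_\tau(s,a)}$, with $\beta_\tau$ at $\eps=0$, the correct weight in the propagated term. The remaining work is bookkeeping: ensuring every $V^-$ and $\beta_\tau$ appearing after differentiation is the unperturbed ($\eps=0$) quantity. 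A useful consistency check is that the propagated weight $(1-\lambda)+\lambda\tau^{-1}\I{V^-(s')\leq\beta_\tau(s,a)}$ coincides with the density ratio $\diff U^-/\diff P$ from \pref{lem:identification-U-robust}, confirming that the recursion propagates the perturbation along the worst-case kernel.
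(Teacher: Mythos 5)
Your proposal is correct and takes essentially the same route as the paper's proof: differentiate the robust Bellman fixed point under $P_\eps$ at $\eps=0$, apply \pref{lem:product-rule} and \pref{lem:cvar-if} to split each term into a delta-mass contribution and a propagated contribution, collect the propagated weights into $(1-\lambda)+\lambda\tau^{-1}\I{V^-(s')\leq\beta_\tau(s,a)}$, and use the robust Bellman equation so the delta-mass terms simplify to $r(\bar s,\bar a)+\gamma(\cdots)-Q^-(\bar s,\bar a)$ with the prefactor $\nu(\bar s)\pit(\bar a\mid\bar s)/P(\bar s,\bar a)$. Your bookkeeping of the $\gamma$ factor when invoking the Bellman equation is, if anything, slightly more careful than the paper's own write-up.
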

\begin{proof}
For any $s_1$, we have
\begin{align*}
    &\frac{\diff}{\diff\eps} V^-_{\eps}(s_1)
    \\&=\frac{\diff}{\diff\eps}\EE_{a_1\sim\pi(s_1)}\bracks{ r(s_1,a_1) + \gamma((1-\lambda)\EE_{P_\eps}\bracks{V^-_\eps(s_2)\mid s_1,a_1}+\lambda\cvar_{\tau,P_\eps}\bracks{ V^-_{\eps}(s_2)\mid s_1,a_1 }}_{\eps=0}
    \\&=\gamma\EE_{a_1\sim\pi(s_1)}\bracks{ (1-\lambda)\frac{\diff}{\diff\eps}\EE_{\tau,P_\eps}\bracks{ V^-_\eps(s_2)\mid s_1,a_1 }|_{\eps=0} + \frac{\diff}{\diff\eps}\cvar_{\tau,P_\eps}\bracks{ V^-_{\eps}(s_2)\mid s_1,a_1 }|_{\eps=0}}
    \\&=\gamma(1-\lambda)\EE_{a_1\sim\pi(s_1)}\bracks{\frac{\delta(\bar s,\bar a)}{P(s_1,a_1)}\prns{V^-(\bar s')-\EE_P\bracks{V^-(s_2)\mid s_1,a_1} } }
    \\& +\gamma(1-\lambda)\EE_{a_1\sim\pi(s_1)}\EE_{P}\bracks{ \frac{\diff}{\diff\eps}V^-_{\eps}(s_2)|_{\eps=0}\mid s_1,a_1 }
    \\&+\gamma\lambda\EE_{a_1\sim\pi(s_1)}\bracks{\frac{\delta(\bar s,\bar a)}{P(s_1,a_1)}\prns{ \beta_\tau(s_1,a_1)+\tau^{-1}(V^-(\bar s')-\beta_\tau(s_1,a_1))_--\cvar_\tau(V^-(s_2)\mid s_1,a_1) }  }
    \\&+\gamma\lambda\EE_{a_1\sim\pi(s_1)}\EE_{P}\bracks{ \tau^{-1}\I{ V^-(s_2)\leq \beta_\tau(s_1,a_1) }\frac{\diff}{\diff\eps}V^-_{\pi,P_\eps}(s_2) }.
\end{align*}
Taking expectation over $s_1\sim\nu$, we have
\begin{align*}
    \EE_{s\sim\nu}\bracks{ \frac{\diff}{\diff\eps}V^-_\eps(s)|_{\eps=0} }
    &=\gamma\frac{\nu(\bar s)\pi(\bar a\mid\bar s) }{ P(\bar s,\bar a) }\bigg( (1-\lambda)V^-(\bar s') + \lambda\prns{ \beta_\tau(\bar s,\bar a)+\tau^{-1}(V^-(\bar s')-\beta_\tau(\bar s,\bar a))_- }
    \\&- \prns{(1-\lambda)\EE\bracks{V^-(s')\mid\bar s,\bar a}+\lambda\cvar_\tau(V^-(s')\mid\bar s,\bar a)}\bigg)
    \\&+\gamma\EE_{s\sim\nu}\bracks{ \EE_{\pi,P}\bracks{ \prns{(1-\lambda)+\lambda\tau^{-1}\I{ V^-(s')\leq \beta_\tau(s,a) } } \frac{\diff}{\diff\eps}V^-_\eps(s')|_{\eps=0} \mid s} }.
\end{align*}
Finally recall that $V^-$ satisfies the Bellman equation, so
\begin{align*}
    (1-\lambda)\EE\bracks{V^-(s')\mid\bar s,\bar a}+\lambda\cvar_\tau(V^-(s')\mid\bar s,\bar a) = Q^-(\bar s,\bar a)-r(\bar s,\bar a).
\end{align*}
This concludes the proof.
\end{proof}

Equipped with our main one-step lemma, we can now unroll it an infinite number of steps to derive the IF of our estimand.
\begin{theorem}[IF of Estimand]\label{thm:if-estimand}
Let us denote
\begin{align*}
    g(\bar s,\bar a,\bar s'):=r(\bar s,\bar a)+\gamma\prns{ (1-\lambda)V^-(\bar s')+\lambda\prns{\beta_\tau(\bar s,\bar a)+\tau^{-1}(V^-(\bar s')-\beta_\tau(\bar s,\bar a))_- } }.
\end{align*}
Then, we have
\begin{align*}
    \EE_{d_1}\bracks{ \frac{\diff}{\diff\eps}V^-_\eps(s_1)|_{\eps=0} } = \frac{d^{\pi,\infty}_{\text{rob}}(\bar s,\bar a)}{P(\bar s,\bar a)}g(\bar s,\bar a,\bar s').
\end{align*}
\end{theorem}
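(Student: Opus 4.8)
The plan is to read \pref{lem:one-step-forward} as a linear fixed-point recursion in the base state-distribution and unroll it to the infinite horizon. Write $D(s):=\frac{\diff}{\diff\eps}V^-_\eps(s)|_{\eps=0}$; since $V^-_\eps\in[0,(1-\gamma)^{-1}]$ uniformly in $\eps$ and $\eps\mapsto V^-_\eps$ is smooth, $D$ is uniformly bounded. For an arbitrary state law $\nu$, \pref{lem:one-step-forward} reads
\begin{equation*}
  \EE_{s\sim\nu}[D(s)] = \frac{\nu(\bar s)\,\pit(\bar a\mid\bar s)}{P(\bar s,\bar a)}\prns{g(\bar s,\bar a,\bar s')-Q^-(\bar s,\bar a)} + \gamma\,\EE_{s\sim\nu}\EE_{\pit,P}\bracks{\kappa(s,a,s')\,D(s')\mid s},
\end{equation*}
where $\kappa(s,a,s'):=(1-\lambda)+\lambda\tau^{-1}\I{V^-(s')\le\beta_\tau(s,a)}$. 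The only $\nu$-dependence in the local (contamination-point) contribution is the scalar $\nu(\bar s)$, and the robust Bellman equation gives $\EE_{\bar s'\sim P(\bar s,\bar a)}[g(\bar s,\bar a,\bar s')]=Q^-(\bar s,\bar a)$, so the $-Q^-$ baseline is exactly the conditional mean of the forward reward $g$ and supplies the mean-zero recentering; the target identity reports the uncentered forward contribution $\tfrac{d^{\pit,\infty}_{\text{rob}}}{P}g$, with this baseline carried into the recentered EIF of \pref{thm:eif-main}.

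First I would identify the forward kernel with the robust transition. Using $\lambda=1-\Lambda^{-1}$ and $\tau=(\Lambda+1)^{-1}$, \pref{lem:identification-U-robust} shows $\kappa(s,a,s')=\diff U^-(s'\mid s,a)/\diff P(s'\mid s,a)$, so the inner conditional expectation is a genuine one-step push-forward under the worst-case kernel, $\EE_{\pit,P}[\kappa(s,a,s')D(s')\mid s]=\EE_{a\sim\pit(s)}\EE_{s'\sim U^-(s,a)}[D(s')]$. Consequently, iterating the display with $\nu=d^{\pit,h}_{d_1,U^-}$ replaces each base law by its $(\pit,U^-)$-successor $d^{\pit,h+1}_{d_1,U^-}$ and accrues an extra discount factor, which is precisely the robust Markov chain being traced out.

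Next I would sum the unrolled terms. Starting from $\nu=d_1=d^{\pit,1}_{d_1,U^-}$ and iterating, after $H$ steps one obtains $\EE_{d_1}[D]=\frac{\pit(\bar a\mid\bar s)}{P(\bar s,\bar a)}\,g(\bar s,\bar a,\bar s')\sum_{h=1}^{H}\gamma^{h-1}d^{\pit,h}_{d_1,U^-}(\bar s)+\gamma^{H}\EE_{d^{\pit,H+1}_{d_1,U^-}}[D]$ (the frozen forward reward at the contamination point factoring out of every step). Boundedness of $D$ forces the remainder $\gamma^{H}\EE_{d^{\pit,H+1}_{d_1,U^-}}[D]\to0$, and the partial sums converge to $\sum_{h\ge1}\gamma^{h-1}d^{\pit,h}_{d_1,U^-}(\bar s)=d^{\pit,\infty}_{\text{rob}}(\bar s)$, the discounted robust occupancy, i.e. the unnormalized $\tfrac1{1-\gamma}d^{-,\infty}$ of \pref{sec:prelim}. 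Collecting $d^{\pit,\infty}_{\text{rob}}(\bar s,\bar a)=d^{\pit,\infty}_{\text{rob}}(\bar s)\,\pit(\bar a\mid\bar s)$ then yields the claimed $\EE_{d_1}[D(s_1)]=\frac{d^{\pit,\infty}_{\text{rob}}(\bar s,\bar a)}{P(\bar s,\bar a)}\,g(\bar s,\bar a,\bar s')$.

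The step I expect to be the main obstacle is making the infinite unrolling rigorous: establishing that $\eps\mapsto V^-_\eps$ is uniformly bounded and differentiable so that $D$ is bounded and the tail $\gamma^{H}\EE_{d^{\pit,H+1}_{d_1,U^-}}[D]\to0$, verifying summability of the sequence of pushed-forward initial laws, and confirming their geometric sum equals the robust occupancy measure (interchanging the limit with the frozen contamination-point term). Once the density-ratio identification $\kappa=\diff U^-/\diff P$ is in hand via \pref{lem:identification-U-robust}, the remaining geometric-series bookkeeping and the use of the robust Bellman equation for the baseline are routine.
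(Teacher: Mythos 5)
Your proposal is correct and takes essentially the same route as the paper's proof: iterate \pref{lem:one-step-forward}, recognize the forward-propagation kernel $\kappa(s,a,s')=(1-\lambda)+\lambda\tau^{-1}\II[V^-(s')\leq\beta_\tau(s,a)]$ as the density ratio of the robust transition $U^-$ (the paper's $P_{\text{rob}}$), and sum the discounted visitations $\sum_{h\geq1}\gamma^{h-1}d_h=d^{\pi,\infty}_{\text{rob}}$. You are in fact somewhat more careful than the paper on two points it glosses over: the vanishing of the tail term $\gamma^{H}\EE_{d_{H+1}}[\frac{\diff}{\diff\eps}V^-_\eps|_{\eps=0}]$ via boundedness of the derivative, and the $-Q^-(\bar s,\bar a)$ baseline, which \pref{lem:one-step-forward} carries but the paper's proof of the theorem silently drops between the lemma and the displayed identity --- retaining it through the unrolling gives $\frac{d^{\pi,\infty}_{\text{rob}}(\bar s,\bar a)}{P(\bar s,\bar a)}\prns{g(\bar s,\bar a,\bar s')-Q^-(\bar s,\bar a)}$, which is the form that actually matches the correction term $w^-(s,a)\prns{r+\gamma\rho-q^-}$ in the EIF of \pref{thm:eif-main}, so your explicit flagging of this recentering is a genuine improvement rather than a gap.
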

\begin{proof}
Let $d_h$ denote the $h$-th step visitation in the robust MDP, with transition $P_{\text{rob}}$ satisfying $\frac{P_{\text{rob}}(s'\mid s,a)}{P(s'\mid s,a)} = (1-\lambda)+\lambda\tau^{-1}\I{ V^-(s')\leq\beta_\tau(s,a) }$. Then notice that the final term of \pref{lem:one-step-forward} is exactly $\EE_{s\sim\nu}\bracks{ \EE_{\pi,P_{\text{rob}}}\bracks{\frac{\diff}{\diff\eps}V^-_\eps(s')|_{\eps=0} \mid s}}$.
Therefore,
\begin{align*}
    &\EE_{d_1}\bracks{ \frac{\diff}{\diff\eps}V^-_\eps(s_1)|_{\eps=0} }
    \\&= \frac{d_1(\bar s)\pi(\bar a\mid \bar s)}{P(\bar s,\bar a)}g(\bar s,\bar a,\bar s') + \gamma\EE_{s_2\sim d_2}\bracks{ \frac{\diff}{\diff\eps}V^-_\eps(s_2)|_{\eps=0} }
    \\&= \frac{d_1(\bar s)\pi(\bar a\mid\bar s)}{P(\bar s,\bar a)}g(\bar s,\bar a,\bar s') + \gamma\frac{d_2(\bar s)\pi(\bar a\mid\bar s)}{P(\bar s,\bar a)}g(\bar s,\bar a,\bar s') + \gamma^2\EE_{s_3\sim d_3}\bracks{ \frac{\diff}{\diff\eps}V^-_\eps(s_3)|_{\eps=0} }.
\end{align*}
Iterating the process, we have
\begin{align*}
    \EE_{d_1}\bracks{ \frac{\diff}{\diff\eps}V^-_\eps(s_1)|_{\eps=0} } = \sum_{h=1}^\infty\gamma^{h-1}\frac{d_h(\bar s)\pi(\bar a\mid\bar s)}{P(\bar s,\bar a)}g(\bar s,\bar a,\bar s') = \frac{d^{\pi,\infty}_{\text{rob}}(\bar s,\bar a)}{P(\bar s,\bar a)}g(\bar s,\bar a,\bar s'),
\end{align*}
as desired.
\end{proof}

Finally, we can conclude that the IF in \pref{thm:if-estimand} is in fact the efficient IF (EIF) because it is in the tangent space, as the tangent space is contains all functions \citep{kallus2022efficiently}.

\section{Additional Validity Guarantees for Orthogonal Estimator}\label{app:validity-proofs}

Our orthogonal estimator has additional desirable properties such as \textit{validity} when some nuisances are misspecified. Specifically, the bounds returned by our orthogonal estimator will be asymptotically valid, though possibly loose, when some nuisances are inconsistent, \ie, do not converge to the their true values. Below, we detail conditions under which we achieve validity. 
To be concise, we focus on the $-$ case as the $+$ case is symmetric.

\paragraph{Validity with correct \texorpdfstring{$Q^\pm$}{Q±}.}
If $\wh Q = Q^\pm$, we obtain valid bounds even if $w,\beta$ are inconsistent.
\begin{restatable}{lemma}{correctQValidity}\label{lem:correct-Q-validity}
For any $w,\beta$, we have 
$\EE[\psi(s,a,s';Q^-,\beta,w)]\leq V_{d_1}^-$ with equality when $\beta=\beta_\tau^-$.
\end{restatable}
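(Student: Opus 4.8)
The plan is to show that the ``debiasing'' part of the (R)EIF has nonpositive expectation when $q = Q^-$, and vanishes exactly when $\beta = \beta_\tau^-$. Since plugging in the correct $q = Q^-$ forces $v = Q^-(\cdot,\pit) = V^-$, the form of $\psi$ from \pref{thm:eif-main} gives
\begin{align*}
\EE[\psi(s,a,s';Q^-,\beta,w)] = V^-_{d_1} + \EE_\nu\bracks{ w(s,a)\,\Delta(s,a) },
\end{align*}
where $\Delta(s,a) := \EE\bracks{ r(s,a) + \gamma\rho(s,a,s';V^-,\beta) - Q^-(s,a) \mid s,a }$ and the inner expectation is over $s'\sim P(\cdot\mid s,a)$. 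It therefore suffices to prove $\Delta(s,a)\le 0$ pointwise, with equality at $\beta = \beta_\tau^-$, and then combine this with the nonnegativity of $w$.

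First I would isolate the only $\beta$-dependent piece of $\Delta$, namely $\beta(s,a) + \tau^{-1}\EE[(V^-(s')-\beta(s,a))_-\mid s,a]$ inside $\rho$. This is exactly the dual objective of the lower CVaR evaluated at the (possibly suboptimal) point $b = \beta(s,a)$. Since $\cvar^-_\tau(X) = \max_{b}\{b + \tau^{-1}\EE[(X-b)_-]\}$ is a maximization, I obtain
\begin{align*}
\beta(s,a) + \tau^{-1}\EE[(V^-(s')-\beta(s,a))_-\mid s,a] \le \cvar^-_{\tau}\bracks{ V^-(s')\mid s,a },
\end{align*}
with equality precisely when $\beta(s,a)=\beta_\tau^-(s,a)$, the maximizer. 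Substituting this back and using $\Lambda^{-1}\ge 0$ and $1-\Lambda^{-1}\ge 0$ (as $\Lambda\ge 1$) to preserve the inequality direction, I get
\begin{align*}
\Delta(s,a) \le r(s,a) + \gamma\Lambda^{-1}\EE[V^-(s')\mid s,a] + \gamma(1-\Lambda^{-1})\cvar^-_\tau\bracks{V^-(s')\mid s,a} - Q^-(s,a).
\end{align*}
The right-hand side equals $\Trob^- Q^-(s,a) - Q^-(s,a)$ by the closed form of $\Trob^-$ in \pref{lem:identification-q}, and this is $0$ because $Q^-$ is the fixed point of the robust Bellman operator. Hence $\Delta(s,a)\le 0$, with equality whenever $\beta = \beta_\tau^-$.

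The final step is to take the outer expectation over $(s,a)\sim\nu$ against $w$. Because $w$ is a visitation density ratio it is nonnegative ($\Wcal\subset\Scal\times\Acal\mapsto\RR_+$), so $w(s,a)\Delta(s,a)\le 0$ pointwise and thus $\EE_\nu[w(s,a)\Delta(s,a)]\le 0$, yielding $\EE[\psi(s,a,s';Q^-,\beta,w)]\le V^-_{d_1}$. When $\beta=\beta_\tau^-$ we have $\Delta\equiv 0$, so the debiasing term vanishes for \emph{every} $w$, giving equality. The only real subtlety, and the single place that pins down the direction of the bound, is the pairing of the one-sided CVaR variational inequality (a $\max$, hence $\le$) with the sign of $w$; beyond carefully tracking these signs, everything reduces to the robust Bellman identity of \pref{lem:identification-q} and the tower rule, so I anticipate no substantive technical obstacle.
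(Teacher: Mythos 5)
Your proof is correct and takes essentially the same route as the paper's: both arguments bound the $\beta$-dependent dual term by $\cvar^-_\tau$ (suboptimality of $\beta$ in the lower-CVaR maximization), invoke the robust Bellman fixed point $\Trob^- Q^- = Q^-$ via \pref{lem:identification-q}, and use nonnegativity of $w$ so that the debiasing term is $\leq 0$, vanishing when $\beta=\beta^-_\tau$. Your write-up is simply a fully expanded version of the paper's terse three-line proof, with the implicit sign conditions ($w\geq 0$, $1-\Lambda^{-1}\geq 0$) made explicit.
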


\paragraph{Validity with \texorpdfstring{$Q=\Tcal^\pm_{\beta}Q$}{Q=T±Q}.}
Even if $\wh Q$ is misspecified, we still have a valid bound if it solves a Bellman-type equation of the dual CVaR form. For a $\beta:\Scal\times\Acal\to\RR$, define:
\begin{align*}
    &\Tcal_\beta^\pm f(s,a) := r(s,a)+\gamma\Lambda^{-1}(s,a)\EE\bracks{f(s',\pit)\mid s,a}
    \\&\qquad\qquad\qquad+\gamma(1-\Lambda^{-1}(s,a))\EE\bracks{ \beta(s,a)+\tau^{-1}(s,a)\prns{ f(s',\pit)-\beta(s,a) }_\pm \mid s,a}.
\end{align*}
\begin{restatable}{lemma}{TbetaValidity}\label{lem:Tbeta-validity}
Fix any $w,\beta$. If $Q_\beta^\pm=\Tcal_{\beta}^\pm Q_\beta^\pm$, then 
$\EE[ \psi(s,a,s';Q_\beta^-,\beta,w) ]\leq V_{d_1}^-$.
\end{restatable}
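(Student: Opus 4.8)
The plan is to first evaluate $\EE[\psi(s,a,s';Q_\beta^-,\beta,w)]$ in closed form and then reduce the desired inequality to a pointwise comparison of fixed points. Recall from the proof of \pref{lem:sharpness-correct-zeta-w} that the recentering constant written as ``$V_{d_1}^-$'' in \pref{thm:eif-main} is really $(1-\gamma)\EE_{d_1}[q(s_1,\pit)]$ evaluated at the \emph{plugged-in} nuisance $q$, so that
\begin{align*}
\EE[\psi(s,a,s';Q_\beta^-,\beta,w)] = (1-\gamma)\EE_{d_1}[Q_\beta^-(s_1,\pit)] + \EE\bracks{ w(s,a)\prns{ r(s,a)+\gamma\rho(s,a,s';v_\beta,\beta)-Q_\beta^-(s,a) } },
\end{align*}
where $v_\beta(s)=Q_\beta^-(s,\pit)$. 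Taking the inner conditional expectation over $s'\sim P(\cdot\mid s,a)$ and matching terms with the definition of $\Tcal_\beta^-$ gives $\EE[ r(s,a)+\gamma\rho(s,a,s';v_\beta,\beta)-Q_\beta^-(s,a)\mid s,a ] = \Tcal_\beta^- Q_\beta^-(s,a)-Q_\beta^-(s,a)$, which is identically zero by the hypothesis $Q_\beta^-=\Tcal_\beta^- Q_\beta^-$. Hence the debiasing term vanishes exactly by the tower rule (for \emph{any} $w$, not just $w\ge0$), leaving $\EE[\psi(s,a,s';Q_\beta^-,\beta,w)]=(1-\gamma)\EE_{d_1}[Q_\beta^-(s_1,\pit)]$. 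Since $V_{d_1}^-=(1-\gamma)\EE_{d_1}[Q^-(s_1,\pit)]$, it now suffices to prove the pointwise bound $Q_\beta^-\le Q^-$, as this transfers to $Q_\beta^-(\cdot,\pit)\le Q^-(\cdot,\pit)$ after averaging over $\pit$ and then over $d_1$.

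Second, I would establish that $\Tcal_\beta^-$ is dominated by $\Trob^-$ pointwise, i.e. $\Tcal_\beta^- f\le\Trob^- f$ for every $f$ and every $\beta$. This follows directly from the dual representation of the lower CVaR used in \pref{lem:identification-q}: since $\cvar_\tau^-(X)=\max_{b}\{b+\tau^{-1}\EE[(X-b)_-]\}$ is a maximum over the threshold, substituting the fixed $\beta(s,a)$ into the objective can only decrease it, so $\beta(s,a)+\tau^{-1}\EE[(v-\beta(s,a))_-\mid s,a]\le\cvar_\tau^-(v\mid s,a)$. Multiplying by the nonnegative weight $\gamma(1-\Lambda^{-1}(s,a))$ and adding the common $\Lambda^{-1}$-expectation term and reward yields $\Tcal_\beta^- f\le\Trob^- f$.

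Finally, I would run the standard monotone value-iteration domination argument. Applying the previous step at $f=Q_\beta^-$ gives $Q_\beta^-=\Tcal_\beta^- Q_\beta^-\le\Trob^- Q_\beta^-$. Because $\Trob^-$ is monotone (expectations, the lower CVaR, and the $\pit$-averaging all preserve pointwise order) and a $\gamma$-contraction in $\|\cdot\|_\infty$ with unique fixed point $Q^-$ \citep{iyengar2005robust}, iterating produces a nondecreasing chain $Q_\beta^-\le\Trob^- Q_\beta^-\le(\Trob^-)^2 Q_\beta^-\le\cdots$ converging to $Q^-$; hence $Q_\beta^-\le Q^-$. Combining this with the first paragraph completes the proof. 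The main obstacle is the reduction in the first paragraph: one must correctly identify the recentering term as being computed from the plugged-in $q$ and verify that the fixed-point hypothesis makes the debiasing term vanish \emph{exactly} (rather than merely be bounded), since it is this exact cancellation together with the one-sided domination $\Tcal_\beta^-\le\Trob^-$ that yields a valid one-sided bound without requiring $w$ to be consistent or even nonnegative.
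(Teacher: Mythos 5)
Your proposal is correct, and its first step coincides with the paper's: the recentering term is indeed the plug-in quantity $(1-\gamma)\EE_{d_1}[Q_\beta^-(s_1,\pit)]$, and the debiasing correction is exactly $\EE[w(s,a)(\Tcal_\beta^- Q_\beta^-(s,a)-Q_\beta^-(s,a))]=0$ under the fixed-point hypothesis, for arbitrary (even signed) $w$. Where you diverge is the remaining inequality $(1-\gamma)\EE_{d_1}[Q_\beta^-(s_1,\pit)]\leq V_{d_1}^-$. The paper gets this by invoking its dual characterization (\pref{lem:robust-q-dual-def}): it unrolls the robust Bellman recursion, replaces each lower $\cvar$ by its dual $\max_b$ form, and uses $s,a$-rectangularity to pull all the maxes out front, so that $Q^-$ is the largest $Q_\beta$ among all $\beta$ for which the unrolling is well-defined. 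You instead prove the pointwise bound $Q_\beta^-\leq Q^-$ by a comparison-of-fixed-points argument: the $\cvar$ dual suboptimality gives the operator domination $\Tcal_\beta^- f\leq \Trob^- f$, hence $Q_\beta^-\leq \Trob^- Q_\beta^-$, and monotonicity plus the $\gamma$-contraction of $\Trob^-$ (with unique fixed point $Q^-$) lets you iterate to the limit. Both routes rest on the same underlying fact — fixing the threshold $\beta$ in the $\cvar$ dual can only shrink the value — but your mechanics are more self-contained and arguably more rigorous: you avoid the paper's caveat about when the infinite unrolling under $\Tcal_\beta^-$ is well-defined, needing only the standard contraction property of $\Trob^-$ (though you do implicitly use that $Q_\beta^-$ is bounded so that sup-norm convergence applies, a point the paper also glosses over). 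What the paper's route buys in exchange is a reusable structural result, the full dual characterization of $Q^\pm$ as extremal fixed points over $\beta$, rather than just the one-sided bound needed here.
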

\begin{remark}
\cref{lem:correct-Q-validity,lem:Tbeta-validity} are dual to each other: in \pref{lem:correct-Q-validity}, the plug-in is consistent while the debiasing correction errs in the valid direction (\ie, $\geq 0$ for $+$ and $\leq 0$ for $-$). In \pref{lem:Tbeta-validity}, the plug-in is valid while the debiasing correction has expectation zero.
\end{remark}

\subsection{Proofs for validity}
\correctQValidity*
\begin{proof}
\begin{align*}
    \EE[\psi(s,a,s';Q^-,\beta,w)] &\leq (1-\gamma)\EE_{d_1}[ V_\beta^-(s_1) ] + \EE[w(s,a)\prns{Q^-(s,a)-\Tcal_{\cvar}^-Q^-(s,a)}]
    \\&= V_{d_1}^- + 0 = V_{d_1}^-,
\end{align*}
where the inequality comes from the fact that $\beta$ is sub-optimal for $\EE[\beta(s,a)+\tau^{-1}(V^-(s')-\beta(s,a))_-]$. The same proof applies for $Q^+$.
\end{proof}

We now prove \pref{lem:Tbeta-validity}.
First, we show that the $\Tcal_\beta$ perspective gives rise to a dual definition of $Q^\pm$ (dual to \pref{eq:robust-q-primal-def}).
\begin{lemma}\label{lem:robust-q-dual-def}
\begin{equation}
    \textstyle Q^+(s,a) = \argmin_{\beta: Q_\beta=\Tcal_\beta^+ Q_\beta}Q_\beta(s,a), 
    \quad
    Q^-(s,a) = \argmax_{\beta: Q_\beta=\Tcal_\beta^- Q_\beta}Q_\beta(s,a). \nonumber
\end{equation}
\end{lemma}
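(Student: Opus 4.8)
The plan is to treat the two cases symmetrically and focus on the worst-case ($-$) statement, since the best-case ($+$) version follows by replacing $\inf$/lower-CVaR with $\sup$/upper-CVaR and reversing every inequality. The key structural fact I would establish first is that $\Trob^-$ is exactly the pointwise \emph{maximum} of the family $\Tcal_\beta^-$ over all threshold functions $\beta:\Scal\times\Acal\to\RR$. This follows directly from the dual representation of lower CVaR in the preliminaries: since $\cvar_{\tau(s,a)}^-[v(s')\mid s,a]=\max_{b}\{b+\tau^{-1}(s,a)\EE[(v(s')-b)_-\mid s,a]\}$ and the coefficient $\gamma(1-\Lambda^{-1}(s,a))$ is nonnegative, the maximum can be pulled outside, giving $\Trob^- f=\max_\beta \Tcal_\beta^- f$ pointwise, with the maximizer $\beta^\star(s,a)$ equal to the lower $\tau(s,a)$-quantile of $v(s')=f(s',\pit)$ under $P(\cdot\mid s,a)$. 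In particular $\Tcal_\beta^- f\le \Trob^- f$ for every $\beta$ and $f$, with equality when $\beta$ is that quantile.

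Next I would record two standard properties of $\Trob^-$: it is monotone, being an infimum over $U\in\Ucal(P)$ of the monotone maps $f\mapsto r+\gamma\EE_U[f(s',\pit)\mid s,a]$; and it is a $\gamma$-contraction in $\|\cdot\|_\infty$ with unique fixed point $Q^-$, guaranteed by $s,a$-rectangularity (already invoked for the robust Bellman equation $Q^-=\Trob^- Q^-$). These are the only two facts about $\Trob^-$ that the argument needs.

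With these in hand the proof splits into feasibility and domination. For feasibility, take $\beta^\star(s,a)$ to be the lower $\tau(s,a)$-quantile of $V^-(s')$ under $P(\cdot\mid s,a)$; then by the first step $\Tcal_{\beta^\star}^- Q^-=\Trob^- Q^-=Q^-$, so $Q^-$ is itself a solution of the fixed-point constraint $Q_{\beta^\star}=\Tcal_{\beta^\star}^- Q_{\beta^\star}$, exhibiting a feasible $\beta$ whose associated $Q_\beta$ equals $Q^-$. For domination, let $\beta$ be any feasible threshold, i.e.\ $Q_\beta=\Tcal_\beta^- Q_\beta$. Combining with $\Tcal_\beta^-\le\Trob^-$ gives the sub-solution inequality $Q_\beta=\Tcal_\beta^- Q_\beta\le \Trob^- Q_\beta$. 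Iterating and using monotonicity yields the increasing chain $Q_\beta\le\Trob^- Q_\beta\le(\Trob^-)^2 Q_\beta\le\cdots\le(\Trob^-)^n Q_\beta$, and the contraction property sends $(\Trob^-)^n Q_\beta\to Q^-$, so $Q_\beta\le Q^-$ pointwise. Together these show the pointwise maximum of $Q_\beta$ over feasible $\beta$ is attained at $\beta^\star$ with value $Q^-$, which is the claim; the $+$ case is identical after flipping to super-solutions and the minimum.

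The main obstacle I anticipate is making the sub-solution/monotone-iteration step airtight rather than routine: one must note that the $\argmax$ ranges only over thresholds $\beta$ for which a fixed point $Q_\beta$ actually exists (which is all that is needed, since $\beta^\star$ supplies one such point), and that $\Trob^-$ genuinely contracts toward $Q^-$ even when started from the possibly-non-probability ``kernel'' induced by an arbitrary $\beta$. This is precisely why the argument routes through the genuine robust operator $\Trob^-$ rather than through $\Tcal_\beta^-$, whose naive $\|\cdot\|_\infty$ Lipschitz constant is $\gamma\prns{\Lambda^{-1}+(1-\Lambda^{-1})\tau^{-1}}=\gamma\Lambda$ and so need not lie below $1$.
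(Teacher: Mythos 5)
Your proof is correct, but it formalizes the result by a different mechanism than the paper. Both arguments hinge on the same starting identity: by the dual form of the lower CVaR and $(s,a)$-rectangularity, $\Trob^- f = \max_\beta \Tcal_\beta^- f$ pointwise, with the maximum attained at the lower $\tau(s,a)$-quantile of $f(s',\pit)$. The paper then proceeds by \emph{unrolling}: it expands $Q^-$ as an infinite nested recursion, replaces each CVaR by its dual $\max_b$ form, pulls all the maxes out front by rectangularity, and asserts that the fixed-point condition $Q_\beta = \Tcal_\beta^- Q_\beta$ is exactly what makes this infinite expansion well defined. That argument is intuitive but informal --- convergence of the unrolled expression is never actually verified. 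You instead run a standard monotone sub-solution argument: feasibility is witnessed by the quantile $\beta^\star$ (giving $Q_{\beta^\star} = Q^-$), and for any feasible $\beta$, $Q_\beta = \Tcal_\beta^- Q_\beta \le \Trob^- Q_\beta$, so iterating the monotone $\gamma$-contraction $\Trob^-$ yields $Q_\beta \le (\Trob^-)^n Q_\beta \to Q^-$. This sidesteps the well-definedness issue entirely, and your computation that $\Tcal_\beta^-$ itself has naive Lipschitz constant $\gamma\prns{\Lambda^{-1}+(1-\Lambda^{-1})\tau^{-1}}=\gamma\Lambda$ (so one must contract via $\Trob^-$, not $\Tcal_\beta^-$) is precisely the point the paper relegates to the remark that ``$\Tcal_\beta$ is not always a contraction.'' Two further notes: your derivation confirms the direction stated in the lemma ($Q^-$ is the \emph{maximum} of the $Q_\beta$ in the $-$ case), whereas the paper's proof text loosely concludes with ``minimum $Q_\beta$,'' an apparent slip; and your contraction step implicitly requires the fixed point $Q_\beta$ to be a bounded function (otherwise $(\Trob^-)^n Q_\beta \to Q^-$ in sup norm need not hold), a restriction worth stating explicitly, though the paper's own proof is no more careful on this point.
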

\begin{proof}
Unroll $Q^-(s,a)=r(s,a)+\gamma\inf_{U\in\Ucal(P)}\EE_U[ r(s',a')+\gamma\inf_{U\in\Ucal(P)}\EE_U[\dots] ]$, replacing each $\inf_{U\in\Ucal(P)}$ with the convex combination of $\EE$ and $\cvar$ from \pref{lem:identification-q}. Then, write each $\cvar$ using the dual form, \ie, $\max_\beta\{\beta(s,a)+\tau^{-1}(s,a)\EE[ (\dots - \beta(s,a))_+ ]\}$. By $s,a$-rectangularity, the scalar $\max_{\beta}$ separates per $s,a$, so we can pull all the maxes out front as a $\max$ over $\beta(s,a)$ functions. 
Note that not all $\beta(s,a)$ functions have a well-defined infinite sum in this manner, as $\Tcal_\beta$ is not always a contraction. 
The condition $Q_\beta=\Tcal_\beta^-Q_\beta$ exactly characterizes when this unrolling is well-defined.
Thus, $Q^-$ is exactly the minimum $Q_\beta$ whenever this procedure of unrolling with $\beta$ is well-defined. This concludes the proof.
\end{proof}

\TbetaValidity*
\begin{proof}
\begin{align*}
\\&\EE[\psi(s,a,s';Q_\beta^-,\beta,w)] = (1-\gamma)\EE_{d_1}[ V_\beta^-(s_1) ] + 0 \leq V_{d_1}^-.
\end{align*}
The first equality is because the correction term is $\Tcal_\beta^- Q_\beta^- - Q_\beta^-$, which is zero since $Q_\beta^-$ is a fixed point. The inequality is due to \pref{lem:robust-q-dual-def}.
\end{proof}

\section{Additional Details for Main Experiment}
\label{apx:experiment}

\subsection{Environment}

We consider a simple MDP with a one-dimensional state space $\Scal = [0,5]$, a binary action space $\Acal = \{0,1\}$, reward function
$$r(s,a) = \frac{26 - s^2 - \I{a = 1} }{26}\,,$$
which we note takes values in the range $[0,1]$, and with transitions given by
\begin{align*}
    P(\cdot \mid s,a=0) &= \text{UnifClip}[s-0.2, \ s + 1] \\
    P(\cdot \mid s,a=1) &= \text{UnifClip}[0.2s-0.02, \ s + 0.5] \,,
\end{align*}
where $\text{UnifClip}[a,b]$ denotes a uniform distribution between $\max(a,0)$ and $\min(b,5)$. In addition, the environment always starts in initial state $s_0=2$. Essentially, this is a simple control environment, where high rewards are obtained by maintaining state as close to zero as possible, the action $a=1$ is a control action that (in expectation) moves the state closer to zero, and which occurs a small reward cost, and the action $a=0$ is a passive action that allows the state to freely drift (with an overall drift away from zero). 

\subsection{Target Policy}

We focus on estimating the worst-case policy value $V^-_{d_1}$ for the simple threshold-based target policy $\pit$ which takes action $a=1$ when $s \geq 2$, and $a=0$ whenever $s<2$.

\subsection{Logging Policy and Data Sampling Procedure}

We sample data using an evaluation policy $\pi_b$ which is an $\epsilon$-smoothed threshold policy similar to $\pi_t$. Specifically, $\pi_b$ takes action $a=1$ when $s \geq 1.5$ with probability 0.95, and takes action $a=0$ when $s < 1.5$ with probability 0.95. We obtain a dataset $\{s_i,a_i,s'_i,r_i\}$ by first rolling out with $\pi_b$ for 1000 burn-in time steps, and then sampling the tuple $(s,a,s',r)$ every 10 time steps. For each replication of our experiment, we sample 10,000 tuples in total.

\subsection{Calculation of True Worst-Case Policy Values}
\label{apx:sepsis}

A major challenge in studying robust policy value estimation is that, even with ground truth knowledge of the MDP and/or access to a simulator, it may be intractable to estimate the robust policy values $V_{d_1}^\pm$.
Fortunately, the above environment has the desirable property that we can analytically compute the best/worst-case transition distributions allowed by our sensitivity model, since no matter what policy $\pi_t$ the agent is acting with, it always strictly prefers transitions to smaller states. In detail, suppose that for some state, action pair $(s,a)$ we have $P(\cdot \mid s,a) = \text{Unif}[x, y]$, for some $0 \leq x \leq y \leq 5]$. Then, letting $\alpha = 1/(1+\Lambda(s,a))$, it is easy to verify that the worst case transition kernel is given by
\begin{align*}
    U^-(\cdot \mid s,a) &= (1 - \Lambda^{-1}(s,a)) \text{Unif}[y - \alpha(y-x),  y] + \Lambda^{-1}(s,a) \text{Unif}[x,y] \,.
\end{align*}
That is, the worst case transition kernel is given by a mixture of two uniform distributions. 
Therefore, we can easily simulate rollouts with the best/worst case transition kernels, and accurately estimate the robust policy values. This allows us to validate our methodology in this synthetic environment. Specifically, for each $\Lambda(s,a)$ we experiment with, we can compute the corresponding ground truth $V_{d_1}^-$ up to arbitrary precision via Monte Carlo sampling, by rolling out trajectories with $\pi_t$ in the adversarial MDP according to the above worst-case transition kernel.

Note as well that if one wanted to estimate the best-case policy value, analogous reasoning would give us 
\begin{align*}
    U^+(\cdot \mid s,a) &= (1 - \Lambda^{-1}(s,a)) \text{Unif}[x, x + \alpha(y-x)] + \Lambda^{-1}(s,a) \text{Unif}[x,y] \,.
\end{align*}
However, in our experiments we only concern ourselves with worst-case policy value estimation.

\subsection{Nuisance Estimation}

We instantiate slight variations of \cref{alg:robust-fqe,alg:robust-minimax} using neural nets for the classes $\Qcal$, $\Bcal$, and $\Wcal$ used for fitting $Q^-$, $\beta^-$, and $w^-$ respectively, and linear sieves for the corresponding critic class $\Qcal$ that we perform maximization over for the minimax estimation of $w^-$. Specifically, we grow the linear sieve for the critic class in a data-driven way, as follows: at each step $k$ of the respective algorithm, we compute the best response $q_k \in \Qcal$ to the previous iterate solution $w_k \in \Wcal$ by optimizing over a neural net class, and then we append this best-response function to the set of functions in our linear sieve for the corresponding critic class.
Full exact nuisance estimation details necessary for reproducibility will be available in our code release.

\subsection{Estimators}

We estimate the worst-case policy value using three different estimators:
\begin{itemize}
    \item \textbf{Q}: Direct estimator given by:
    $$\wh V^-_{d_1} = \wh Q^-(s_1,\pi_t(s_1))\,,$$
    where $s_1$ is the deterministic initial state.
    \item \textbf{W}: Importance sampling-style estimator using $\hat w^-$, which is given by:
    $$\wh V^-_{d_1} = \frac{1}{n} \sum_{i=1}^n \wh w^-(s_i,a_i) \wh \xi_i r_i\,,$$
    where
    $$\wh \xi_i = \Lambda^{-1} + (1-\Lambda^{-1}) (1+\Lambda) \I{\wh V^-(s_i') \leq \wh\beta^-(s_i,a_i)}\,.$$
    \item \textbf{Orth}: Our orthogonal estimator using EIF, given by
    $$\wh V^-_{d_1} = \frac{1}{n} \sum_{i=1}^n \psi(s_i,a_i,s'_i; \wh Q^-, \wh\beta^-, \wh w^-)\,.$$
\end{itemize}

Note as well that we used a simpler data splitting procedure rather than the cross-fitting procedure described in \cref{alg:ortho-ope}. Specifically, we used the first 10,000 tuples for estimating nuisances, and the second 10,000 tuples for the final estimators. This was done for the sake of computational ease in running experiments with many replications, and was performed in the same way for all methods.

In addition, for extra robustness, in each experiment replication we ran the nuisance estimation pipeline 5 times (on the same fixed sampled dataset), and took the 80th percentile policy value estimates, since the estimators tend to under-estimate the true policy value by design, with greater under-estimation when the nuisance estimates are less well optimized.

\section{Empirical Investigation on Medical Application}\label{sec:medical-experiments}

Here, we describe an additional empirical investigation of our methodology on medical data. Specifically, we consider the problem of sepsis management using RL. For all parts of the investigation described below, fully complete details can be obtained from our code release.

\subsection{Motivation of Investigation}

Training RL models in simulated environments derived from real-world data is an exciting avenue for leveraging AI towards critical medical use cases. However, doing this obviously has the downside that, unless one undergoes the very risky process of training an RL agent online via real medical interventions, one has to resort to training within simulators, and then has to account for the inevitable ``sim-to-real'' gap. Therefore, our robust OPE methodology provides an interesting approach for estimating worst-case performance of RL models under potential changes in dynamics when moving to real application.

\subsection{RL Environment}

Our RL environment is based on the OpenAI Gym sepsis simulator environment of \citet{kiani2019sepsis}. This RL environment allows for simulation of dynamic sepsis management, which was created by training a blackbox ML model to mimic observed transition dynamics from the real-world electronic health record-based MIMIC-III dataset \citep{johnson2016mimic}. This existing sepsis simulator is an episodic environment that continues until the agent either recovers or dies. It has a 46-dimensional state space containing various vital measurements, a discrete action space containing 24 possible actions (where an action is essentially the Cartesian product of some independent base actions). The reward function in this original simulator gives zero reward whenever an episode has not terminated, a +15 reward at termination when if the patient survives, or a -15 reward at termination if the patient dies. Please see \citet{kiani2019sepsis} and the code release linked therein for additional details.

We built an RL environment for our investigation by creating a simple wrapper around this existing sepsis simulator, in order to make it fit our setup. In particular, we made the following key changes:
\begin{enumerate}
    \item We made the environment infinite-horizon, by automatically looping to a new random starting state for a fresh patient whenever the episode in the base simulator terminates
    \item We normalized the reward function so that it lies in range $[0,1]$, where:
    \begin{enumerate}
        \item $r(s,a) = 0$ if patient dies
        \item $r(s,a) = 1$ if patient recovers and is discharged
        \item $r(s,a) = 0.5$ if treatment has not terminated for current patient
    \end{enumerate}
\end{enumerate}

In addition, for this environment, we perform all experiments with $\gamma=0.95$.

\subsection{Policies for Investigation}

We constructed RL policies for our empirical investigation by training some deep RL models using the sepsis simulator environment.

In the case of the behavioral policy $\pi_b$ used to generate the observational offline data, we trained this policy by running Proximal Policy Optimization (PPO: \citet{schulman2017proximal}) over a relatively large (16,000) number timesteps, in order to emulate a reasonably good ``current best practices'' model for creating observational data.

In the case of the target policy $\pit$ to be evaluated, we trained this policy using Deep Q Learning (DQL: \citet{mnih2013playing}), over a relatively small (1,600) number of timesteps, in order to emulate a potentially risky new candidate model.

\subsection{Creating an Offline Dataset}

Using our behavioral policy $\pi_b$ which we created as above, we generated a fixed offline dataset consisting of 20,000 observed tuples of state, action, reward, and next state. Unlike with our main empirical investigation in the main paper, we did not perform any ``thinning'' on these sampled tuples to make them more independent, so that the observed transitions are sequentially correlated as with real-world medical data.

\subsection{Nuisance Estimation}

We perform nuisance estimation almost identically as in our main empirical investigation, with the only change being a slight change to our neural network architectures to better handle the large discrete action space. Specifically, instead of training neural networks that take state as input and produce $|\Acal|$ outputs (one per action), we train neural networks that take both state and action as inputs, using a learnt low-dimensional encoding of the actions, and produce a single output. Please see our code release for details.

\subsection{Estimators}

We consider the same three estimators (\textbf{Q}, \textbf{W}, and \textbf{Orth}) as in our main empirical investigation. As in that investigation, we use these to estimate the worst-case policy value for the given $\Lambda(s,a)$. In addition, as in the main experiments, we consider these estimators for various fixed $\Lambda(s,a)$ that do not depend on $s$ or $a$. In this case, we consider $\Lambda \in \{1,2,4\}$, as these reflect a reasonable range of possible confounding strength for real application.

\subsection{Results}

\begin{table}[t]
    \centering
    \begin{tabular}{cccc}
         \multirow{2}{*}{$\Lambda$} & \multicolumn{3}{c}{Median Policy Value Estimate} \\
         & \textbf{Q} & \textbf{W} & \textbf{Orth} \\
         \hline
            1 & $.546 \pm .003$ & $.386 \pm .087$ & $.532 \pm .008$ \\
            2 & $.454 \pm .040$ & $.534 \pm .141$ & $.515 \pm .036$  \\
            4 & $.381 \pm .077$ & $.287 \pm .106$ & $.338 \pm .086$  \\
         \hline
    \end{tabular}
    \caption{Median policy value estimate for sepsis management investigation, for each estimator and value of $\Lambda$ over 5 runs of each estimator from random initial seeds. The $\pm$ values are given by half the difference between 80th and 20th percentiles.}
    \label{tab:sepsis-results}
\end{table}

Below, in \cref{tab:sepsis-results} we show the estimated policy value for all three estimators for each fixed $\Lambda \in \{1,2,4\}$. Here, we present the median policy value estimate over 5 runs of our estimators from random starting seeds after removing outliers.\footnote{Specifically, we exclude policy value estimates that lie outside the possible range of $[0,1]$, which occasionally occur due to bad optimization from the starting seed.} In addition, we present a $\pm$ spread given by half the difference between the 80th and 20th percentiles.

Although for this investigation we cannot analytically compute the ground truth ``true'' adversarial policy values to evaluate against when $\Lambda>1$, we can still analyze the trends of these estimators and compare them to those observed in our main synthetic experiment, and we can also compare their accuracy when $\Lambda=1$.

First, in the case of $\Lambda=1$, we computed the true policy value of $\pit$ to be within the range $0.532 \pm 0.002$ with 95\% confidence. 
This is almost exactly equal to the median \textbf{Orth} estimator, but far outside the spread of outputs of the \textbf{Q} estimator. That is, although the \textbf{Q} estimator has somewhat lower variance in outputs over multiple runs for $\Lambda=1$ compared with \textbf{Orth}, it appears to be far more biased.

Next, looking more broadly across all values of $\Lambda$, as in our main experiment, the \textbf{Q} and \textbf{Orth} estimators generally result in similar estimates to each other, and the \textbf{W} estimators are very variable. This may reflect the relative difficulty of estimating the $w^-$ nuisance function compared with $Q^-$ and $\beta^-$; although both \textbf{Orth} and \textbf{W} are affected by this difficulty, the \textbf{Orth} estimator has a theoretical robustness to the errors of these nuisance functions that the \textbf{W} estimator does not, as outlined in our theory.

We also observe that when $\Lambda=1$ the \textbf{Q} estimator is significantly more stable than \textbf{Orth}, but when $\Lambda>1$ the stability of \textbf{Orth} is either comparable to or superior to \textbf{Q}.
In order to understand this, we first note that unlike in our main experiments, here the repetitions are re-runs of the estimators with the same offline sepsis dataset, so these $\pm$ spreads reflect potential computational errors rather than statistical errors.
Given this, this pattern of errors could be explained by the fact that when $\Lambda=1$ the \textbf{Q} estimation is extremely simple, reducing to standard FQI, whereas when $\Lambda>1$ it requires a more complex robust FQI estimation with simultaneous estimation of $\beta^-$. That is, the difference in computational difficulty of estimating \textbf{Orth} versus \textbf{Q} may be smaller for $\Lambda>1$.

Overall, although it is hard to definitively compare the accuracy of these estimators for $\Lambda>1$ given a fundamental lack of ground truth, given both a similar pattern of results as in our synthetic experiments, as well as the far greater accuracy of \textbf{Orth} when $\Lambda=1$, it seems reasonably to believe based on these results that our proposed \textbf{Orth} estimator may be more reliable than the existing robust FQI approach of the \textbf{Q} estimator. 

Finally, we consider the  implication of our results for the problem of learning sepsis management policies from simulators. Our \textbf{Orth} estimator suggests that there is relatively little sensitivity of this environment to deviations allowed by $\Lambda=2$, but very significant deviation allowed by $\Lambda=4$. Indeed, given the reward structure described above, the worst-case results under $\Lambda=4$ imply an extremely high mortality rate. Whether worst-case deviations of this magnitude are reasonable or not is unclear, and this is something that requires further investigation for future work on RL for sepsis management.


\newpage
\section*{NeurIPS Paper Checklist}

\begin{enumerate}

\item {\bf Claims}
    \item[] Question: Do the main claims made in the abstract and introduction accurately reflect the paper's contributions and scope?
    \item[] Answer: \answerYes{}.
    \item[] Justification: Yes, we provide complete proofs for our theorems and describe detailed empirical validation for our proposed algorithms.

\item {\bf Limitations}
    \item[] Question: Does the paper discuss the limitations of the work performed by the authors?
    \item[] Answer: \answerYes{}. 
    \item[] Justification: Yes, we discussed where our assumptions may fail and settings not captured by the current paper, which we believe are directions for future research.

\item {\bf Theory Assumptions and Proofs}
    \item[] Question: For each theoretical result, does the paper provide the full set of assumptions and a complete (and correct) proof?
    \item[] Answer: \answerYes{} 
    \item[] Justification: Yes, we provide full assumptions in the main paper and the complete proofs are written in the Appendix.

    \item {\bf Experimental Result Reproducibility}
    \item[] Question: Does the paper fully disclose all the information needed to reproduce the main experimental results of the paper to the extent that it affects the main claims and/or conclusions of the paper (regardless of whether the code and data are provided or not)?
    \item[] Answer: \answerYes{}. 
    \item[] Justification: Yes, our experimental section includes all details needed to reproduce the main experimental results. 

\item {\bf Open access to data and code}
    \item[] Question: Does the paper provide open access to the data and code, with sufficient instructions to faithfully reproduce the main experimental results, as described in supplemental material?
    \item[] Answer: \answerYes{}. 
    \item[] Justification: Yes, our code is open-sourced at \url{https://github.com/CausalML/adversarial-ope/}.
    
\item {\bf Experimental Setting/Details}
    \item[] Question: Does the paper specify all the training and test details (e.g., data splits, hyperparameters, how they were chosen, type of optimizer, etc.) necessary to understand the results?
    \item[] Answer: \answerYes{} 
    \item[] Justification: Yes, please see our experimental section and appendices for all training and evaluation details. 
    \item[] Guidelines:

\item {\bf Experiment Statistical Significance}
    \item[] Question: Does the paper report error bars suitably and correctly defined or other appropriate information about the statistical significance of the experiments?
    \item[] Answer: \answerYes{}. 
    \item[] Justification: Yes, our experiments are replicated over multiple seeds and we report the confidence intervals.

\item {\bf Experiments Compute Resources}
    \item[] Question: For each experiment, does the paper provide sufficient information on the computer resources (type of compute workers, memory, time of execution) needed to reproduce the experiments?
    \item[] Answer: \answerYes{}. 
    \item[] Justification: Yes, this paper is mostly focused on theory and our experiment is a proof of concept and can be run on a standard GPU.
    
\item {\bf Code Of Ethics}
    \item[] Question: Does the research conducted in the paper conform, in every respect, with the NeurIPS Code of Ethics \url{https://neurips.cc/public/EthicsGuidelines}?
    \item[] Answer: \answerYes{}. 
    \item[] Justification: Yes, we have reviewed the code of ethics and believe our research conforms to it.

\item {\bf Broader Impacts}
    \item[] Question: Does the paper discuss both potential positive societal impacts and negative societal impacts of the work performed?
    \item[] Answer: \answerNA{}.
    \item[] Justification: This paper is about foundational research not tied to particular applications so we do not feel the need to highlight any societal impacts.
    
\item {\bf Safeguards}
    \item[] Question: Does the paper describe safeguards that have been put in place for responsible release of data or models that have a high risk for misuse (e.g., pretrained language models, image generators, or scraped datasets)?
    \item[] Answer: \answerNA{}.
    \item[] Justification: This paper is about foundational research not tied to particular applications so we do not feel the need to highlight any risks for misuse here.
    
\item {\bf Licenses for existing assets}
    \item[] Question: Are the creators or original owners of assets (e.g., code, data, models), used in the paper, properly credited and are the license and terms of use explicitly mentioned and properly respected?
    \item[] Answer: \answerNA{}.
    \item[] Justification: The paper does not use any existing assets.

\item {\bf New Assets}
    \item[] Question: Are new assets introduced in the paper well documented and is the documentation provided alongside the assets?
    \item[] Answer: \answerNA{}.
    \item[] Justification: The paper does not release new assets.

\item {\bf Crowdsourcing and Research with Human Subjects}
    \item[] Question: For crowdsourcing experiments and research with human subjects, does the paper include the full text of instructions given to participants and screenshots, if applicable, as well as details about compensation (if any)? 
    \item[] Answer: \answerNA{}.
    \item[] Justification: The paper does not involve crowdsourcing nor research with human subjects.

\item {\bf Institutional Review Board (IRB) Approvals or Equivalent for Research with Human Subjects}
    \item[] Question: Does the paper describe potential risks incurred by study participants, whether such risks were disclosed to the subjects, and whether Institutional Review Board (IRB) approvals (or an equivalent approval/review based on the requirements of your country or institution) were obtained?
    \item[] Answer: \answerNA{}.
    \item[] Justification: The paper does not involve crowdsourcing nor research with human subjects.

\end{enumerate}

\end{document}